\documentclass{article}
\PassOptionsToPackage{numbers, compress}{natbib}
\usepackage[preprint]{neurips_2026}
\usepackage[utf8]{inputenc} 
\usepackage[T1]{fontenc}    
\usepackage{hyperref}       
\usepackage{url}            
\usepackage{booktabs}       
\usepackage{amsfonts}       
\usepackage{nicefrac}       
\usepackage{microtype}      
\usepackage{xcolor}         

\usepackage{algorithm}
\usepackage{algpseudocode}
\usepackage{amsmath, amssymb, amsthm, bm}

\usepackage{graphicx}
\usepackage{pdflscape}   
\usepackage{subcaption}
\usepackage{enumitem}
\usepackage{caption}
\usepackage{titletoc}

\newtheorem{thm}{Theorem}
\newtheorem{lem}{Lemma}
\newtheorem{prop}{Proposition}

\newcommand{\KL}{D_{\mathrm{KL}}}

\newcommand{\TV}{\mathrm{TV}}

\usepackage{booktabs}
\usepackage{multirow}
\usepackage{array}
\usepackage{multirow}
\usepackage{colortbl}
\usepackage{xcolor}
\usepackage{makecell}
\usepackage{adjustbox}
\usepackage{hyperref}
\usepackage[most]{tcolorbox}

\usepackage{booktabs}     
\usepackage{multirow}     
\usepackage[table]{xcolor}
\usepackage{colortbl}     
\usepackage{graphicx}     
\usepackage{scalerel}     
\usepackage{amsmath}      

\usepackage{cleveref}

\definecolor{GoogleRed}{RGB}{234, 67, 53}
\definecolor{GoogleBlue}{RGB}{66, 133, 244}
\definecolor{GoogleGreen}{RGB}{52, 168, 83}
\definecolor{ourlightblue}{RGB}{232, 240, 254}

\hypersetup{
  colorlinks = true,
  urlcolor   = GoogleRed,
  linkcolor  = GoogleGreen,
  citecolor  = GoogleBlue,
}

\newcommand\blfootnote[1]{%
  \begingroup
  \renewcommand\thefootnote{}\footnote{#1}%
  \addtocounter{footnote}{-1}%
  \endgroup
}

\newtcolorbox{remarkbox}[1]{
    enhanced,
    colback=ourlightblue,
    colframe=GoogleBlue,
    colbacktitle=GoogleBlue,
    coltitle=white,
    fonttitle=\bfseries,
    title={#1},
    boxrule=0.8pt,
    arc=3pt,
    left=6pt,
    right=6pt,
    top=6pt,
    bottom=6pt,
    toptitle=2pt,
    bottomtitle=2pt,
    boxed title style={
        sharp corners,
        boxrule=0pt
    }
}

\title{Trust Region Q Adjoint Matching}

\author{
  Yonghoon Dong\textsuperscript{1} \\
  \texttt{yonghoon.dong@kaist.ac.kr}
  \And
  Kyungmin Lee\textsuperscript{1} \\
  \texttt{kyungmnlee@kaist.ac.kr}
  \And
  Changyeon Kim\textsuperscript{1} \\
  \texttt{changyeon.kim@kaist.ac.kr}
  \And
  Jaehyuk Kim\textsuperscript{2} \\
  \texttt{waewae1@snu.ac.kr}
  \And
  Jinwoo Shin\textsuperscript{1,3} \\
  \texttt{jinwoos@kaist.ac.kr}
}

\begin{document}
\maketitle
\blfootnote{
  \textsuperscript{1}KAIST AI \quad
  \textsuperscript{2}Seoul National University \quad
  \textsuperscript{3}RLWRLD
}
\blfootnote{Code: \url{https://github.com/yonghdong/trqam} \quad Blog: \url{https://yonghdong.github.io/blog/trqam/}}
\vspace{-0.4in}
\begin{abstract}
Off-policy reinforcement learning of pretrained flow policies remains challenging due to the instability of optimization arising from the multi-step sampling process. Recently, Q-learning with Adjoint Matching (QAM) addressed this issue by reformulating into a memoryless stochastic optimal control (SOC) problem with a learned critic. However, QAM inherits a fundamental fragility of critic-guided improvement: small critic errors are amplified when critics are ill-conditioned, often leading to model collapse. This paper introduces Trust Region Q-Adjoint Matching (TRQAM), a stable off-policy fine-tuning algorithm that adaptively controls the path-space KL with pretrained flow policies through projected dual descent. Specifically, we optimize the trust-region parameter $\lambda$ in SOC dynamics, and theoretically show that the path-space KL can be represented by a closed-form function of $\lambda$. As a result, our method can precisely control the exact deviation from pretrained flow policies, achieving stable off-policy RL. Through experiments on 50 OGBench tasks, TRQAM consistently outperforms prior arts in both offline RL and offline-to-online RL. In particular, TRQAM achieves an overall success rate of 68\% in offline RL, substantially improves the strongest baseline at 46\%.
\vspace{-0.1in}
\end{abstract}
\section{Introduction}
\label{sec:intro}
Recently, flow matching policies \citep{albergo2023stochastic,chi2024diffusionpolicy,lipman2022flow, liu2022flowstraightfastlearning} have emerged as a promising approach to model rich and diverse action 
distributions, enabling high-capacity behavior generation beyond conventional uni-modal 
Gaussian policies. A pretrained flow policy captures useful skills, behavioral 
constraints, and broad coverage over plausible actions, making it an 
attractive prior
for downstream off-policy RL 
fine-tuning \citep{bjorck2025gr00t, dong2025expo, hung2025nora15, intelligence2025pi06vlalearnsexperience, black2025pi05, jiang2025galaxea, reuss2025flower, shukor2025smolvla, Wagenmaker2025DSRL, zhai2025walloss, zheng2026xvla}.

However, as the flow policy is defined implicitly through a multi-step denoising process, gradient-based policy improvement requires differentiating through the multi-step sampling chain, making direct backpropagation expensive and unstable~\citep{fql_park2025, sacflow}. Existing approaches sidestep this through residual-style methods that keep the pretrained policy frozen and learn an additive residual to its actions~\citep{dong2025expo, xiao2025selfimproving}, or noise-space RL methods that freeze the pretrained flow policy and run actor-critic over its input noise~\citep{Wagenmaker2025DSRL}. Yet, both have fundamental limits: residual methods correct only at the action level, ignoring the multi-step generative dynamics, while noise-space methods are bounded by the expressivity of the frozen flow policy.

\begin{figure}[t]
    \centering
    \includegraphics[width=0.98\linewidth]{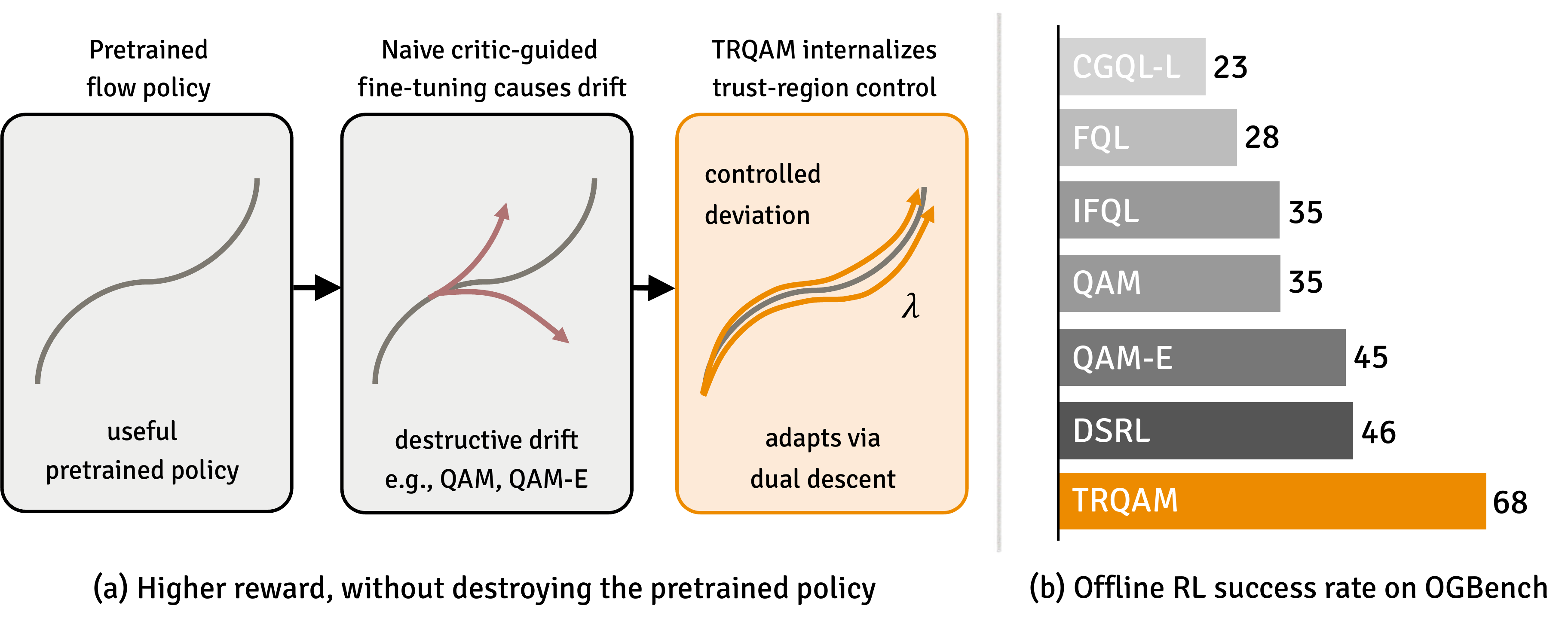}
    \vspace{-0.1in}
    \caption{\textbf{TRQAM builds adaptive trust-region control into the SOC dynamics.} \textbf{(a):} Methods whose optimum admits an exponentially-tilted form (e.g., QAM, QAM-E~\citep{qam}) suffer from destructive drift, where small critic errors can be exponentially amplified into large deviations from the pretrained prior (Lemma~\ref{lem:exp_amp}). TRQAM regulates this deviation through a trust-region parameter $\lambda$ internalized in the SOC sampling dynamics. \textbf{(b):} Offline RL success rate across 50 OGBench~\citep{ogbench_park2025} tasks. TRQAM outperforms adjoint-matching baselines (QAM, QAM-E) and other flow-policy fine-tuning paradigms (FQL~\citep{fql_park2025}, IFQL~\citep{kostrikov2021offline}, DSRL~\citep{Wagenmaker2025DSRL}, CGQL-L~\citep{dhariwal2021diffusion}) which lack such convergence guarantees.}
    \label{fig:main_figure}
    \vspace{-0.18in}
\end{figure}

Recently, Q-learning with Adjoint Matching (QAM)~\citep{qam} addresses 
this by reformulating fine-tuning as a memoryless stochastic optimal control 
(SOC) problem: QAM uses a learned critic to control the sampling process 
toward higher-value actions via adjoint matching. While this resolves the 
multi-step sampling instability, critic-induced instability still remains. 
In off-policy RL, the learned critic is inevitably imperfect, thus the approximation errors compound through TD bootstrapping, where each value update depends on the critic's noisy estimate at the next state, producing systematic overestimation~\citep{fujimoto2018addressingfunctionapproximationerror}. 
Critic-guided policy updates can then amplify these errors into large 
deviations from the pretrained prior (see Lemma~\ref{lem:exp_amp}). 
QAM~\citep{qam} acknowledges this and apply gradient clipping as a partial remedy, while calling for a \emph{more principled method} beyond this heuristics. We empirically observe that gradient clipping does not prevent this fragility: the adjoint loss can become unstable and collapse task performance on Robomimic~\citep{robomimic2021} (Figures~\ref{fig:robomimic_divergence}, 
\ref{fig:full_robomimic_lift_curves_grid_4by2}, and
\ref{fig:full_robomimic_can_curves_grid_4by2}). Thus, the central challenge becomes how to improve downstream performance without destructive drift from the pretrained prior, similar to the trust-region principle in on-policy RL~\citep{schulman2015trust, schulman2017proximal}.

We propose \emph{Trust Region Q-Adjoint Matching} (TRQAM), an algorithm for 
stable off-policy fine-tuning of pretrained flow policies. TRQAM introduces a 
trust-region parameter $\lambda$ directly into the stochastic optimal control 
(SOC) sampling dynamics and adapts it via projected dual descent to enforce a 
prescribed KL bound between the fine-tuned and pretrained policies. Our central theoretical result, proved using Girsanov's theorem, shows that scaling the diffusion coefficient by $\sqrt{\lambda}$ makes the path-space KL between the controlled and pretrained sampling processes an explicit closed-form function of $\lambda$. As a result, the dual update enforces the target KL bound directly through the sampling dynamics, rather than softly imposing the constraint through a 
conventional loss-level KL regularizer.

This distinction matters in practice. Conventional loss-level KL regularization only 
competes with critic guidance at the loss level, allowing strong critic signals 
to push the realized KL far beyond the target bound and leave the policy 
vulnerable to collapse. In contrast, TRQAM tightly tracks the target bound throughout both offline and online training 
(Figures~\ref{fig:internalization}, 
\ref{fig:full_robomimic_lift_kl_compare_grid}, 
\ref{fig:full_robomimic_can_kl_compare_grid}, and \ref{fig:full_robomimic_square_kl_compare_grid}). The prescribed target KL bound therefore provides a practical control over how much the fine-tuned policy can deviate from the pretrained policy, and its 
best setting varies systematically with task structure 
(Section~\ref{sec:exp_sensitivity}). Through experiments on 50 OGBench tasks, 
TRQAM consistently outperforms prior methods in both offline RL and 
offline-to-online RL. In particular, TRQAM achieves an overall offline RL 
success rate of 68\%, substantially outperforming the strongest baseline at 
46\% (Table~\ref{tab:offline_table_abb}).

\vspace{0.01in}
{\bf Contributions. } We highlight the key contributions of our paper below:
\vspace{-0.05in}
\begin{itemize}[leftmargin=*, itemsep=0mm]
    \item We identify the exponential amplification of critic errors as a fundamental fragility of fixed-temperature adjoint matching, formalized by Lemma~\ref{lem:exp_amp} and confirmed empirically on \texttt{Robomimic}. 
    \item We prove that scaling the diffusion coefficient by $\sqrt\lambda$ makes the path-space KL an exact function of $\lambda$ via Girsanov (Theorem \ref{thm:path_kl}), turning $\lambda$ into a principled trust-region parameter.
    \item We propose \textit{Trust Region Q-Adjoint Matching} (TRQAM), 
    which internalizes $\lambda$ inside the SOC sampling dynamics and 
    adapts it via projected dual descent to enforce a target KL bound 
    at the sampling level rather than as a loss-level penalty.
    \item On 50 OGBench tasks, TRQAM consistently outperforms prior arts on offline RL and offline-to-online RL. Especially, TRQAM achieves an overall success rate of 68\% in offline RL, demonstrating superior performance compared to 46\% of strongest baseline.
\end{itemize}
\section{Background}
\label{sec:background}
\vspace{0.01in}
{\bf Reinforcement Learning.}~
We consider a Markov Decision Process (MDP) \citep{sutton2018reinforcement} $(\mathcal{S}, \mathcal{A}, P, r, \gamma, \rho_0)$, where $\mathcal{S}$ 
is the state space, $\mathcal{A}$ is the action space, 
$r(s, a): \mathcal{S} \times \mathcal{A} \to \mathbb{R}$ is a 
reward function, $P(s'\,|\,s, a): \mathcal{S} \times \mathcal{A} 
\to \Delta(\mathcal{S})$ is a transition function, $\rho_0$ is the 
initial state distribution, and $\gamma \in [0, 1)$ is a discount 
factor. The objective is to learn a policy $\pi: \mathcal{S} \to 
\Delta(\mathcal{A})$ that maximizes the expected discounted return 
$\mathbb{E}_{\rho_0, \pi, P}[\sum_{t=0}^{\infty} \gamma^t 
r(s_t, a_t)]$, where $t$ indexes environment timesteps. To this end, 
we learn an action-value function $Q^\pi(s, a) = \mathbb{E}_\pi
[\sum_{k=0}^{\infty} \gamma^k r(s_{t+k}, a_{t+k})| s_t = s, 
a_t = a]$ used as a critic that 
guides policy improvement.
We majorly focus on \emph{off-policy fine-tuning} of a 
pretrained policy $\pi_{\mathrm{base}}(\cdot \mid s)$, where 
transitions $(s, a, r, s')$ used for updates are drawn from a replay 
buffer $\mathcal{D}$ collected by behavior policies different from 
the current $\pi_\theta$.

\vspace{.01in}
{\bf Flow matching and flow policy.}~
Flow Matching \citep{albergo2023stochastic, lipman2022flow} learns a velocity field $v_\theta(x, \tau)$ that transports samples from a simple source distribution $p_0 = \mathcal{N}(0, I_d)$ to a target distribution $p_1$ through an ODE:
\begin{equation*}
    dX_\tau = v_\theta(X_\tau, \tau)\, d\tau, \qquad X_0 \sim p_0,\quad X_1 \sim p_1\text{.}
\end{equation*}
A standard choice is the {optimal transport (OT) path}, with straight-line interpolation $X_\tau = (1-\tau) X_0 + \tau X_1$ between endpoint pairs $(X_0, X_1) \sim p_0 \otimes p_1$ and target velocity 
$\mathbb{E}[X_1 - X_0 | X_\tau = x]$.
Then, the velocity field is trained by regressing $v_\theta$ against target velocity $X_1 - X_0$, giving
\begin{equation*}
    \mathcal{L}_{\mathrm{FM}}(\theta) = \mathbb{E}_{\tau,\, X_0,\, X_1}\!\Big[\, \big\| v_\theta(X_\tau, \tau) - (X_1 - X_0) \big\|^2 \,\Big]\text{.}
\end{equation*}
Note that a {flow policy} applies flow matching to model policy: given a state $s$, a flow policy $\pi_\theta(\cdot \mid s)$ samples actions by integrating $dX_\tau = v_\theta(X_\tau, \tau; s)\, d\tau$ from $X_0 \sim \mathcal{N}(0, I_d)$. 

\vspace{.01in}
{\bf Stochastic optimal control and Q adjoint matching.}~
Stochastic optimal control (SOC) is a framework that fine-tunes a 
pretrained flow policy by adding a drift perturbation $u$ to its 
sampling dynamics, steering trajectories toward higher-reward regions 
without backpropagating through the multi-step sampling chain, a problem similar to backpropagation through time in RNNs~\citep{fql_park2025, sacflow}. \citet{domingoenrich2025adjointmatchingfinetuningflow} showed that this SOC objective can be solved efficiently via a lean adjoint ODE, leading to the adjoint matching algorithm.

To formulate this, we first replace the deterministic flow ODE with an
equivalent SDE whose distribution at every timestep $\tau \in [0,1]$
coincides with that of the ODE \citep{ma2024sit}. While each
trajectory might differ, the random variable $X_\tau$ at each timestep 
has the same distribution under both processes, including the terminal
$X_1 \sim \pi_{\mathrm{base}}$. Following \citet{domingoenrich2025adjointmatchingfinetuningflow}, we adopt the {memoryless} OT schedule,
and apply the SOC parameterization of
\citet{domingoenrich2023stochastic} that explicitly separates a scalar
$\sqrt{\lambda}$ from the diffusion coefficient, giving
$\sqrt{\lambda}\,\sigma(\tau) = \sqrt{2(1-\tau)/\tau}$.
This gives the base and controlled SDEs
\begin{align}
    &d X_\tau^{\mathrm{base}} = b(X_\tau^{\mathrm{base}}, \tau)\, d\tau + \sqrt{\lambda}\, \sigma(\tau)\, d B_\tau\text{,} \label{eq:prelim_base_sde} \\
    &d X_\tau^u = \bigl(\, b(X_\tau^u, \tau) + \sigma(\tau)\, u(X_\tau^u, \tau)\, \bigr)\, d\tau + \sqrt{\lambda}\, \sigma(\tau)\, d B_\tau\text{,} \label{eq:prelim_controlled_sde}
\end{align}
where the base SDE generates the pretrained policy and the control $u$ steers the process away from it.

Given a terminal cost $g : \mathbb{R}^d \to \mathbb{R}$, SOC seeks the control $u$ by optimizing
\begin{align}
    \min_{u}\,\,\mathbb{E}_{\mathbf X \sim \mathbb{P}^u}\bigg[\,
        \frac{1}{2} \int_0^1 \big\|u(X_\tau^u, \tau)\big\|^2\, d\tau
        - g(X_1^u)
    \,\bigg]
    \quad \text{subject to Equation \eqref{eq:prelim_controlled_sde}.}
    \label{eq:prelim_soc}
\end{align}
From a reinforcement learning perspective, taking $X_1^u$ as the action 
at state $s$ and critic $Q^\pi(s, \cdot)$ as the terminal cost $g$, the 
control $u$ steers the pretrained policy toward critic-preferred 
actions, with the quadratic term penalizing deviation from $\pi_{\mathrm{base}}$~\citep{domingoenrich2025adjointmatchingfinetuningflow}. This instantiation is \emph{$Q$ Adjoint Matching} (QAM)~\citep{qam}, which solves~\eqref{eq:prelim_soc} via adjoint matching \citep{domingoenrich2025adjointmatchingfinetuningflow}: the adjoint $\widetilde{a}_\tau$ is computed by integrating
\begin{align}
    \widetilde{a}_{\tau-h} = \widetilde{a}_\tau + h\, \widetilde{a}_\tau^\top \nabla_x\!\left( 2 v^{\mathrm{base}}(X_\tau, \tau) - \frac{1}{\tau} X_\tau \right),
    \qquad
    \widetilde{a}_1 = -\nabla_{x_1} Q^\pi(s, X_1)\text{,}
    \label{eq:prelim_adjoint}
\end{align}
backwards in time, and the fine-tuned velocity field $v^{\mathrm{ft}}_\theta$, whose 
deviation from $v^{\mathrm{base}}$ parameterizes the control $u$, is updated 
by minimizing the adjoint-matching loss
\begin{align}
    \mathcal{L}_{\mathrm{Adj\text{-}Match}}(\theta)
    =
    \sum_{\tau \in \{0, \ldots, 1-h\}}
    \bigg\|
        \frac{2}{\sigma(\tau)}
        \bigl( v^{\mathrm{ft}}_\theta(X_\tau, \tau) - v^{\mathrm{base}}(X_\tau, \tau) \bigr)
        + \sigma(\tau)\, \widetilde{a}_\tau
    \bigg\|^2\text{,}
    \label{eq:prelim_adj_loss}
\end{align}
without backpropagating through the sampling chain. However, QAM uses standard SOC dynamics without the 
$\sqrt{\lambda}$-scaling, corresponding to the special case 
$\lambda = 1$ in our framework.
\section{Method}
\label{sec:method}
In this section, we introduce \emph{Trust Region Q-Adjoint Matching} (TRQAM), a framework 
for stable off-policy fine-tuning of pretrained flow policies that adapts a 
trust-region parameter $\lambda$ inside the stochastic optimal control 
sampling dynamics. By scaling the diffusion coefficient by $\sqrt{\lambda}$, 
TRQAM makes the path-space KL between the fine-tuned and pretrained 
processes an exact function of $\lambda$ via Girsanov, turning a 
prescribed KL bound $\varepsilon_{\mathrm{KL}}$ into a structural 
constraint enforced at the sampling level. 
We develop our core contribution through investigating the following questions:
\begin{itemize}[leftmargin=*, itemsep=0mm]
    \item Why fixed $\lambda$ is fragile in off-policy RL? (Section~\ref{sec:method_instability})
    \item What does $\lambda$ control when internalized in the SOC sampling dynamics? (Section~\ref{sec:method_path_kl})
    \item How do we adapt $\lambda$ throughout training? (Section~\ref{sec:method_dual_update})
    \item Why must $\lambda$ be internalized in the SOC sampling dynamics rather than added as a conventional KL regularization? (Section~\ref{sec:method_internal_external})
\end{itemize}
\subsection{Why fixed \texorpdfstring{$\lambda$}{lambda} is fragile in off-policy RL?}
\label{sec:method_instability}

In off-policy RL, the learned critic is inevitably imperfect: approximation error compounds through bootstrapping and replay, and is especially severe under distributional shift \citep{fujimoto2018addressingfunctionapproximationerror, kumar2020conservativeqlearningofflinereinforcement}. The central risk of critic-guided policy improvement is that \emph{small critic errors can induce large policy deviations}. We formalize this in the following lemma, which applies to any updated policy 
that exponentially tilted from a base policy. Note that this includes QAM, since solving 
its memoryless SOC objective yields a terminal policy of this form 
\citep{domingoenrich2025adjointmatchingfinetuningflow, qam}.

\begin{lem}[Exponential amplification of critic errors]
\label{lem:exp_amp}
Fix a state $s\in \mathcal{S}$ and let $Q, \widetilde{Q}: \mathcal{A} \to \mathbb{R}$ satisfy $\|Q - \widetilde{Q}\|_\infty \leq \varepsilon$. Define the corresponding exponentially tilted distributions
\begin{align*}
    \pi_Q(a \mid s) \propto \pi_{\mathrm{base}}(a \mid s)\, e^{\beta Q(a)}\text{,}
    \qquad
    \pi_{\widetilde{Q}}(a \mid s) \propto \pi_{\mathrm{base}}(a \mid s)\, e^{\beta \widetilde{Q}(a)}\text{,}
\end{align*}
where $\beta > 0$ is the inverse temperature. Then the following inequalities hold:
\begin{align*}
    D_{\mathrm{KL}}(\pi_Q \,\|\, \pi_{\widetilde{Q}}) \leq 2\beta\varepsilon\text{,}
    \qquad
    \mathrm{TV}(\pi_Q, \pi_{\widetilde{Q}}) \leq \frac{1}{2}\bigl(e^{2\beta\varepsilon} - 1\bigr)\text{.}
\end{align*}
\end{lem}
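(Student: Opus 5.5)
Write $\Delta(a) = \widetilde{Q}(a) - Q(a)$, so $|\Delta(a)|\le\varepsilon$ for every $a$. Write the normalizers as
\begin{align*}
Z = \mathbb{E}_{\pi_{\mathrm{base}}}\bigl[e^{\beta Q}\bigr],
\qquad
\widetilde{Z} = \mathbb{E}_{\pi_{\mathrm{base}}}\bigl[e^{\beta \widetilde{Q}}\bigr].
\end{align*}
The plan is to show that the likelihood ratio between the two tilted distributions is uniformly pinned within a factor $e^{\pm 2\beta\varepsilon}$. Both bounds then follow from this pointwise control.

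\textbf{Key step: the log-ratio.} The base density cancels, giving
\begin{align*}
\log\frac{\pi_Q(a\mid s)}{\pi_{\widetilde{Q}}(a\mid s)} = -\beta\Delta(a) + \log\frac{\widetilde{Z}}{Z}.
\end{align*}
Since $e^{\beta\widetilde{Q}} = e^{\beta Q}e^{\beta\Delta}$ and $e^{\beta\Delta}\in[e^{-\beta\varepsilon},e^{\beta\varepsilon}]$ pointwise, we get $\widetilde{Z}/Z\in[e^{-\beta\varepsilon},e^{\beta\varepsilon}]$. Together with $|\beta\Delta(a)|\le\beta\varepsilon$, this yields
\begin{align*}
\Bigl|\log\frac{\pi_Q(a\mid s)}{\pi_{\widetilde{Q}}(a\mid s)}\Bigr|\le 2\beta\varepsilon
\qquad\text{for every } a.
\end{align*}
All densities are taken with respect to $\pi_{\mathrm{base}}$ (or a common dominating measure), so no absolute continuity issues arise. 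We assume the normalizers are finite, which is implicit in the statement.

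\textbf{KL bound.} $\KL(\pi_Q\,\|\,\pi_{\widetilde{Q}})$ is the $\pi_Q$-expectation of the log-ratio above. It is therefore bounded by the sup of that log-ratio, which is at most $2\beta\varepsilon$. (A sharper $\beta^2\varepsilon^2/2$-type bound is available via Hoeffding's lemma, but we do not need it.)

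\textbf{TV bound.} Write
\begin{align*}
\TV = \tfrac12\,\mathbb{E}_{\pi_{\widetilde{Q}}}\bigl|r-1\bigr|,
\qquad
r = \frac{\pi_Q}{\pi_{\widetilde{Q}}}\in\bigl[e^{-2\beta\varepsilon},e^{2\beta\varepsilon}\bigr].
\end{align*}
Then $|r-1|\le\max\bigl(e^{2\beta\varepsilon}-1,\,1-e^{-2\beta\varepsilon}\bigr)=e^{2\beta\varepsilon}-1$, which gives $\TV\le\frac12(e^{2\beta\varepsilon}-1)$.

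The only mildly delicate point is the normalizer ratio: one must observe that it contributes a second factor of $\beta\varepsilon$, which is where the constant $2$ comes from. Everything else is routine.
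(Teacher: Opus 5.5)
Your proposal is correct and follows essentially the same argument as the paper: both pin the likelihood ratio $\pi_Q/\pi_{\widetilde{Q}}$ inside $[e^{-2\beta\varepsilon}, e^{2\beta\varepsilon}]$ by combining the pointwise bound on $e^{\beta(Q-\widetilde{Q})}$ with the same bound on the normalizer ratio. Both then read off the KL bound from the supremum of the log-ratio and the TV bound from $|r-1|\le e^{2\beta\varepsilon}-1$; your Hoeffding remark is also correct (writing the KL as $\log \mathbb{E}_{\pi_Q}[e^{\beta\Delta}] - \beta\,\mathbb{E}_{\pi_Q}[\Delta]$ gives $\beta^2\varepsilon^2/2$), though neither you nor the paper needs it.
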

\begin{proof}
    See Appendix~\ref{app:exp_amp} for the full proof.
\end{proof}
\begin{figure}[t]
    \centering
    \includegraphics[width=0.98\linewidth]{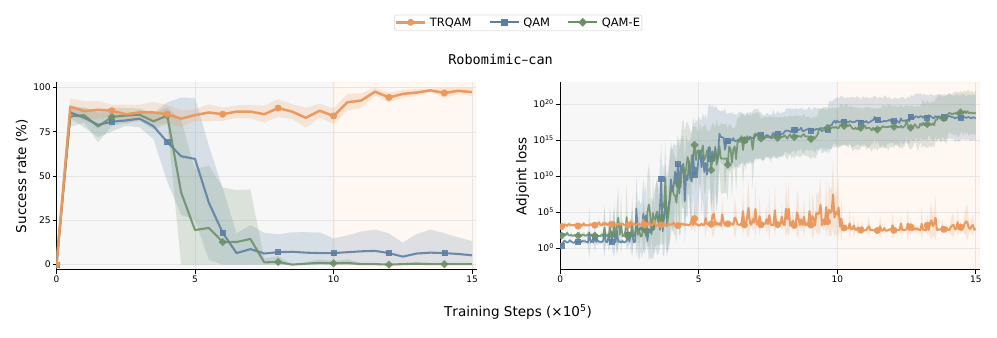}
    \vspace{-0.15in}
    \caption{\textbf{Empirical fragility of fixed-temperature adjoint matching.} On 
    \texttt{Robomimic-can}, the adjoint-matching loss in QAM and QAM-E grows above 
    $10^{20}$ even with gradient clipping (min--max across seeds), driving 
    task success from over $80\%$ to near zero ($\pm 1$ standard deviation), while TRQAM remains 
    stable. This collapse persists across most hyperparameter settings we tested 
    on both \texttt{Robomimic-lift} and \texttt{Robomimic-can}; see Appendix 
    Figures~\ref{fig:full_robomimic_lift_curves_grid_4by2} 
    and~\ref{fig:full_robomimic_can_curves_grid_4by2} for the full sweep.}
    \label{fig:robomimic_divergence}
\end{figure}
The total-variation bound is exponential in $\beta\varepsilon$: large $\beta$ 
amplifies critic errors into large policy deviations, while small 
$\beta$ suppresses both critic errors and useful improvement. The lemma thus 
formalizes a fundamental tension in critic-guided updates: no single fixed $\beta$ 
can simultaneously exploit a reliable critic and protect against an unreliable 
one. We observe this empirically on Robomimic~\citep{robomimic2021}: fixed-temperature QAM and QAM-E exhibit diverging adjoint loss and collapsing task success even with gradient clipping (Figure~\ref{fig:robomimic_divergence}). An analogous tension arises in our SOC setting, where $\lambda$ governs the size of the trust region: small $\lambda$ permits aggressive deviation from $\pi_{\text{base}}$ (corresponding to large $\beta$), while large $\lambda$ keeps the controlled sampler close to it. Before we can adapt $\lambda$, we need a quantitative link between $\lambda$ and the deviation from $\pi_{\mathrm{base}}$, which 
we establish next.

\subsection{\texorpdfstring{$\lambda$}{lambda} as a trust-region parameter}
\label{sec:method_path_kl}
To adapt $\lambda$ in a principled way, we first establish what 
$\lambda$ controls. Note that the diffusion coefficient is scaled by 
$\sqrt{\lambda}$ in Equation~\eqref{eq:prelim_controlled_sde}, following 
the parameterization of SOCM~\citep{domingoenrich2023stochastic}. 
Its consequence under change-of-measure, however, has not been 
made explicit. We now derive this consequence: under this scaling, 
Girsanov's theorem yields an exact identity between the quadratic 
control cost and the path-space KL between the fine-tuned and 
base trajectory distributions, in which $\lambda$ appears 
explicitly as the inverse coefficient.

\begin{thm}[SOC control cost $=$ path-space KL]
\label{thm:path_kl}
Let $\mathbb{P}^u$ and $\mathbb{P}^{\mathrm{base}}$ denote the distributions over trajectories induced by the controlled dynamics~\eqref{eq:prelim_controlled_sde} and the base dynamics~\eqref{eq:prelim_base_sde}. Then, 
\begin{align}
    D_{\mathrm{KL}}\bigl(\mathbb{P}^u \,\|\, \mathbb{P}^{\mathrm{base}}\bigr)
    \;=\;
    \mathbb{E}_{\mathbf X \sim \mathbb{P}^u}\!\bigg[\,
        \frac{1}{2\lambda} \int_0^1 \|u(X_\tau, \tau)\|^2\, d\tau
    \,\bigg]\text{.}
    \label{eq:method_path_kl}
\end{align}
\end{thm}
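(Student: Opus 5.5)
The plan is to prove the identity by a change of measure via Girsanov's theorem, applied to the two SDEs \eqref{eq:prelim_base_sde} and \eqref{eq:prelim_controlled_sde}. These share the same diffusion coefficient $\sqrt{\lambda}\,\sigma(\tau)$ and the same initial law, and differ only in the drift by $\sigma(\tau)u(X_\tau,\tau)$. We write this drift difference as $\sqrt{\lambda}\,\sigma(\tau)\,\theta_\tau$ with
\[
\theta_\tau := \lambda^{-1/2}\,u(X_\tau,\tau).
\]
The factor $\sigma(\tau)$ cancels exactly, and this cancellation is the structural reason the result is clean. The point is that the control enters the drift premultiplied by $\sigma$, while the noise carries the extra $\sqrt{\lambda}$.

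Girsanov then gives the Radon--Nikodym derivative on path space. Evaluated along a trajectory $\mathbf X\sim\mathbb{P}^u$ driven by a $\mathbb{P}^u$-Brownian motion $B$, it reads
\[
\log\frac{d\mathbb{P}^u}{d\mathbb{P}^{\mathrm{base}}}(\mathbf X) = \int_0^1 \theta_\tau^\top dB_\tau + \frac12\int_0^1\|\theta_\tau\|^2\,d\tau .
\]
This is the usual formula $\int\theta^\top dW - \tfrac12\int\|\theta\|^2$ for the base Brownian motion $W$, rewritten using $dW_\tau = dB_\tau + \theta_\tau\,d\tau$.

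Taking the expectation under $\mathbb{P}^u$, the stochastic integral is a true martingale with zero mean, provided $\mathbb{E}_{\mathbb{P}^u}\int_0^1\|u\|^2\,d\tau<\infty$. What remains is $\mathbb{E}_{\mathbb{P}^u}\big[\tfrac{1}{2\lambda}\int_0^1\|u(X_\tau,\tau)\|^2\,d\tau\big]$, which is exactly \eqref{eq:method_path_kl}. The initial marginals coincide (the same $X_0$ law), so they contribute no extra KL term.

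The main obstacle is rigor at the boundary $\tau\to0$. There the memoryless schedule $\sqrt{\lambda}\,\sigma(\tau)=\sqrt{2(1-\tau)/\tau}$ blows up, and the base drift $b$ (which involves $1/\tau$ terms) is singular, so Novikov-type conditions on $[0,1]$ are not immediate. I would handle this first by applying the argument on $[\delta,1]$, where the coefficients are bounded and Girsanov holds under a standard integrability or Lipschitz assumption on $u$; this yields the identity for the restricted path measures conditioned on the common time-$\delta$ law. I would then let $\delta\to0$, using monotone convergence on the right-hand side and lower semicontinuity together with the chain rule for KL on the left. If this turns out delicate, I would state the result under the standing assumption that $u$ has finite energy, and replace Novikov by a localization via stopping times so that the Itô integral still has zero expectation.
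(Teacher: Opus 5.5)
Your argument is essentially the paper's proof: Girsanov with kernel $\lambda^{-1/2}u$, substituting $dB^{\mathrm{base}}_\tau = dB^u_\tau + \lambda^{-1/2}u\,d\tau$ to write the log-density as $\lambda^{-1/2}\int_0^1 u^\top dB^u_\tau + \frac{1}{2\lambda}\int_0^1\|u\|^2\,d\tau$, and dropping the zero-mean It\^o integral under $\mathbb{P}^u$; the paper does not truncate at $\delta$, it simply assumes Novikov's condition and standard regularity on all of $[0,1]$. If you keep the truncation, note two things: $\mathbb{P}^u$ and $\mathbb{P}^{\mathrm{base}}$ do not share their time-$\delta$ law, so the chain rule leaves a term $D_{\mathrm{KL}}(p^u_\delta\,\|\,p^{\mathrm{base}}_\delta)$ whose vanishing as $\delta\to 0$ needs a direct estimate (lower semicontinuity only bounds it from below); and because $\sigma$ cancels from the kernel, the singularity at $\tau=0$ affects well-posedness of the SDEs there rather than Novikov's condition, which involves $u$ alone.
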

\begin{proof}
See Appendix~\ref{app:girsanov} for the full proof.
\end{proof}

While Theorem~\ref{thm:path_kl} ties $\lambda$ to the path-space KL,
the quantity we ultimately care about is the deviation of the terminal
action distribution $\pi_\theta(\cdot \mid s)$ from $\pi_{\mathrm{base}}(\cdot \mid s)$.
The following proposition shows that controlling the path-space KL
suffices: it provides an upper bound on the terminal KL.

\begin{prop}[Terminal KL upper-bounded by path-space KL]
\label{prop:dpi}
Let $\mathbb{P}^u$ and $\mathbb{P}^{\mathrm{base}}$ denote the distributions over trajectories induced by the controlled dynamics~\eqref{eq:prelim_controlled_sde} and the base dynamics~\eqref{eq:prelim_base_sde}, and let $\pi_\theta(\cdot \mid s)$ and $\pi_{\mathrm{base}}(\cdot \mid s)$ denote the corresponding terminal action distributions at $\tau = 1$. Then,
\begin{align}
    D_{\mathrm{KL}}\bigl(\pi_\theta(\cdot \mid s) \,\|\, \pi_{\mathrm{base}}(\cdot \mid s)\bigr)
    \;\leq\;
    D_{\mathrm{KL}}\bigl(\mathbb{P}^u \,\|\, \mathbb{P}^{\mathrm{base}}\bigr)\text{.}
\end{align}
\end{prop}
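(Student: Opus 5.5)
The plan is to apply the data-processing inequality for KL divergence to the terminal-time projection map. Let $\Omega = C([0,1];\mathbb{R}^d)$ be the path space carrying $\mathbb{P}^u$ and $\mathbb{P}^{\mathrm{base}}$, and let $e_1:\Omega\to\mathbb{R}^d$, $e_1(\mathbf X) = X_1$, be evaluation at $\tau=1$. The argument then has three steps.

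\emph{Step 1 (identify the marginals).} By definition, the terminal action distributions are pushforwards: $\pi_\theta(\cdot\mid s) = (e_1)_\#\mathbb{P}^u$ and $\pi_{\mathrm{base}}(\cdot\mid s) = (e_1)_\#\mathbb{P}^{\mathrm{base}}$. For the base process this uses the fact, recalled in Section~\ref{sec:background}, that the memoryless SDE~\eqref{eq:prelim_base_sde} has the same time marginals as the flow ODE, so that $X_1^{\mathrm{base}}\sim\pi_{\mathrm{base}}$. Since $e_1$ is continuous, it is measurable, and the pushforwards are well defined.

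\emph{Step 2 (data processing).} Assume $D_{\mathrm{KL}}(\mathbb{P}^u\|\mathbb{P}^{\mathrm{base}})<\infty$; otherwise there is nothing to prove. Then $\mathbb{P}^u\ll\mathbb{P}^{\mathrm{base}}$, and Theorem~\ref{thm:path_kl} supplies the density via Girsanov. Set $L = d\mathbb{P}^u/d\mathbb{P}^{\mathrm{base}}$ and define $\ell(x) = \mathbb{E}_{\mathbb{P}^{\mathrm{base}}}[L\mid X_1 = x]$. A direct check shows that $\ell$ is the density $d\pi_\theta/d\pi_{\mathrm{base}}$: for any measurable $A$,
\[
\pi_\theta(A)=\mathbb{E}_{\mathbb{P}^{\mathrm{base}}}[L\,\mathbf 1_{A}(X_1)]=\mathbb{E}_{\mathbb{P}^{\mathrm{base}}}[\ell(X_1)\mathbf 1_A(X_1)].
\]
Now apply conditional Jensen to the convex function $\phi(t)=t\log t$:
\[
D_{\mathrm{KL}}(\mathbb{P}^u\|\mathbb{P}^{\mathrm{base}})=\mathbb{E}_{\mathbb{P}^{\mathrm{base}}}[\phi(L)]\ \ge\ \mathbb{E}_{\mathbb{P}^{\mathrm{base}}}\bigl[\phi(\mathbb{E}[L\mid X_1])\bigr]=\mathbb{E}_{\pi_{\mathrm{base}}}[\phi(\ell)] = D_{\mathrm{KL}}(\pi_\theta\|\pi_{\mathrm{base}}).
\]
An equivalent route is the chain rule for KL. Disintegrate both path measures over $X_1$ to write
\[
D_{\mathrm{KL}}(\mathbb{P}^u\|\mathbb{P}^{\mathrm{base}}) = D_{\mathrm{KL}}(\pi_\theta\|\pi_{\mathrm{base}}) + \mathbb{E}_{x\sim\pi_\theta}\bigl[D_{\mathrm{KL}}(\mathbb{P}^u(\cdot\mid X_1=x)\,\|\,\mathbb{P}^{\mathrm{base}}(\cdot\mid X_1=x))\bigr],
\]
and drop the nonnegative conditional term.

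\emph{Step 3 (conclusion).} Combining the two steps gives the claimed inequality. Paired with Theorem~\ref{thm:path_kl}, it also yields the explicit bound
\[
D_{\mathrm{KL}}(\pi_\theta\|\pi_{\mathrm{base}})\le \mathbb{E}_{\mathbb{P}^u}\Bigl[\tfrac{1}{2\lambda}\int_0^1\|u\|^2\,d\tau\Bigr].
\]

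The main obstacle is technical rather than conceptual. Conditional Jensen with $\phi(t)=t\log t$ needs integrability, since $\phi$ is unbounded below near $0$ only mildly, as $\phi\ge -1/e$, and unbounded above. Using $\phi\ge -1/e$ together with $\mathbb{E}[\phi(L)]<\infty$ makes every expectation well defined. The chain-rule version instead needs regular conditional distributions, which exist because $\Omega$ is Polish. I would present the Jensen argument and note that both conditions hold in this setting.
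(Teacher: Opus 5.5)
Your proposal is correct and takes essentially the same route as the paper. Both arguments identify $\pi_\theta$ and $\pi_{\mathrm{base}}$ as pushforwards of $\mathbb{P}^u$ and $\mathbb{P}^{\mathrm{base}}$ under the terminal evaluation map and apply the data-processing inequality for KL. The only difference is that the paper cites that inequality (Polyanskiy--Wu, Cor.~2.18), whereas you prove it inline via conditional Jensen on $t\log t$ or the KL chain rule, which is a valid self-contained justification.
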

\begin{proof}
See Appendix~\ref{app:dpi} for the full proof.
\end{proof}

Informally, the three results form a chain that links the trust-region parameter $\lambda$ to the amplification of critic errors. Theorem~\ref{thm:path_kl} ties $\lambda$ to the path-space KL between the controlled and base trajectories. 
Proposition~\ref{prop:dpi} shows that this path-space KL upper-bounds the terminal-policy KL 
$D_{\mathrm{KL}}(\pi_\theta \| \pi_{\mathrm{base}})$. Lemma~\ref{lem:exp_amp} bounds this terminal-policy KL by 
$2\beta\varepsilon$, where $\beta$ is the inverse temperature in the exponential tilting of $\pi_{\mathrm{base}}$ by the critic and $\varepsilon$ is the critic approximation error.
\begin{remarkbox}{Remark: Connection between trust-region parameter $\lambda$ and inverse temperature $\beta$}
\[
\begin{aligned}
    1/\lambda
    \;&\underset{\text{Thm.~\ref{thm:path_kl}}}{\propto}\;
    \underbrace{D_{\mathrm{KL}}(\mathbb{P}^u \| \mathbb{P}^{\mathrm{base}})}_{\text{path-space KL}}
    \;\underset{\text{Prop.~\ref{prop:dpi}}}{\geq}\;
    \underbrace{D_{\mathrm{KL}}(\pi_\theta \| \pi_{\mathrm{base}})}_{\text{terminal KL}}
    \;\underset{\text{Lem.~\ref{lem:exp_amp}}}{\lesssim}\;
    \underbrace{\beta \varepsilon}_{\substack{\text{critic-error}\\\text{bound}}}\text{.}
\end{aligned}
\]
\end{remarkbox}
Increasing $\lambda$ shrinks terminal KL, effectively reducing 
the critic guidance strength $\beta$ and tightening the bound on 
critic-error amplification. A single scalar $\lambda$ therefore 
adaptively balances exploiting the critic and staying close to 
$\pi_{\mathrm{base}}$.

\subsection{Adaptive \texorpdfstring{$\lambda$}{lambda} via projected dual descent}
\label{sec:method_dual_update}
To keep the realized path-space KL within a target bound 
$\varepsilon_{\mathrm{KL}}$, we need a tractable KL estimator and a
principled rule for updating $\lambda$ based on it. We use the fact that the discretized memoryless OT
sampler with step size $h$ and diffusion schedule
$g(\tau) = \sqrt{2(1-\tau)/\tau}$ is a Markov chain whose Gaussian
transitions share the same covariance, so per-step KL divergences
admit a closed form. Summing these per-step KLs along a trajectory
approximately recovers the path-space KL, which we estimate via Monte Carlo over
sampled trajectories (derivation in Appendix~\ref{app:pathkl}):
\begin{align}
    \widehat{D}_n
    = \mathbb{E}_{\mathbf X \sim \mathbb{P}^u}\!\bigg[\sum_{k=0}^{K-1}
    \frac{2h}{g(\tau_k)^2}
    \bigl\| v^{\mathrm{ft}}_\theta(X_{\tau_k}, \tau_k)
           - v^{\mathrm{base}}(X_{\tau_k}, \tau_k) \bigr\|^2\bigg]\text{,}
    \label{eq:method_kl_estimator}
\end{align}
To reduce variance, we smooth $\widehat{D}_n$ with an exponential 
moving average: $\overline{D}_n \leftarrow (1-\rho)\, \overline{D}_{n-1} 
+ \rho\, \widehat{D}_n$. Given this estimator, we adapt $\lambda$ by 
interpreting it as the dual variable of a KL-constrained improvement 
problem $\max_u \mathbb{E}[Q^\pi(X_1^u)]$ subject to 
$D_{\mathrm{KL}}(\mathbb{P}^u \| \mathbb{P}^{\mathrm{base}}) \leq 
\varepsilon_{\mathrm{KL}}$, and apply projected dual descent with a 
fixed step size $\eta_\lambda > 0$ (derivation in 
Appendix~\ref{app:dual}):
\begin{align}
    \lambda_{n+1}
    \;\leftarrow\;
    \max\bigl\{0,\; \lambda_n + \eta_\lambda\,(\overline{D}_n - 
    \varepsilon_{\mathrm{KL}})\bigr\}\text{.}
    \label{eq:method_lambda_update}
\end{align}
When the realized KL exceeds the bound, $\lambda$ rises and the 
controlled dynamics become more conservative; when it falls below, 
$\lambda$ decreases and allows more aggressive improvement. 
Algorithm~\ref{alg:trqam} summarizes the resulting TRQAM update, where the adaptive trust-region components are highlighted in blue. Full algorithm can be found in Appendix~\ref{app:algorithm}.
\begin{figure}[t]
\centering
\begin{minipage}[t]{0.49\linewidth}
\begin{algorithm}[H]
\caption{QAM}
\label{alg:qam}
\begin{algorithmic}[1]
\small
\Require $v^{\mathrm{base}}$, training step $N$
\State Init $v^{\mathrm{ft}}_\theta \leftarrow v^{\mathrm{base}}$, $\lambda = 1$
\For{$n = 0, \ldots, N-1$}
    \State Sample $X$ via Eq.~\eqref{eq:prelim_controlled_sde} with $v^{\mathrm{ft}}_\theta$, $\lambda$
    \State Solve adjoint ODE~\eqref{eq:prelim_adjoint}
    \State $\theta \leftarrow \theta - \nabla_\theta \mathcal{L}_{\mathrm{Adj\text{-}Match}}$ via Eq.~\eqref{eq:prelim_adj_loss} with $\sigma$
    \State \phantom{Estimate $\widehat{D}_n$ via Eq.~(6); EMA $\overline{D}_n$}
    \State \phantom{$\lambda_{n+1} \leftarrow \max\{0, \lambda_n + \eta_\lambda(\overline{D}_n - \varepsilon_{\mathrm{KL}})\}$}
\EndFor
\end{algorithmic}
\end{algorithm}
\end{minipage}
\hfill
\begin{minipage}[t]{0.49\linewidth}
\begin{algorithm}[H]
\caption{\textcolor{GoogleBlue}{TRQAM (ours)}}
\label{alg:trqam}
\begin{algorithmic}[1]
\small
\Require $v^{\mathrm{base}}$, training step $N$, \textcolor{GoogleBlue}{KL budget $\varepsilon_{\mathrm{KL}}$}
\State Init $v^{\mathrm{ft}}_\theta \leftarrow v^{\mathrm{base}}$, \textcolor{GoogleBlue}{$\lambda_0, \overline{D}_0$, dual stepsize $\eta_\lambda$}
\For{$n = 0, \ldots, N-1$}
    \State Sample $X$ via Eq.~\eqref{eq:prelim_controlled_sde} with $v^{\mathrm{ft}}_\theta$, \textcolor{GoogleBlue}{$\lambda_n$}
    \State Solve adjoint ODE~\eqref{eq:prelim_adjoint}
    \State $\theta \leftarrow \theta - \nabla_\theta \mathcal{L}_{\mathrm{Adj\text{-}Match}}$ via Eq.~\eqref{eq:prelim_adj_loss} with \textcolor{GoogleBlue}{$\sigma_n$}
    \State \textcolor{GoogleBlue}{Estimate $\widehat{D}_n$ via Eq.~\eqref{eq:method_kl_estimator}; EMA $\overline{D}_n$}
    \State \textcolor{GoogleBlue}{$\lambda_{n+1} \leftarrow \max\{0, \lambda_n + \eta_\lambda(\overline{D}_n - \varepsilon_{\mathrm{KL}})\}$}
\EndFor
\end{algorithmic}
\end{algorithm}
\end{minipage}
\label{fig:alg_comparison}
\vspace{-0.1in}
\end{figure}

\subsection{Internal vs.\ external KL regularization}
\label{sec:method_internal_external}
The dual update in Equation~(\ref{eq:method_lambda_update}) can be combined with the SOC objective in two different approaches:
\begin{align}
\underbrace{\min_{\theta}\; \mathcal{L}_{\text{Adj-Match}}(\theta) + 
\lambda \cdot \overline{D}_n(\theta)}_{\text{External: } \lambda \text{ 
is a regularization weight}}
\quad\text{vs.}\quad
\underbrace{\min_{\theta}\; \mathcal{L}_{\text{Adj-Match}}(\theta) 
\;\;\text{s.t. SDE uses } \sqrt{\lambda}\,\sigma(\tau)}_{\text{Internal 
(TRQAM): } \lambda \text{ appears in the SOC sampling dynamics}}\text{.}
\label{eq:internal-vs-external}
\end{align}
For the internal form (right), 
$\sqrt{\lambda}\,\sigma(\tau) = \sqrt{2(1-\tau)/\tau}$ is fixed by the 
OT schedule, thus adjusting $\lambda$ changes $\sigma(\tau)$ and 
reshapes the entire controlled SDE, including its drift term 
$b(x,\tau) + \sigma(\tau)\,u(x,\tau)$ in 
Equation~\eqref{eq:prelim_controlled_sde}.
Intuitively, increasing $\lambda$ 
shrinks $\sigma(\tau)$, which weakens the control contribution 
$\sigma(\tau)\,u(x,\tau)$ to the drift and pulls the controlled SDE 
toward the base dynamics. By Theorem~\ref{thm:path_kl}, the realized 
path-space KL is an exact function of $\lambda$, and the dual 
update directly enforces the trust region through the sampling 
dynamics. For the external form (left), $\lambda$ enters only as a coefficient on a loss penalty, and has limitations in enforcing the target. As such, under strong critic guidance, the realized KL can drift far from this target. 
We refer to Table~\ref{tab:internal-vs-external} in Appendix~\ref{app:int-vs-ext}, where we summarize these structural differences side-by-side. Also, we validate this distinction empirically in Section~\ref{sec:exp_mechanism}.
\section{Experiments}
\label{sec:experiments}
\renewcommand{\arraystretch}{1.2} 
\setlength{\tabcolsep}{4pt}
\begin{table}[t]
    \centering
    \caption{\textbf{Offline RL on 50 OGBench~\cite{ogbench_park2025} tasks
    at 1M training steps (8 seeds).}
    Mean success rate (\%) with $\pm$1 standard deviation.
    (per-task breakdown across all 50 tasks in Table~\ref{tab:full-results}).}
    \label{tab:offline_table_abb}
    \setlength{\abovecaptionskip}{4pt}
    \newcommand{\gval}[2]{\ensuremath{{\color{gray!70}#1}\,{\color{gray!50}\scriptstyle\pm #2}}} 
    \newcommand{\bmax}[2]{\ensuremath{\underline{#1}\,{\color{gray!50}\scriptstyle\pm #2}}} 
    
    \newcommand{\gavg}[1]{\ensuremath{{\color{gray!70}#1}}} 
    \newcommand{\bavg}[1]{\ensuremath{\underline{#1}}} 
    \newcommand{\ourmax}[2]{\ensuremath{{\color{GoogleBlue}\mathbf{\underline{#1}}}\,{\color{gray!50}\scriptstyle\pm #2}}}
    \newcommand{\ouravg}[1]{\ensuremath{{\color{GoogleBlue}\mathbf{\underline{#1}}}}}
    
\scalebox{0.82}{
\begin{tabular}{>{\centering\arraybackslash}m{2.6cm} l ccccccccccc}
\toprule
 &  & \texttt{al} & \texttt{ag} & \texttt{hm} & \texttt{hl} & \texttt{scene} & \texttt{p33} & \texttt{p44} & \texttt{c2} & \texttt{c3} & \texttt{c4} & \texttt{all} \\
 &  & \tiny\texttt{5 tasks} & \tiny\texttt{5 tasks} & \tiny\texttt{5 tasks} & \tiny\texttt{5 tasks} & \tiny\texttt{5 tasks} & \tiny\texttt{5 tasks} & \tiny\texttt{5 tasks} & \tiny\texttt{5 tasks} & \tiny\texttt{5 tasks} & \tiny\texttt{5 tasks} & \tiny\texttt{50 tasks} \\
\midrule

\textsc{Backprop} & {\texttt{FQL}}
& \gval{38}{9} & \gval{2}{6} & \gval{74}{5} & \gval{2}{1} & \gval{70}{5} & \gval{25}{10} & \gval{9}{7} & \gval{44}{4} & \gval{7}{5} & \gval{9}{5} & \gavg{28} \\
\midrule

\textsc{Guidance} & {\texttt{CGQL-L}}
& \gval{48}{7} & \gval{7}{5} & \gval{57}{2} & \gval{6}{3} & \gval{58}{1} & \gval{0}{0} & \gval{0}{0} & \gval{55}{2} & \gval{0}{1} & \gval{1}{1} & \gavg{23} \\
\midrule

\textsc{Post Processing} & {\texttt{DSRL}}
& \gval{53}{2} & \gval{1}{1} & \gval{53}{10} & \gval{1}{1} & \bmax{80}{0} & \bmax{100}{0} & \gval{61}{8} & \gval{72}{4} & \gval{34}{6} & \gval{9}{3} & \gavg{46} \\
& {\texttt{IFQL}}
& \gval{29}{8} & \gval{12}{3} & \bmax{93}{2} & \gval{30}{7} & \gval{36}{1} & \gval{64}{4} & \gval{42}{4} & \gval{9}{2} & \gval{24}{7} & \gval{6}{3} & \gavg{35} \\
\midrule

\textsc{Adjoint Matching} & {\texttt{QAM}}
& \gval{62}{9} & \gval{29}{4} & \gval{64}{7} & \gval{4}{3} & \gval{64}{4} & \gval{15}{3} & \gval{1}{1} & \gval{71}{2} & \gval{19}{6} & \bmax{18}{3} & \gavg{35} \\
& {\texttt{QAM-E}}
& \gval{86}{3} & \gval{6}{8} & \gval{60}{6} & \gval{4}{5} & \gval{63}{6} & \gval{89}{4} & \gval{54}{8} & \gval{71}{3} & \gval{11}{4} & \gval{9}{3} & \gavg{45} \\
\midrule

\rowcolor{ourlightblue} 
\textsc{\textbf{Ours}} & {\textbf{\texttt{TRQAM}}}
& \ourmax{89}{4} & \ourmax{41}{4} & \gval{84}{3} & \ourmax{36}{4} & \ourmax{79}{1} & \ourmax{100}{0} & \ourmax{99}{1} & \ourmax{81}{3} & \ourmax{50}{5} & \ourmax{19}{5} & \ouravg{68} \\
\bottomrule
\end{tabular}
}
\vspace{-0.15in}
\end{table}

We evaluate TRQAM on off-policy fine-tuning of pretrained flow policies in the offline-to-online setting. We use OGBench~\citep{ogbench_park2025} (50 tasks) for main comparison, and Robomimic~\citep{robomimic2021} for ablation and mechanism studies. All methods share the same pretrained flow policy and training schedule.

\vspace{0.01in}
{\bf Setup.} 
\textbf{OGBench}~\citep{ogbench_park2025} is an offline goal-conditioned RL benchmark spanning 10 suites, from which we evaluate on 50 tasks. While OGBench is originally designed 
for offline goal-conditioned RL, we use its reward-based single-task variants. \textbf{Robomimic}~\citep{robomimic2021} is a demonstration based manipulation benchmark used to test the stability.
For all manipulation tasks (\emph{e.g.}, OGBench's \texttt{scene}, \texttt{cube}, \texttt{puzzle} 
suites and all Robomimic tasks), we use action-chunked policies with chunk 
size $h = 5$~\citep{li2025reinforcementlearningactionchunking}.

On OGBench, we compare our method against six off-policy 
fine-tuning baselines: \textbf{FQL}~\citep{fql_park2025}, 
\textbf{CGQL-Linex}~\citep{dhariwal2021diffusion}, 
\textbf{DSRL}~\citep{Wagenmaker2025DSRL}, \textbf{IFQL}~\citep{hansen2023idql}, 
and \textbf{QAM} / \textbf{QAM-E}~\citep{qam}. On Robomimic, we focus on 
the adjoint-matching variants (\emph{i.e.}, QAM and QAM-E), and DSRL as a non-adjoint reference. We refer to Appendix~\ref{appendix:baselines} for details on each baseline.
Since trust-region 
fine-tuning regulates deviation from a pretrained prior, this prior must itself encode meaningful behavior. Therefore, all methods are pretrained for 300K steps with behavior cloning, and run offline-to-online fine-tuning for each 1M step. 
We report average success rate (\%) over 8 seeds, and the detailed hyperparameters are in Appendix~\ref{app:hyperparams}.

\vspace{0.01in}
{\bf Abbreviations.} We evaluate on 10 OGBench task suites and abbreviate their names in tables and figures for compactness:
\texttt{puzzle-4x4} (\texttt{p44}), \texttt{cube-double} (\texttt{c2}), \texttt{cube-triple} (\texttt{c3}), \texttt{cube-quadruple} (\texttt{c4}), \texttt{scene}, \texttt{humanoidmaze-medium} (\texttt{hm}), \texttt{humanoidmaze-large} (\texttt{hl}), 
\texttt{antmaze-large} (\texttt{al}), \texttt{antmaze-giant} (\texttt{ag}), and \texttt{puzzle-3x3} (\texttt{p33}).
\subsection{Main results on offline and offline-to-online RL}
\label{sec:exp_main}
Table~\ref{tab:offline_table_abb} reports offline success rates after 
1M training steps. TRQAM achieves $68\%$ aggregate success across 50 
tasks, improving on QAM ($35\%$) by 33 points, on its strongest 
variant QAM-E ($45\%$) by 23 points, and on the strongest non-adjoint 
baseline DSRL ($46\%$) by 22 points, with the largest gains on 
long-horizon and combinatorial suites. This lead is sustained through 
the offline-to-online transition: per-task curves in Appendix 
Figures~\ref{fig:per_task_curves_1} and~\ref{fig:per_task_curves_2} 
show that TRQAM remains the strongest method through 500K steps of 
online fine-tuning.

A natural explanation for TRQAM's gains is that it leverages the pretrained policy more effectively. Figure~\ref{fig:pretraining} 
tests this by running TRQAM, QAM-E, and QAM both from a pretrained 
flow policy (dashed) and from scratch (solid) under an identical 
offline-to-online protocol. The contrast is clear: TRQAM benefits 
substantially from the pretrained prior, reaching high success much 
earlier than its scratch counterpart, while QAM and QAM-E show little 
to no benefit from the same pretrained initialization, with their 
pretrained and scratch curves remaining close throughout training. 

\subsection{What mechanism drives its gains?}
\label{sec:exp_mechanism}
To isolate what drives the asymmetry above, we compare three variants differing only in how $\lambda$ is regulated: \textbf{QAM} (constant $\lambda$ in our framework), \textbf{QAM + External KL} (adaptive $\lambda$ as a conventional KL regularization loss), and \textbf{TRQAM} (adaptive $\lambda$ internalized in the SOC sampling dynamics). This isolates two design axes: \emph{adaptation} (constant KL vs.\ adaptive KL) and \emph{internalization} (external loss penalty vs.\ internalized SOC sampling dynamics).

\vspace{0.01in}
{\bf 1. Adaptative KL outperforms constant KL.}~
Both adaptive KL variants, TRQAM and QAM with external KL regularization, substantially outperform QAM on \texttt{cube-triple-task1} and \texttt{humanoidmaze-medium-task1} (Figure~\ref{fig:mechanism}). The same asymmetry appears on Robomimic, where QAM exhibits the diverging adjoint loss and collapsing task success of Figure~\ref{fig:robomimic_divergence} while both adaptive variants remain stable (Appendix~\ref{app:robomimic}). These results are consistent with Lemma~\ref{lem:exp_amp}'s exponential amplification of critic errors under fixed temperature.
\begin{figure}[t]
  \centering
  \includegraphics[width=\linewidth]{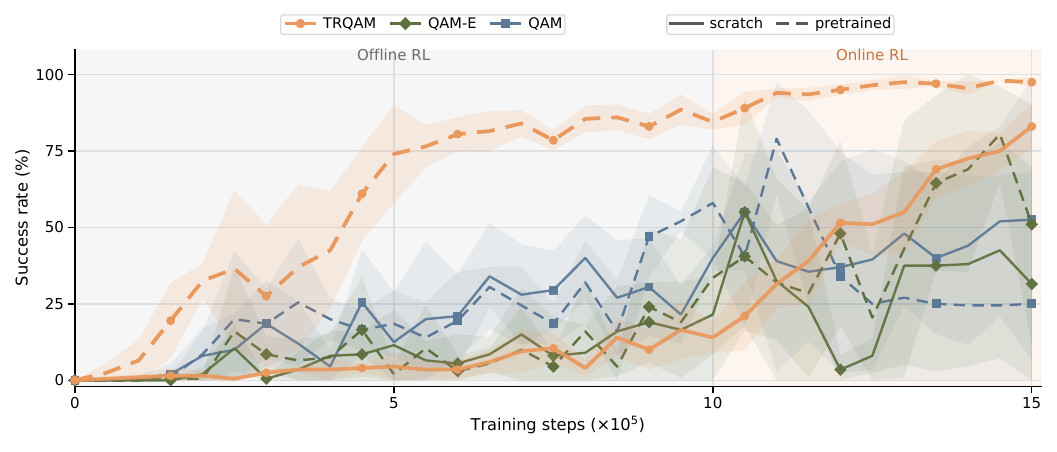}
  \vspace{-0.15in}
  \caption{\textbf{Pretraining alone is not sufficient.} On 
    \texttt{humanoidmaze-medium-task1}, each algorithm is run from a 
    pretrained flow policy (dashed) and from scratch (solid); shaded regions 
    denote standard deviation across seeds. TRQAM benefits substantially from the 
    pretrained prior, while QAM and QAM-E show little to no benefit from the 
    same pretrained initialization, with their pretrained and scratch curves 
    remaining close throughout training.}
  \label{fig:pretraining}
  \vspace{-0.15in}
\end{figure}
\begin{figure}[t]
  \centering\small
  \includegraphics[width=0.99\linewidth]{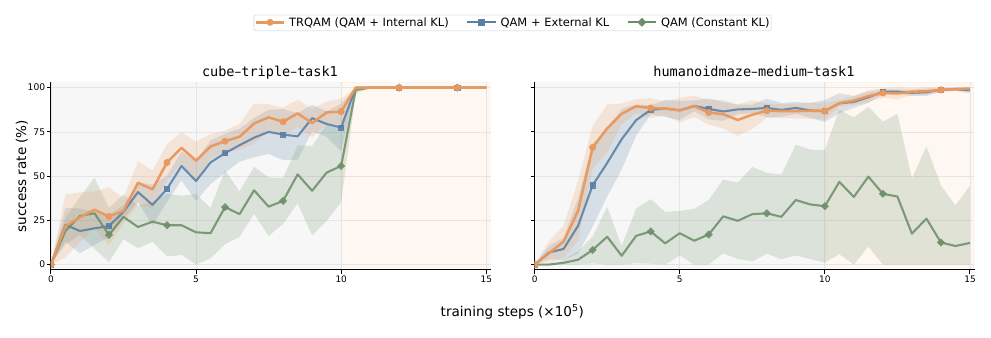}
  \vspace{-0.17in}
  \caption{\textbf{Adaptation is necessary.} Both adaptive variants 
    (TRQAM and QAM + External KL) outperform QAM (constant $\lambda$) on 
    \texttt{cube-triple-task1} and \texttt{humanoidmaze-medium-task1}, 
    consistent with Lemma~\ref{lem:exp_amp}: a fixed temperature can amplify 
    critic errors into policy deviations, while adaptation can mitigate this. 
    Shaded regions denote standard deviation across seeds.}
  \label{fig:mechanism}
  \vspace{-0.15in}
\end{figure}
\begin{figure}[t]
  \centering
  \includegraphics[width=0.98\linewidth]{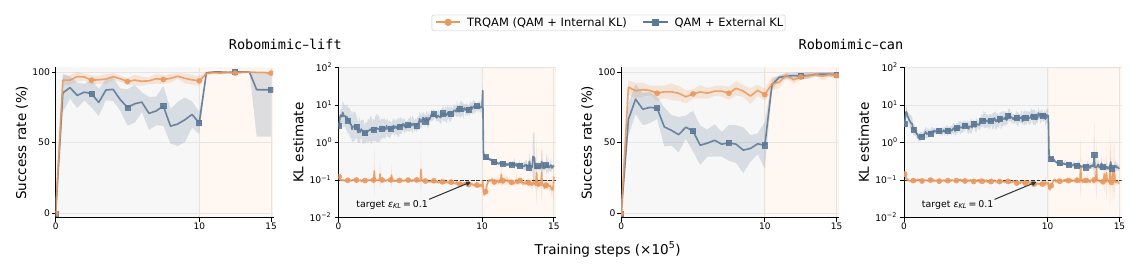}
  \vspace{-0.1in}
  \caption{\textbf{Internalization is necessary.} On \texttt{Robomimic-lift} \& \texttt{Robomimic-can} 
    with $\varepsilon_{\mathrm{KL}}=0.1$, TRQAM tightly tracks the target KL 
    bound throughout training, while QAM + External KL lets the realized KL 
    drift well above it, with corresponding success rate degradation. By 
    Theorem~\ref{thm:path_kl}, only the internal parameterization (TRQAM) 
    ties $\lambda$ to the realized KL through an exact identity; the external 
    loss penalty (QAM + External KL) can be overridden by strong critic guidance. 
    Shaded regions denote standard deviation across seeds.}
  \label{fig:internalization}
  \vspace{-0.17in}
\end{figure}
\begin{figure}[t]
  \centering
  \includegraphics[width=0.98\linewidth]{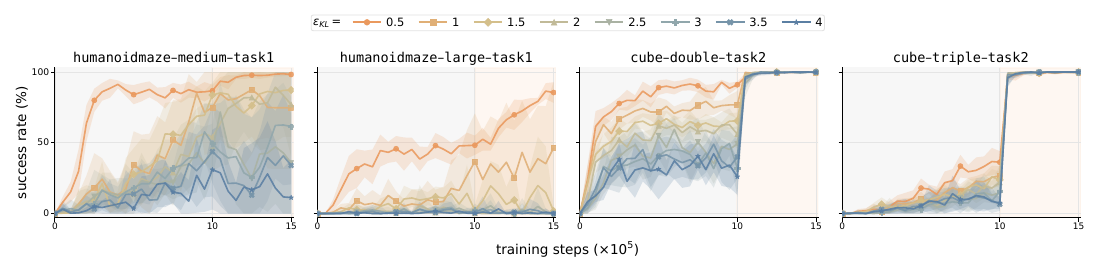}
  \vspace{-0.1in}
\caption{\textbf{Sensitivity Analysis.} Success-rate curves on four OGBench
tasks under varying KL budgets. Two patterns emerge. First, success rate 
changes smoothly with $\varepsilon_{\mathrm{KL}}$ on every task, making 
the budget a predictable knob. Second, tight budgets are best across all 
four tasks, with the optimum tracking task structure.}
  \label{fig:ogbench_klb_sensitivity_4tasks}
  \vspace{-0.2in}
\end{figure}

\vspace{0.01in}
{\bf 2. Internal KL outperforms external KL regularization.}
Among the two adaptive variants, only TRQAM enforces the prescribed KL budget. On \texttt{Robomimic-lift} and \texttt{Robomimic-can} 
with $\varepsilon_{\mathrm{KL}}=0.1$, TRQAM tightly tracks the bound throughout training, while QAM + External KL lets the 
realized KL drift above $\varepsilon_{\mathrm{KL}}$ with corresponding success-rate degradation 
(Figure~\ref{fig:internalization}). By Theorem~\ref{thm:path_kl}, 
only the internal parameterization ties $\lambda$ to the realized KL through an exact identity, whereas an external penalty enters 
only as an additive loss term that strong critic guidance can override. The same pattern holds across all three Robomimic tasks 
and six KL budgets (Appendix Figures 
\ref{fig:full_robomimic_lift_kl_compare_grid}, 
\ref{fig:full_robomimic_can_kl_compare_grid}, and 
\ref{fig:full_robomimic_square_kl_compare_grid}). These violations are consistent with Lemma~\ref{lem:exp_amp}'s exponential 
amplification of critic errors.

\subsection{Sensitivity Analysis}
\label{sec:exp_sensitivity}
Remark that $\varepsilon_{\mathrm{KL}}$ is the most important 
hyperparameter, and varying it produces predictable changes in 
success rate. We sweep $\varepsilon_{\mathrm{KL}}$ from $0.5$ to 
$4$ in steps of $0.5$ on four representative OGBench tasks 
(\texttt{humanoidmaze-medium-task1}, 
\texttt{humanoidmaze-large-task1}, \texttt{cube-double-task2}, 
\texttt{cube-triple-task2}), with the same training setup as 
Section~\ref{sec:exp_main}. 
Figure~\ref{fig:ogbench_klb_sensitivity_4tasks} shows two patterns. 
First, success rate changes smoothly with $\varepsilon_{\mathrm{KL}}$ 
on every task. Second, tight budgets are best across all four 
tasks. The similar pattern holds across most of the ten OGBench 
domains in Appendix~\ref{app:sensitivity}, with \texttt{puzzle-4x4} 
as the exception where larger budgets monotonically improve 
performance, in keeping with its notably larger state space. 
Because TRQAM tightly enforces the chosen budget 
(Section~\ref{sec:exp_mechanism}), tuning $\varepsilon_{\mathrm{KL}}$ 
to task structure produces predictable, controlled behavior, 
so adapting TRQAM to a new environment comes down to choosing 
$\varepsilon_{\mathrm{KL}}$ based on task structure.
\section{Related Work}
\label{sec:related}
{\bf Offline-to-online RL.}~
Pretraining a policy and value function on offline data and then 
fine-tuning online is a standard recipe for sample-efficient 
RL~\citep{ball2023efficientonlinereinforcementlearning,hester2017deepqlearningdemonstrations,hu2024imitationbootstrappedreinforcementlearning,kostrikov2021offline,kumar2020conservativeqlearningofflinereinforcement,lei2024unio4unifyingonlineoffline,luo2025serlsoftwaresuitesampleefficient,nair2020awac,nakamoto2024calqlcalibratedofflinerl,rajeswaran2018learningcomplexdexterousmanipulation,vecerik2018leveragingdemonstrationsdeepreinforcement}. 
A central challenge is the distribution shift at the transition, 
which destabilizes the value function and induces catastrophic 
forgetting~\citep{ball2023efficientonlinereinforcementlearning,nakamoto2024calqlcalibratedofflinerl,wolczyk2024fine}, 
motivating trust-region-style constraints during fine-tuning.

{\bf Fine-tuning flow and diffusion policies.}~
Flow matching and diffusion policies~\citep{chi2024diffusionpolicy,lipman2022flow} 
parameterize multi-modal action distributions and are increasingly 
pretrained at scale~\citep{bjorck2025gr00t,black2026pi0visionlanguageactionflowmodel, intelligence2025pi06vlalearnsexperience, black2025pi05}. 
A growing body of work trains such policies with 
RL~\citep{chen2026pitextttrlonlinerlfinetuning,dong2025expo,hansen2023idql,liu2025flowgrpotrainingflowmatching,fql_park2025,ren2024diffusionpolicypolicyoptimization,Wagenmaker2025DSRL,wang2023diffusionpoliciesexpressivepolicy,xiao2025selfimproving,xu2026rltokenbootstrappingonline,zhang2026reinflowfinetuningflowmatching,sacflow}, 
each navigating a tradeoff between policy expressivity, 
computational cost, and training stability.

{\bf KL trust regions in RL.}~
KL regularization stabilizes policy updates as a hard constraint 
or soft 
penalty~\citep{bdolmaleki2018maximumposterioripolicyoptimisation,peng2019advantageweightedregressionsimplescalable,schulman2015trust,schulman2017proximal,wu2019behaviorregularizedofflinereinforcement}, 
sometimes with strength adapted via dual updates 
(e.g., SAC~\citep{haarnoja2018soft} for entropy). These approaches 
enforce the constraint via auxiliary losses external to the policy.
\section{Conclusion}
\label{sec:conclusion}
\vspace{-0.05in}
We introduce \emph{Trust Region Q-Adjoint Matching} (TRQAM), a stable off-policy fine-tuning method for pretrained flow policies. TRQAM adapts a trust-region parameter $\lambda$ inside the SOC sampling dynamics: scaling the diffusion by $\sqrt{\lambda}$ makes the path-space KL an exact function of $\lambda$ via Girsanov (Theorem~\ref{thm:path_kl}), so dual descent on $\lambda$ enforces the target bound at the sampling level rather than through a loss-level penalty. Across 50 OGBench tasks, TRQAM improves the strongest baseline by 22 points, with the largest gains on long-horizon and combinatorial domains; on Robomimic, it remains stable where fixed-temperature adjoint matching collapses. In the spirit of TRPO and PPO for on-policy RL, we hope TRQAM offers an analogous trust-region stabilization for off-policy fine-tuning of pretrained flow policies.

\vspace{0.01in}
{\bf Limitations.} Computing the adjoint matching loss requires a vector-Jacobian product (VJP) through the velocity field at each step of the backward ODE; this VJP cost scales with model size.

\begin{ack}
This work was supported by Institute for Information \& communications Technology Planning \& Evaluation(IITP) grant funded by the Korea government(MSIT) (RS-2019-II190075, Artificial Intelligence Graduate School Program(KAIST)). We thank RLWRLD Inc. for providing compute resources to conduct experiments in this work.
\end{ack}
\bibliography{reference}

@String(ICML = {International Conference on Machine Learning})

@String(ECCV = {European Conference on Computer Vision})

@String(ICLR = {International Conference on Learning Representations})

@String(NeurIPS = {Advances in Neural Information Processing Systems})

@String(JMLR = {Journal of Machine Learning Research})

@String(CORL = {Conference on Robot Learning})

@String(RSS = {Robotics: Science and Systems})

@String(ICRA = {{IEEE} International Conference on Robotics and Automation})

@String(AAAI = {{AAAI} Conference on Artificial Intelligence})

@inproceedings{domingoenrich2023stochastic,
    author = {Carles Domingo-Enrich and Jiequn Han and Brandon Amos and Joan Bruna and Ricky T. Q. Chen},
    title = {Stochastic Optimal Control Matching},
    booktitle = NeurIPS,
    year = {2024}
}

@inproceedings{domingoenrich2025adjointmatchingfinetuningflow,
    author = {Carles Domingo-Enrich and Michal Drozdzal and Brian Karrer and Ricky T. Q. Chen},
    title = {Adjoint Matching: Fine-tuning Flow and Diffusion Generative Models with Memoryless Stochastic Optimal Control},
    booktitle = ICLR,
    year = {2025}
}

@article{nusken2023solving,
  author = {N{\"u}sken, Nikolas and Richter, Lorenz},
  title = {Solving high-dimensional {H}amilton--{J}acobi--{B}ellman PDEs using neural networks: perspectives from the theory of controlled diffusions and measures on path space},
  journal = {Partial differential equations and applications},
  volume = {2},
  pages = {1--48},
  year = {2021},
  publisher = {Springer}
}

@inproceedings{qam,
    author = {Qiyang Li and Sergey Levine},
    title  = {Q-learning with Adjoint Matching},
    booktitle = ICLR,
    year = {2026}
}

@inproceedings{fql_park2025,
  title = {Flow Q-Learning},
  author = {Seohong Park and Qiyang Li and Sergey Levine},
  booktitle = ICML,
  year = {2025},
}

@inproceedings{sacflow,
    author = {Yixian Zhang and Shu'ang Yu and Tonghe Zhang and Mo Guang and Haojia Hui and Kaiwen Long and Yu Wang and Chao Yu and Wenbo Ding},
    title = {SAC Flow: Sample-Efficient Reinforcement Learning of Flow-Based Policies via Velocity-Reparameterized Sequential Modeling}, 
    booktitle = ICLR,
    year = {2026}
}

@misc{intelligence2025pi06vlalearnsexperience,
    author = {Physical Intelligence and Ali Amin and Raichelle Aniceto and Ashwin Balakrishna and Kevin Black and Ken Conley and Grace Connors and James Darpinian and Karan Dhabalia and Jared DiCarlo and Danny Driess and Michael Equi and Adnan Esmail and Yunhao Fang and Chelsea Finn and Catherine Glossop and Thomas Godden and Ivan Goryachev and Lachy Groom and Hunter Hancock and Karol Hausman and Gashon Hussein and Brian Ichter and Szymon Jakubczak and Rowan Jen and Tim Jones and Ben Katz and Liyiming Ke and Chandra Kuchi and Marinda Lamb and Devin LeBlanc and Sergey Levine and Adrian Li-Bell and Yao Lu and Vishnu Mano and Mohith Mothukuri and Suraj Nair and Karl Pertsch and Allen Z. Ren and Charvi Sharma and Lucy Xiaoyang Shi and Laura Smith and Jost Tobias Springenberg and Kyle Stachowicz and Will Stoeckle and Alex Swerdlow and James Tanner and Marcel Torne and Quan Vuong and Anna Walling and Haohuan Wang and Blake Williams and Sukwon Yoo and Lili Yu and Ury Zhilinsky and Zhiyuan Zhou},
    title = {$\pi^{*}_{0.6}$: a VLA That Learns From Experience},
    url = {https://arxiv.org/abs/2511.14759},
    year = {2025}
}

@inproceedings{Wagenmaker2025DSRL,
  author    = {Andrew Wagenmaker and Mitsuhiko Nakamoto and Yunchu Zhang and Seohong Park and Waleed Yagoub and Anusha Nagabandi and Abhishek Gupta and Sergey Levine},
  title     = {Steering Your Diffusion Policy with Latent Space Reinforcement Learning},
  booktitle = CORL,
  year      = {2025},
}

@inproceedings{dong2025expo,
    author  = {Perry Dong and Qiyang Li and Dorsa Sadigh and Chelsea Finn},
    title   = {EXPO: Stable Reinforcement Learning with Expressive Policies},
    booktitle = ICLR,
    year = {2026}
}

@inproceedings{chi2024diffusionpolicy,
	author = {Cheng Chi and Zhenjia Xu and Siyuan Feng and Eric Cousineau and Yilun Du and Benjamin Burchfiel and Russ Tedrake and Shuran Song},
	title = {Diffusion Policy: Visuomotor Policy Learning via Action Diffusion},
	booktitle = RSS,
	year = {2023},
}

@inproceedings{fujimoto2018addressingfunctionapproximationerror,
    author = {Scott Fujimoto and Herke van Hoof and David Meger},
    title = {Addressing Function Approximation Error in Actor-Critic Methods},
    booktitle = ICML,
    year = {2018}
}

@inproceedings{kumar2020conservativeqlearningofflinereinforcement,
    author = {Aviral Kumar and Aurick Zhou and George Tucker and Sergey Levine},
    title = {Conservative Q-Learning for Offline Reinforcement Learning},
    booktitle = NeurIPS,
    year = {2020}
}

@inproceedings{kostrikov2021offline,
    author = {Ilya Kostrikov and Ashvin Nair and Sergey Levine},
    title = {Offline Reinforcement Learning with Implicit Q-Learning},
    booktitle = ICLR,
    year = {2022}
}

@inproceedings{schulman2015trust,
    author = {John Schulman and Sergey Levine and Philipp Moritz and Michael I. Jordan and Pieter Abbeel},
    title = {Trust region policy optimization},
    booktitle = ICML,
    year = {2015}
}

@misc{schulman2017proximal,
    title = {Proximal policy optimization algorithms},
    author = {Schulman, John and Wolski, Filip and Dhariwal, Prafulla and Radford, Alec and Klimov, Oleg},
    url = {https://arxiv.org/abs/1707.06347},
    year = {2017}
}

@inproceedings{haarnoja2018soft,
    author = {Haarnoja, Tuomas and Zhou, Aurick and Abbeel, Pieter and Levine, Sergey},
    title = {Soft actor-critic: Off-policy maximum entropy deep reinforcement learning with a stochastic actor},
    booktitle = ICML,
    year = {2018}
}

@book{sutton2018reinforcement,
  title = {Reinforcement learning: An introduction},
  author = {Sutton, Richard S and Barto, Andrew G},
  year = {2018},
  publisher = {MIT press}
}

@inproceedings{kim2026deas,
    title = {DEAS: DEtached value learning with Action Sequence for Scalable Offline RL},
    author = {Changyeon Kim and Haeone Lee and Younggyo Seo and Kimin Lee and Yuke Zhu},
    booktitle = ICLR,
    year = {2026},
}

@inproceedings{peng2019advantageweightedregressionsimplescalable,
    author={Xue Bin Peng and Aviral Kumar and Grace Zhang and Sergey Levine},
    title={Advantage-Weighted Regression: Simple and Scalable Off-Policy Reinforcement Learning}, 
    booktitle = ICLR,
    year = {2021}
}

@inproceedings{wu2019behaviorregularizedofflinereinforcement,
    author={Yifan Wu and George Tucker and Ofir Nachum},
    title={Behavior Regularized Offline Reinforcement Learning}, 
    booktitle = ICLR,
    year = {2020}
}

@inproceedings{ren2024diffusionpolicypolicyoptimization,
    author={Allen Z. Ren and Justin Lidard and Lars L. Ankile and Anthony Simeonov and Pulkit Agrawal and Anirudha Majumdar and Benjamin Burchfiel and Hongkai Dai and Max Simchowitz},
    title={Diffusion Policy Policy Optimization}, 
    booktitle = ICLR,
    year = {2025}
}

@inproceedings{wang2023diffusionpoliciesexpressivepolicy,
    author={Zhendong Wang and Jonathan J Hunt and Mingyuan Zhou},
    title={Diffusion Policies as an Expressive Policy Class for Offline Reinforcement Learning},
    booktitle = ICLR,
    year = 2023
}

@inproceedings{zhang2026reinflowfinetuningflowmatching,
    author={Tonghe Zhang and Chao Yu and Sichang Su and Yu Wang},
    title={ReinFlow: Fine-tuning Flow Matching Policy with Online Reinforcement Learning}, 
    booktitle = NeurIPS,
    year = {2025}
}

@misc{chen2026pitextttrlonlinerlfinetuning,
    author={Kang Chen and Zhihao Liu and Tonghe Zhang and Zhen Guo and Si Xu and Hao Lin and Hongzhi Zang and Xiang Li and Quanlu Zhang and Zhaofei Yu and Guoliang Fan and Tiejun Huang and Yu Wang and Chao Yu},
    title={$\pi_\texttt{RL}$: Online RL Fine-tuning for Flow-based Vision-Language-Action Models}, 
    url={https://arxiv.org/abs/2510.25889}, 
    year = {2026}
}

@inproceedings{nair2020awac,
    author={Nair, Ashvin and Gupta, Abhishek and Dalal, Murtaza and Levine, Sergey},
    title={Awac: Accelerating online reinforcement learning with offline datasets},
    booktitle = ICLR,
    year = {2021}
}

@inproceedings{ball2023efficientonlinereinforcementlearning,
    author={Philip J. Ball and Laura Smith and Ilya Kostrikov and Sergey Levine},
    title={Efficient Online Reinforcement Learning with Offline Data}, 
    booktitle = ICML,
    year = {2023}
}

@inproceedings{nakamoto2024calqlcalibratedofflinerl,
    author={Mitsuhiko Nakamoto and Yuexiang Zhai and Anikait Singh and Max Sobol Mark and Yi Ma and Chelsea Finn and Aviral Kumar and Sergey Levine},
    title={Cal-QL: Calibrated Offline RL Pre-Training for Efficient Online Fine-Tuning}, 
    booktitle = NeurIPS,
    year = {2023}
}

@inproceedings{hu2024imitationbootstrappedreinforcementlearning,
    author={Hengyuan Hu and Suvir Mirchandani and Dorsa Sadigh},
    title={Imitation Bootstrapped Reinforcement Learning},
    booktitle = ICLR,
    year = {2024}
}

@inproceedings{luo2025serlsoftwaresuitesampleefficient,
    author={Jianlan Luo and Zheyuan Hu and Charles Xu and You Liang Tan and Jacob Berg and Archit Sharma and Stefan Schaal and Chelsea Finn and Abhishek Gupta and Sergey Levine},
    title={SERL: A Software Suite for Sample-Efficient Robotic Reinforcement Learning}, 
    booktitle = ICRA,
    year = {2024}
}

@inproceedings{rajeswaran2018learningcomplexdexterousmanipulation,
    author={Aravind Rajeswaran and Vikash Kumar and Abhishek Gupta and Giulia Vezzani and John Schulman and Emanuel Todorov and Sergey Levine},
    title={Learning Complex Dexterous Manipulation with Deep Reinforcement Learning and Demonstrations}, 
    booktitle = RSS,
    year = {2018}
}

@misc{vecerik2018leveragingdemonstrationsdeepreinforcement,
    title={Leveraging Demonstrations for Deep Reinforcement Learning on Robotics Problems with Sparse Rewards}, 
    author={Mel Vecerik and Todd Hester and Jonathan Scholz and Fumin Wang and Olivier Pietquin and Bilal Piot and Nicolas Heess and Thomas Rothörl and Thomas Lampe and Martin Riedmiller},
    url={https://arxiv.org/abs/1707.08817}, 
    year={2018}
}

@inproceedings{hester2017deepqlearningdemonstrations,
    author={Todd Hester and Matej Vecerik and Olivier Pietquin and Marc Lanctot and Tom Schaul and Bilal Piot and Dan Horgan and John Quan and Andrew Sendonaris and Gabriel Dulac-Arnold and Ian Osband and John Agapiou and Joel Z. Leibo and Audrunas Gruslys},
    title={Deep Q-learning from Demonstrations}, 
    booktitle = AAAI,
    year = {2018}
}

@inproceedings{lei2024unio4unifyingonlineoffline,
    author={Kun Lei and Zhengmao He and Chenhao Lu and Kaizhe Hu and Yang Gao and Huazhe Xu},
    title={Uni-O4: Unifying Online and Offline Deep Reinforcement Learning with Multi-Step On-Policy Optimization}, 
    booktitle = ICLR,
    year = {2024}
}

@inproceedings{wolczyk2024fine,
  title={Fine-tuning Reinforcement Learning Models is Secretly a Forgetting Mitigation Problem},
  author={Wo{\l}czyk, Maciej and Cupia{\l}, Bart{\l}omiej and Ostaszewski, Mateusz and Bortkiewicz, Micha{\l} and Zaj{\k{a}}c, Micha{\l} and Pascanu, Razvan and Kuci{\'n}ski, {\L}ukasz and Mi{\l}o{\'s}, Piotr},
  booktitle=ICML,
  year={2024}
}

@inproceedings{lipman2022flow,
    author = {Lipman, Yaron and Chen, Ricky TQ and Ben-Hamu, Heli and Nickel, Maximilian and Le, Matt},
    title = {Flow matching for generative modeling},
    booktitle = ICLR,
    year = {2023}
}

@inproceedings{albergo2023stochastic,
    author = {Albergo, Michael S and Boffi, Nicholas M and Vanden-Eijnden, Eric},
    title = {Stochastic interpolants: A unifying framework for flows and diffusions},
    booktitle = JMLR,
    year = {2025}
}

@inproceedings{xiao2025selfimproving,
    author = {Wenli Xiao and Haotian Lin and Andy Peng and Haoru Xue and Tairan He and Yuqi Xie and Fengyuan Hu and Jimmy Wu and Zhengyi Luo and Linxi "Jim" Fan and Guanya Shi and Yuke Zhu},
    title = {Self-Improving Vision-Language-Action Models with Data Generation via Residual RL},
    booktitle = ICLR,
    year = {2026}
}

@inproceedings{dhariwal2021diffusion,
    author = {Dhariwal, Prafulla and Nichol, Alexander},
    title = {Diffusion models beat gans on image synthesis},
    booktitle = NeurIPS,
    year = {2021}
}

@misc{hansen2023idql,
    author = {Hansen-Estruch, Philippe and Kostrikov, Ilya and Janner, Michael and Kuba, Jakub Grudzien and Levine, Sergey},
    title = {IDQL: Implicit Q-Learning as an Actor-Critic Method with Diffusion Policies},
    url = {https://arxiv.org/abs/2304.10573},
    year = {2023}
}

@inproceedings{myers2025offline,
    author = {Vivek Myers and Bill Zheng and Benjamin Eysenbach and Sergey Levine},
    title = {Offline Goal-conditioned Reinforcement Learning with Quasimetric Representations},
    booktitle = NeurIPS,
    year = {2025}
}

@incollection{parsian2002estimation,
  title = {Estimation under LINEX loss function},
  author = {Parsian, Ahmad and Kirmani, SNUA},
  booktitle = {Handbook of applied econometrics and statistical inference},
  pages = {75--98},
  year = {2002},
  publisher = {CRC Press}
}

@inproceedings{ma2024sit,
    author = {Ma, Nanye and Goldstein, Mark and Albergo, Michael S and Boffi, Nicholas M and Vanden-Eijnden, Eric and Xie, Saining},
    title = {Sit: Exploring flow and diffusion-based generative models with scalable interpolant transformers},
    booktitle = ECCV,
    year = {2024}
}

@inproceedings{bdolmaleki2018maximumposterioripolicyoptimisation,
    author={Abbas Abdolmaleki and Jost Tobias Springenberg and Yuval Tassa and Remi Munos and Nicolas Heess and Martin Riedmiller},
    title={Maximum a Posteriori Policy Optimisation}, 
    booktitle = ICLR,
    year = {2018}
}

@inproceedings{liu2022flowstraightfastlearning,
    author={Xingchao Liu and Chengyue Gong and Qiang Liu},
    title={Flow Straight and Fast: Learning to Generate and Transfer Data with Rectified Flow},
    booktitle = ICLR,
    year = {2023}
}

@misc{bjorck2025gr00t,
    title = {GR00T N1: An Open Foundation Model for Generalist Humanoid Robots},
    author = {Bjorck, Johan and Casta{\~n}eda, Fernando and Cherniadev, Nikita and Da, Xingye and Ding, Runyu and Fan, Linxi and Fang, Yu and Fox, Dieter and Hu, Fengyuan and Huang, Spencer and others},
    url = "https://arxiv.org/abs/2503.14734",
    year = "2025"
}

@inproceedings{black2025pi05,
    author={Physical Intelligence and Kevin Black and Noah Brown and James Darpinian and Karan Dhabalia and Danny Driess and Adnan Esmail and Michael Equi and Chelsea Finn and Niccolo Fusai and Manuel Y. Galliker and Dibya Ghosh and Lachy Groom and Karol Hausman and Brian Ichter and Szymon Jakubczak and Tim Jones and Liyiming Ke and Devin LeBlanc and Sergey Levine and Adrian Li-Bell and Mohith Mothukuri and Suraj Nair and Karl Pertsch and Allen Z. Ren and Lucy Xiaoyang Shi and Laura Smith and Jost Tobias Springenberg and Kyle Stachowicz and James Tanner and Quan Vuong and Homer Walke and Anna Walling and Haohuan Wang and Lili Yu and Ury Zhilinsky},
    title = {$\pi_{0.5}$: a Vision-Language-Action Model with Open-World Generalization},
    booktitle = CORL,
    year = {2025}
}

@inproceedings{li2025reinforcementlearningactionchunking,
    author = {Qiyang Li and Zhiyuan Zhou and Sergey Levine},
    title = {Reinforcement Learning with Action Chunking},
    booktitle = NeurIPS,
    year = {2025}
}

@inproceedings{black2026pi0visionlanguageactionflowmodel,
    author={Kevin Black and Noah Brown and Danny Driess and Adnan Esmail and Michael Equi and Chelsea Finn and Niccolo Fusai and Lachy Groom and Karol Hausman and Brian Ichter and Szymon Jakubczak and Tim Jones and Liyiming Ke and Sergey Levine and Adrian Li-Bell and Mohith Mothukuri and Suraj Nair and Karl Pertsch and Lucy Xiaoyang Shi and James Tanner and Quan Vuong and Anna Walling and Haohuan Wang and Ury Zhilinsky},
    title={$\pi_0$: A Vision-Language-Action Flow Model for General Robot Control}, 
    booktitle = RSS,
    year = {2025}
}

@misc{xu2026rltokenbootstrappingonline,
    title={RL Token: Bootstrapping Online RL with Vision-Language-Action Models}, 
    author={Charles Xu and Jost Tobias Springenberg and Michael Equi and Ali Amin and Adnan Esmail and Sergey Levine and Liyiming Ke},
    url={https://arxiv.org/abs/2604.23073}, 
    year={2026}
}

@inproceedings{liu2025flowgrpotrainingflowmatching,
    author={Jie Liu and Gongye Liu and Jiajun Liang and Yangguang Li and Jiaheng Liu and Xintao Wang and Pengfei Wan and Di Zhang and Wanli Ouyang},
    title={Flow-GRPO: Training Flow Matching Models via Online RL}, 
    booktitle = NeurIPS,
    year = {2025}
}

@misc{shukor2025smolvla,
      title={SmolVLA: A Vision-Language-Action Model for Affordable and Efficient Robotics}, 
      author={Mustafa Shukor and Dana Aubakirova and Francesco Capuano and Pepijn Kooijmans and Steven Palma and Adil Zouitine and Michel Aractingi and Caroline Pascal and Martino Russi and Andres Marafioti and Simon Alibert and Matthieu Cord and Thomas Wolf and Remi Cadene},
      year={2025},
      url={https://arxiv.org/abs/2506.01844}
}

@inproceedings{
zheng2026xvla,
title={X-{VLA}: Soft-Prompted Transformer as Scalable Cross-Embodiment Vision-Language-Action Model},
author={Jinliang Zheng and Jianxiong Li and Zhihao Wang and Dongxiu Liu and Xirui Kang and Yuchun Feng and Yinan Zheng and Jiayin Zou and Yilun Chen and Jia Zeng and Tai Wang and Ya-Qin Zhang and Jingjing Liu and Xianyuan Zhan},
booktitle=ICLR,
year={2026}
}

@inproceedings{
  reuss2025flower,
  title={{FLOWER}: Democratizing Generalist Robot Policies with Efficient Vision-Language-Flow Models},
  author={Moritz Reuss and Hongyi Zhou and Marcel R{\"u}hle and {\"O}mer Erdin{\c{c}} Ya{\u{g}}murlu and Fabian Otto and Rudolf Lioutikov},
  booktitle=CORL,
  year={2025}
}

@misc{zhai2025walloss,
      title={Igniting VLMs toward the Embodied Space}, 
      author={Andy Zhai and Brae Liu and Bruno Fang and Chalse Cai and Ellie Ma and Ethan Yin and Hao Wang and Hugo Zhou and James Wang and Lights Shi and Lucy Liang and Make Wang and Qian Wang and Roy Gan and Ryan Yu and Shalfun Li and Starrick Liu and Sylas Chen and Vincent Chen and Zach Xu},
      year={2025},
      url={https://arxiv.org/abs/2509.11766}
}

@misc{hung2025nora15,
      title={NORA-1.5: A Vision-Language-Action Model Trained using World Model- and Action-based Preference Rewards}, 
      author={Chia-Yu Hung and Navonil Majumder and Haoyuan Deng and Liu Renhang and Yankang Ang and Amir Zadeh and Chuan Li and Dorien Herremans and Ziwei Wang and Soujanya Poria},
      year={2025},
      url={https://arxiv.org/abs/2511.14659}
}

@misc{jiang2025galaxea,
      title={Galaxea Open-World Dataset and G0 Dual-System VLA Model}, 
      author={Tao Jiang and Tianyuan Yuan and Yicheng Liu and Chenhao Lu and Jianning Cui and Xiao Liu and Shuiqi Cheng and Jiyang Gao and Huazhe Xu and Hang Zhao},
      year={2025},
      url={https://arxiv.org/abs/2509.00576}
}

@inproceedings{ogbench_park2025,
    title = {OGBench: Benchmarking Offline Goal-Conditioned RL},
    author = {Park, Seohong and Frans, Kevin and Eysenbach, Benjamin and Levine, Sergey},
    booktitle = ICLR,
    year = {2025},
}

@inproceedings{robomimic2021,
    author = {Mandlekar, Ajay and Xu, Danfei and Wong, Josiah and Nasiriany, Soroush and Wang, Chen and Kulkarni, Rohun and Fei-Fei, Li and Savarese, Silvio and Zhu, Yuke and Mart\'in-Mart\'in, Roberto},
    title = {What Matters in Learning from Offline Human Demonstrations for Robot Manipulation},
    booktitle = CORL,
    year = {2021}
}

@book{polyanskiy2025information,
    title={Information Theory: From Coding to Learning},
    author={Polyanskiy, Yury and Wu, Yihong},
    publisher={Cambridge University Press},
    year={2025}
}

@book{oksendal2003stochastic,
  title={Stochastic differential equations},
  author={{\O}ksendal, Bernt},
  year={2003},
  publisher={Springer}
}
\newpage
\appendix
\pagestyle{empty}
\newpage

\section*{Contents}
\startcontents[appendices]
\printcontents[appendices]{}{1}{\setcounter{tocdepth}{2}}

\newpage

\section{Algorithm}
\label{app:algorithm}
\begin{algorithm}[H]
\caption{Trust Region Q-Adjoint Matching (TRQAM) for fine-tuning Flow Matching policies.
\textcolor{GoogleBlue}{Blue} marks TRQAM additions over QAM~\citep{qam}.}
\label{alg_full:trqam}
\begin{algorithmic}
\State \textbf{Input:} replay buffer $\mathcal{D}$;
$v^{\mathrm{base}}$: pretrained (behavior) velocity field;
$v^{\mathrm{ft}}_\theta$: fine-tuned velocity field;
$Q_\phi$: critic function;
step size $h$;
\textcolor{GoogleBlue}{KL budget $\varepsilon_{\mathrm{KL}}$;
dual stepsize $\eta_\lambda$;
EMA coefficient $\rho$};
fine-tuning iterations $N$.
\State \textbf{Initialize:}
$v^{\mathrm{ft}}_\theta \gets v^{\mathrm{base}}$ with parameters $\theta$;
\textcolor{GoogleBlue}{$\lambda_0 > 0$;
$\overline{D}_0 \gets 0$}.
\State \textbf{Memoryless SDE:}
\textcolor{GoogleBlue}{$\sqrt{\lambda_n}\,\sigma_n(\tau) = g(\tau)$
for all $n$, where $g(\tau) := \sqrt{2(1-\tau)/\tau}$}
\vspace{2pt}
\For{$n \in \{0, \dots, N-1\}$}
    \State Sample a batch $\mathcal{B} = \{(s_i, a_i, r_i, s'_i)\}$ from $\mathcal{D}$
    \vspace{4pt}
    \State \textbf{Critic update:}
    Optimize $\phi$ w.r.t.
    \begin{equation}
        \mathcal{L}(\phi)
        = \frac{1}{|\mathcal{B}|} \sum_{(s,a,r,s') \in \mathcal{B}}
          \left[
            Q_\phi(s, a) - r - \gamma\, Q_{\bar{\phi}}(s',\, a' \sim \pi_\theta(\cdot \mid s'))
          \right]^2
        \quad \triangleright~\text{TD backup}
    \end{equation}
    \vspace{2pt}
    \State \textbf{Policy update:}
    For each state $s \in \mathcal{B}$, sample a trajectory
    $\boldsymbol{X} = (X_\tau)_{\tau \in \{0, h, \dots, 1\}}$ via the memoryless Euler scheme:
    \begin{equation}
        X_{\tau + h}
        = X_\tau
        + h\!\left(2 v^{\mathrm{ft}}_\theta(s, X_\tau, \tau)
                  - \frac{1}{\tau} X_\tau\right)
        + \sqrt{h}\,
          \underbrace{\textcolor{GoogleBlue}{\sqrt{\lambda_n}}\,\sigma_n(\tau)}_{=\,g(\tau)}\,
          \varepsilon_\tau,
        \quad
        \varepsilon_\tau, X_0 \sim \mathcal{N}(0, I).
    \end{equation}
    \State Compute the critic's action gradient:
    $\tilde{a}_1 \gets -\nabla_{X_1} Q_\phi(s, X_1)$.
    \State Solve the lean adjoint ODE backwards:
    \begin{equation}
        \tilde a_{\tau - h}
        = \tilde a_\tau
        + h\,\tilde a_\tau^{\!\top}
          \nabla_{X_\tau}\!\left(2 v^{\mathrm{base}}(s, X_\tau, \tau)
                                - \frac{1}{\tau} X_\tau\right).
    \end{equation}
    \State Stop gradient: $X_\tau \gets \mathtt{stopgrad}(X_\tau)$,
    $\tilde a_\tau \gets \mathtt{stopgrad}(\tilde a_\tau)$.
    \State Optimize $\theta$ w.r.t. the adjoint matching objective:
    \begin{equation}
        \mathcal{L}_{\mathrm{Adj\text{-}Match}}(\theta)
        = \frac{1}{|\mathcal{B}|}\!\sum_{s \in \mathcal{B}}\sum_{\tau}
          \left\|
              \tfrac{2}{\textcolor{GoogleBlue}{\sigma_n(\tau)}}
              \big(v^{\mathrm{ft}}_\theta(s, X_\tau, \tau)
                   - v^{\mathrm{base}}(s, X_\tau, \tau)\big)
              + \textcolor{GoogleBlue}{\sigma_n(\tau)}\,\tilde a_\tau
          \right\|^2
    \end{equation}
    \vspace{2pt}
    \State \textcolor{GoogleBlue}{\textbf{Trust region update:}}
    \State \textcolor{GoogleBlue}{Estimate path-space KL surrogate:}
    \begin{equation}
        \textcolor{GoogleBlue}{
            \widehat{D}_n
            = \frac{1}{|\mathcal{B}|}\!\sum_{s \in \mathcal{B}}\sum_{\tau}
              \tfrac{2h}{g(\tau)^2}
              \big\|v^{\mathrm{ft}}_\theta(s, X_\tau, \tau) - v^{\mathrm{base}}(s, X_\tau, \tau)\big\|^2
        }
    \end{equation}
    \State \textcolor{GoogleBlue}{EMA smoothing:
    $\overline{D}_n \gets (1-\rho)\,\overline{D}_{n-1} + \rho\,\widehat{D}_n$.}
    \State \textcolor{GoogleBlue}{Dual descent:
    $\lambda_{n+1} \gets \max\!\big\{0,\ \lambda_n + \eta_\lambda(\overline{D}_n - \varepsilon_{\mathrm{KL}})\big\}$.}
\EndFor
\State \textbf{Output:} fine-tuned velocity field $v^{\mathrm{ft}}_\theta$, critic $Q_\phi$.
\end{algorithmic}
\end{algorithm}

\section{Baselines}
\label{appendix:baselines}
We compare against six baselines spanning distinct fine-tuning paradigms 
for flow-matching policies. Throughout, $(s, a, r, s')$ are sampled 
uniformly from the replay buffer $D$ without re-weighting; $D$ contains 
the offline dataset during offline training and is augmented with online 
rollouts during online fine-tuning.

\paragraph{FQL~\citep{fql_park2025}.}
FQL distills a multi-step flow policy into a one-step policy to avoid 
backpropagation through time. Conditioned on state $s$, the behavior-cloning 
rollout is given by the ODE
\begin{align}
    dX_\tau= v_\theta(X_\tau, \tau; s)d\tau, \quad X_0 \sim \mathcal{N}(0, I), \quad \tau \in [0, 1]\text{,}
\end{align}
and we define $\mathrm{ODE}(v_\theta, s, X_0) := X_1$ as its terminal value. 
The one-step policy $\pi_\omega(s, X_0)$ is trained jointly with $v_\theta$ 
to maximize the critic while staying close to this rollout:
\begin{align}
    \mathcal L_{\mathrm{onestep}}(\omega) = \mathbb{E}_{X_0 \sim \mathcal{N}}\!\Big[
    \underbrace{-\, Q(s, \pi_\omega(s, X_0))}_{\text{RL maximization}}
    \;+\;
    \underbrace{\alpha\, \|\pi_\omega(s, X_0) - \mathrm{ODE}(v_\theta, s, X_0)\|_2^2}_{\text{BC distillation}}
    \Big]\text{,}
\end{align}
where $\alpha$ controls how closely $\pi_\omega$ stays to the BC rollout. 
The environment policy is $\pi_\omega$.

\paragraph{CGQL-Linex.}
CGQL-Linex is a baseline introduced in~\citet{qam}, combining a BC velocity 
field with classifier-free guidance~\citep{dhariwal2021diffusion} from a 
$Q$-function. It trains an auxiliary intermediate critic 
$Q_\psi(s, X_\tau, \tau)$ on intermediate noisy actions $X_\tau$, used to 
construct a guidance velocity that steers sampling toward the 
entropy-regularized optimal policy 
$\pi^\star(\cdot \mid s) \propto e^{\beta Q_\phi(s, \cdot)}$. The guidance 
velocity is
\begin{align}
    \hat{v}_\psi(X_\tau, \tau; s) := \frac{(1 - \tau)\beta\,\nabla_{X_\tau} Q_\psi(s, X_\tau, \tau) + X_\tau}{\tau}\text{,}
\end{align}
where $X_\tau = (1-\tau) X_0 + \tau X_1$ with $X_0 \sim \mathcal{N}(0, I)$. 
The final sampling velocity is $v = v^{\mathrm{base}} + w\, \hat{v}_\psi$, 
where $w$ modulates guidance strength. The intermediate critic $Q_\psi$ is 
trained via a Linex regression~\citep{parsian2002estimation, myers2025offline}:
\begin{align}
    \mathcal L_{\mathrm{Linex}}(\psi) = \mathbb{E}_{\tau, X_0}\!\left[
    \exp\bigl(\beta(Q_\phi(s, X_1) - Q_\psi(s, X_\tau, \tau))\bigr) + \beta Q_\psi(s, X_\tau, \tau)\right]\text{,}
\end{align}
while the standard critic $Q_\phi$ is trained via standard TD with target 
actions sampled from the full velocity $v$:
\begin{align}
    \mathcal L_{\mathrm{TD}}(\phi) = \mathbb{E}\!\left[
    \bigl(Q_\phi(s, a) - r - \gamma\, Q_{\bar\phi}(s', \mathrm{ODE}(v, s', X_0))\bigr)^2
    \right]\text{.}
\end{align}
We follow~\citet{qam} in using a Huber-style stabilization of the Linex 
loss to prevent exponential blow-up.

\paragraph{DSRL~\citep{Wagenmaker2025DSRL}.}
DSRL performs RL directly in the noise space of a frozen flow policy. 
It trains a one-step Gaussian noise-space policy $\pi_\omega(X_0 \mid s)$ 
via SAC, maximizing a noise-space critic $Q_\psi(s, X_0)$ that regresses 
to the original action-space critic:
\begin{align}
    L(\psi) = \mathbb{E}_{X_0 \sim \mathcal{N}}\!\left[
    \bigl(Q_\psi(s, X_0) - Q_\phi(s, \mathrm{ODE}(v_{\bar\theta}, s, X_0))\bigr)^2
    \right]\text{.}
\end{align}
At inference, actions are obtained by sampling $X_0 \sim \pi_\omega(\cdot \mid s)$ 
and pushing it through the flow policy: $a = \mathrm{ODE}(v_\theta, s, X_0)$. 
Following~\citet{qam}, we modify the original DSRL to also fine-tune the 
BC velocity online (using a target network $v_{\bar\theta}$ for stability), 
which yields stronger offline-to-online performance.

\paragraph{IFQL.}
IFQL is the flow counterpart of implicit diffusion 
$Q$-learning~\citep{hansen2023idql}, considered as a baseline 
in~\citet{fql_park2025}. Value learning uses IQL-style expectile 
regression~\citep{kostrikov2021offline}: a value network $V_\xi$ is 
trained to fit an upper expectile of the critic, and the critic is 
bootstrapped through $V_\xi$:
\begin{align}
    \mathcal L_V(\xi) &= \mathbb{E}\!\left[L_2^\kappa\bigl(Q_{\bar\phi}(s, a) - V_\xi(s)\bigr)\right]\text{,} \\
    \mathcal L_Q(\phi) &= \mathbb{E}\!\left[\bigl(r + \gamma V_\xi(s') - Q_\phi(s, a)\bigr)^2\right]\text{,}
\end{align}
where $L_2^\kappa(u) = |\kappa - \mathbf{1}(u < 0)|\, u^2$ is the 
expectile loss with parameter $\kappa \in (0.5, 1)$. Policy extraction 
uses rejection sampling: $N$ candidate actions are drawn from a BC flow 
policy by sampling $X_0^{(i)} \sim \mathcal{N}(0, I)$ and computing 
$a^{(i)} = \mathrm{ODE}(v_\theta, s, X_0^{(i)})$ for $i = 1, \dots, N$, 
and the action with the highest $Q_\phi$ value is selected.

\paragraph{QAM and QAM-E~\citep{qam}.}
QAM is the closest prior work to TRQAM: it solves a memoryless SOC problem 
analogous to~\eqref{eq:prelim_soc} but \emph{without} the $\sqrt{\lambda}$ 
scaling on the diffusion coefficient (i.e., the special case $\lambda=1$ of 
our framework), with an inverse temperature $\beta$ applied to the terminal 
reward $\beta\, Q_\phi(s, X_1)$. The fine-tuned velocity $v_\theta$ is 
trained against the BC velocity $v^{\mathrm{base}}$ through the lean 
adjoint matching loss
\begin{align}
    L_{\mathrm{AM}}(\theta) = \mathbb{E}\!\left[\int_0^1 \left\|\frac{2\bigl(v_\theta(X_\tau, \tau; s) - v^{\mathrm{base}}(X_\tau, \tau; s)\bigr)}{\sigma(\tau)} + \sigma(\tau)\, \tilde{a}_\tau\right\|_2^2 d\tau\right]\text{,}
\end{align}
where $\tilde{a}_\tau$ is the lean adjoint state with terminal condition 
$\tilde{a}_1 = -\beta\, \nabla_{X_1} Q_\phi(s, X_1)$, and $X_\tau$ follows 
the memoryless SDE in~\eqref{eq:prelim_controlled_sde} with $\lambda=1$. 
Element-wise gradient clipping is applied for numerical stability. QAM-E 
augments QAM with an additional residual edit policy 
$\pi_\omega(\Delta a \mid s, \tilde{a})$ that perturbs the QAM-generated 
action $\tilde{a}$ by at most $\sigma_a$ in $L_\infty$ distance (enforced 
by a tanh-squashed Gaussian), trained via entropy-regularized SAC with 
automatic entropy tuning. Both QAM and QAM-E share their BC velocity, 
critic, and inner adjoint solver with TRQAM; the only difference from 
TRQAM is that $\beta$ is fixed rather than adapted via projected dual 
descent on $\lambda$.

\section{Experimental details}
\label{app:experiments}

\subsection{Domains and tasks}
\label{app:domains}
We evaluate on two benchmarks: OGBench~\citep{ogbench_park2025} and
Robomimic~\citep{robomimic2021}. 

OGBench is a recent offline goal-conditioned RL benchmark. While OGBench originally designed for offline-goal-conditioned-RL, we use its reward based single-task variants. From OGBench, we use $10$ domains spanning long-horizon navigation, multi-object manipulation, and combinatorial planning. Abbreviations: \texttt{scene}, \texttt{puzzle-3x3} (\texttt{p33}), \texttt{puzzle-4x4} (\texttt{p44}), \texttt{cube-double} (\texttt{c2}), \texttt{cube-triple} (\texttt{c3}), \texttt{cube-quadruple} (\texttt{c4}), \texttt{humanoidmaze-medium} (\texttt{hm}), \texttt{humanoidmaze-large} (\texttt{hl}), \texttt{antmaze-large} (\texttt{al}), and \texttt{antmaze-giant} (\texttt{ag}).

Robomimic is a demonstration based manipulation benchmark used as a stability stress-test. We use the \texttt{lift}, \texttt{can}, and \texttt{square} tasks.

The dataset size, episode length, and action dimension for each domain are reported in
\cref{tab:metadata}. For each method and task, we run $8$ random seeds.
Unless otherwise stated, tables report mean success rate $\pm$ standard
deviation across seeds, and plots show the mean with shaded regions
denoting standard deviation.

\paragraph{Dataset structure.}
Unlike teleoperation-style benchmarks where each demonstration directly 
solves the target task, OGBench's offline data is task-agnostic: 
\texttt{navigate} datasets capture free maze exploration and \texttt{play} 
datasets capture unstructured object manipulation. The BC-pretrained 
policy thus serves as a behavioral prior rather than a task-specific 
solution.

\paragraph{Dataset sources.}
We use the official OGBench datasets~\citep{ogbench_park2025} for all
domains except where noted below. For \texttt{cube-triple-10M-*} and \texttt{puzzle-4x4-10M-*}, we
use a 10M-size subset of the official 100M release. The 100M release
is split into 100 files of 1M transitions each, and we take the
first 10 files sorted by name, following \citep{kim2026deas}. For
\texttt{antmaze-giant-10M-*}, OGBench does not release a pretrained
dataset at the size used in our experiments, so we generate it
ourselves using the official OGBench data-generation pipeline with
default settings. For Robomimic, we use the Multi-Human (MH) datasets, each consisting of
$300$ trajectories collected by six operators of varying proficiency
(two ``worse'', two ``okay'', and two ``better''), yielding diverse
mixed-quality demonstrations~\citep{robomimic2021}.

\subsection{Compute resources}
\label{app:compute}
\paragraph{Hardware.} Experiments ran on an internal heterogeneous GPU cluster. The two dominant GPU types were NVIDIA GeForce RTX 2080 Ti ($11$\,GB GPU memory) and NVIDIA A100-SXM4-80GB ($80$\,GB GPU memory; $\approx 1.5$\,TB host memory and $64$ logical CPU cores per node). A small fraction of runs (notably the \texttt{scene} OGBench domain and parts of \texttt{antmaze-giant-10M}) additionally used NVIDIA RTX 3090, RTX A6000, or RTX 4090 cards as availability allowed. To improve cluster throughput, we packed up to four runs per A100-80GB GPU concurrently, while RTX 2080 Ti runs were single-tenant.

\paragraph{Per-run wallclock by hardware and scale.} Median per-run wallclock times, measured directly from our experiment logs, are reported separately by GPU type because A100-80GB and RTX 2080 Ti are not interchangeable in run time:
\begin{itemize}[leftmargin=*, itemsep=0mm]
    \item \textbf{Robomimic} (\texttt{lift}, \texttt{can}, \texttt{square}): RTX 2080 Ti single-tenant median $\approx 10$\,h on \texttt{lift}, $\approx 12$\,h on \texttt{can}, $\approx 12$\,h on \texttt{square}; A100-80GB with four-run multi-tenancy median $\approx 11$\,h on \texttt{lift}, $\approx 11$\,h on \texttt{can}, $\approx 12$\,h on \texttt{square}.
    \item \textbf{OGBench 1M-data domains} (\texttt{antmaze-large}, \texttt{puzzle-3x3}, \texttt{scene}, \texttt{humanoidmaze-medium}, \texttt{humanoidmaze-large}, \texttt{cube-double}, \texttt{cube-triple}): RTX 2080 Ti single-tenant median $\approx 7$\,h; the small fraction of these runs scheduled on A100 (four-run multi-tenancy) or RTX 3090 finished in $\approx 2$--$4$\,h.
    \item \textbf{OGBench 10M-data domain} (\texttt{antmaze-giant-10M}, \texttt{cube-triple-10M}, \texttt{puzzle-4x4-10M}): RTX 2080 Ti single-tenant median $\approx 19$\,h; A100-80GB with four-run multi-tenancy median $\approx 9$\,h.
    \item \textbf{OGBench 100M-data domains} (\texttt{cube-quadruple-100M}): A100-80GB with four-run multi-tenancy median $\approx 7$\,h (this domain was run only on A100).
\end{itemize}

\paragraph{Total compute by category.} Summing run wallclocks across all logged runs that contributed to the reported results, the two main reporting categories of the project are:%
\begin{itemize}[leftmargin=*, itemsep=0mm]
    \item \textbf{OGBench main reproduction and supporting ablations} across the $10$ domains, $8$ evaluation seeds, and the $7$ compared methods (TRQAM, QAM, QAM-E, FQL, IFQL, CGQL-L, DSRL), including BC pretraining of the flow-matching priors: $\approx 38{,}400$ wallclock-run-hours, of which $\approx 30{,}700$\,h on RTX 2080 Ti, $\approx 5{,}700$\,h on A100-80GB, and $\approx 1{,}900$\,h on RTX 3090, RTX A6000, and RTX 4090 cards combined.
    \item \textbf{Robomimic main comparison} on \texttt{lift}, \texttt{can}, and \texttt{square} for TRQAM, QAM, QAM-E (\textsc{QAM-EDIT}), and DSRL with the per-method hyperparameter sweeps in Appendix~\ref{app:hyperparams}: $\approx 8{,}600$ wallclock-run-hours, of which $\approx 6{,}000$\,h on A100-80GB and $\approx 2{,}600$\,h on RTX 2080 Ti.
\end{itemize}
The two categories together total approximately $47{,}000$ wallclock-run-hours.%

\paragraph{Physical GPU-hours.} Accounting for the four-run multi-tenancy on A100-80GB and the single-tenant policy on the remaining GPU types, the wallclock-run-hours above translate to approximately $33{,}400$ RTX 2080 Ti GPU-hours, $2{,}900$ A100-80GB GPU-hours, and $1{,}900$ GPU-hours on the RTX 3090/A6000/4090 cards combined---about $38{,}000$ physical GPU-hours for the two reporting categories. Additional preliminary, failed, and supporting compute (KL-budget sweeps used to select $\varepsilon_{\mathrm{KL}}$, sensitivity analyses in Appendix~\ref{app:sensitivity}, and prior re-pretraining) consumed further GPU-hours that are not included in the totals above.%

\subsection{Hyperparameters}
\label{app:hyperparams}
Most methods share the common hyperparameters in \cref{tab:rl-hyperparams};
method-specific hyperparameters are listed in \cref{tab:hparam}. 

\paragraph{OGBench tuning.} For all baselines (FQL, DSRL, IFQL, CGQL-L, 
QAM, QAM-E), we adopt the per-domain values reported as optimal in 
QAM~\citep{qam}, which were selected through an extensive per-domain 
hyperparameter sweep covering 6--20 configurations per method across all 
ten domains, totaling approximately 32{,}000 GPU-hours. For TRQAM, we tune
$\varepsilon_{\mathrm{KL}}$ on two tuning tasks per domain with $2$ seeds 
per configuration (different from the evaluation seeds), sweeping over 
$\{0.5, 1.0, 1.5, 2.0, 2.5, 3.0, 3.5, 4.0\}$. Following~\citet{qam}, 
we use task 1 (the default task) and task 4 for locomotion domains, and 
task 2 (the default task) and task 4 for manipulation domains, as this 
combination better covers the characteristics of each domain than the 
default task alone. We select the configuration based on the combined 
offline-to-online learning curve and stability across the two tuning 
tasks; selected values are reported in \cref{tab:hparam}. For \texttt{antmaze-giant-10M-*}, we use a time-varying schedule: 
$\varepsilon_{\mathrm{KL}} = 0.5$ during offline training and 
$3.0$ during online fine-tuning. The offline value is selected from the per-domain sweep, where it 
yielded the most stable offline training. However, retaining $0.5$ 
online led to slow improvement in success rate, reflecting the 
larger exploration demands of giant-scale navigation. We therefore relax the bound at 
the online transition, selecting $3.0$ from the same sweep range as 
a sweet spot between exploration and stability 
(see~\cref{app:time_varying} for a direct comparison of static vs.\ 
relaxed schedules). This schedule exploits TRQAM's capability to 
track time-varying $\varepsilon_{\mathrm{KL}}$ without retraining. All main-paper results are averaged over $8$ evaluation seeds.

\paragraph{Robomimic tuning.} Since Robomimic was not evaluated in 
the QAM paper~\citep{qam}, no per-domain hyperparameters are available, so we 
conducted an independent sweep for each method (DSRL, QAM, QAM-E, 
TRQAM) on this benchmark. As Robomimic serves as our stability 
testbed, we run the full sweep range with $8$ seeds per configuration 
to obtain reliable estimates of variance across hyperparameters. 
For QAM-E, which has two hyperparameters, we sweep the full Cartesian product ($4 \times 2 = 8$ configurations). 
Robomimic consists of human teleoperation data, for which staying 
closer to behavior cloning is generally beneficial. We therefore 
adjust the QAM~\citep{qam} sweep ranges toward configurations that 
more strongly anchor the policy to the behavior policy. The sweep 
ranges are listed in~\cref{tab:hparam-range}. 

\begin{table}[ht]
\centering
\small
\caption{\textbf{Domain metadata.}}
\label{tab:metadata}
\setlength{\tabcolsep}{6pt}
\renewcommand{\arraystretch}{0.95}
\begin{tabular}{@{}lccc@{}}
\toprule
\textbf{Domain} & \textbf{Data} & \textbf{Horizon} & \textbf{Act. dim.} \\
\midrule
\texttt{cube-double-*}          & 1M   & 500  & 5  \\
\texttt{cube-triple-10M-*}      & 10M  & 1000 & 5  \\
\texttt{cube-quadruple-100M-*}  & 100M & 1000 & 5  \\
\texttt{antmaze-large-*}        & 1M   & 1000 & 8  \\
\texttt{antmaze-giant-10M-*}    & 10M  & 1000 & 8  \\
\texttt{humanoidmaze-medium-*}  & 1M   & 2000 & 21 \\
\texttt{humanoidmaze-large-*}   & 1M   & 2000 & 21 \\
\texttt{scene-*}                & 1M   & 750  & 5  \\
\texttt{puzzle-3x3-*}           & 1M   & 500  & 5  \\
\texttt{puzzle-4x4-10M-*}       & 10M  & 500  & 5  \\
\texttt{lift}                   & 31127  & 500  & 5  \\ 
\texttt{can}                    & 62756  & 500  & 5  \\ 
\texttt{square}                 & 80731  & 500  & 5  \\ 
\bottomrule
\end{tabular}
\end{table}
\begin{table}[!htbp]
\centering
\small
\caption{\textbf{Common hyperparameters.}}
\label{tab:rl-hyperparams}
\setlength{\tabcolsep}{5pt}
\renewcommand{\arraystretch}{0.95}
\begin{tabular}{@{}lc@{}}
\toprule
\textbf{Parameter} & \textbf{Value} \\
\midrule
Batch size                              & 256 \\
Discount factor ($\gamma$)              & 0.995 (default), 0.999 (\texttt{humanoidmaze}) \\
Optimizer                               & Adam \\
Learning rate                           & $3 \times 10^{-4}$ \\
Target network update rate              & $5 \times 10^{-3}$ \\
Critic ensemble size ($K$)              & 10 \\
Critic pessimism coefficient ($\rho$)   & 0.5 (default), 0 (\texttt{humanoidmaze}) \\
UTD ratio                               & 1 \\
Number of flow steps ($T$)              & 10 \\
BC training steps                       & $0.3 \times 10^{6}$ \\
Offline RL steps                        & $10^{6}$ \\
Online RL steps                & $0.5 \times 10^{6}$ \\
Network width                           & 512 (default), 1024 (10M/100M data) \\
Network depth                           & 4 hidden layers \\
Gradient max-norm clipping              & False (default), 1 (\texttt{QAM}, \texttt{QAM-E}, \texttt{TRQAM}) \\
Actor layer norm                        & False (default), True (10M/100M data) \\
Critic layer norm                       & True \\
\bottomrule
\end{tabular}
\vspace{-10pt}
\end{table}
\begin{table}[!htbp]
\centering
\small
\caption{\textbf{Domain-specific hyperparameters.}}
\label{tab:hparam}
\setlength{\tabcolsep}{3pt}
\renewcommand{\arraystretch}{1.03}
\begin{tabular}{@{}lccccccc@{}}
\toprule
Domain
  & \texttt{FQL}
  & \texttt{DSRL}
  & \texttt{IFQL}
  & \texttt{CGQL-L}
  & \texttt{QAM}
  & \texttt{QAM-E}
  & \texttt{TRQAM} \\
  & $\alpha$
  & $\sigma_z$
  & $\kappa$
  & $(\vartheta,\varrho,\tau)$
  & $\beta$
  & $(\beta,\sigma_a)$
  & $\varepsilon_{\mathrm{KL}}$ \\
\midrule
\texttt{scene-*}                & 300 & 0.4 & 0.9  & $(10, 0.1, 0.1)$   & 1  & $(1, 0)$    & 0.5 \\
\texttt{puzzle-3x3-*}           & 300 & 1.0 & 0.95 & $(10, 0.001, 0.1)$ & 3  & $(1, 0.1)$  & 2.0 \\
\texttt{puzzle-4x4-10M-*}       & 1   & 1.0 & 0.9  & $(10, 0.001, 1)$   & 30 & $(0.1, 0.9)$& 4.0 \\
\texttt{cube-double-*}          & 300 & 1.0 & 0.9  & $(10, 0.001, 0.01)$& 1  & $(1, 0)$    & 0.5 \\
\texttt{cube-triple-10M-*}      & 30  & 1.4 & 0.95 & $(10, 0.001, 0.1)$ & 3  & $(3, 0.1)$  & 0.5 \\
\texttt{cube-quadruple-100M-*}  & 100 & 1.4 & 0.95 & $(10, 0.1, 0.01)$  & 1  & $(3, 0.1)$  & 1.0 \\
\texttt{antmaze-large-*}        & 3   & 0.8 & 0.9  & $(10, 0.001, 0.1)$ & 10 & $(1, 0.1)$  & 1.0 \\
\texttt{antmaze-giant-10M-*}    & 3   & 1.2 & 0.8  & $(10, 0.001, 0.1)$ & 3  & $(10, 0.1)$ & $0.5\!\to\!3.0$ \\
\texttt{humanoidmaze-medium-*}  & 30  & 0.6 & 0.7  & $(10, 0.1, 0.1)$   & 3  & $(3, 0.1)$  & 0.5 \\
\texttt{humanoidmaze-large-*}   & 30  & 0.8 & 0.8  & $(10, 0.1, 0.1)$   & 3  & $(3, 0.1)$  & 0.5 \\
\texttt{lift}                   & -   & 0.8 & -    & -   & 1  & $(1, 0.03)$  & 0.1 \\
\texttt{can}                    & -  & 0.6 & -     & -   & 0.3  & $(1, 0.03)$  & 0.1 \\
\texttt{square}                 & -  & 0.8 & -     & -   & 0.1  & $(1, 0.03)$  & 0.5 \\
\bottomrule
\end{tabular}
\end{table}
\begin{table}[!htbp]
    \centering
    \caption{\textbf{Robomimic hyperparameter sweep ranges.}}
    \label{tab:hparam-range}
    \begin{tabular}{@{}ccc@{}}
    \toprule
    \textbf{Method} & \textbf{Hyperparameter(s)} & \textbf{Sweep Range} \\
    \midrule
        \texttt{DSRL}    & $\sigma_z$                  & $\{0.03, 0.1, 0.2, 0.4, 0.6, 0.8\}$ \\
        \texttt{QAM}     & $\beta$                      & $\{0.01, 0.03, 0.1, 0.3, 1, 3\}$ \\
        \texttt{QAM-E}   & $(\beta, \sigma_a)$          & $(\{0.03, 0.1, 0.3, 1\},\; \{0.03, 0.1\})$ \\
        \texttt{TRQAM}   & $\varepsilon_{\mathrm{KL}}$ & $\{0.01, 0.03, 0.1, 0.5, 1.0, 1.5\}$ \\
    \bottomrule
    \end{tabular}
\end{table}
\newpage
\section{Proofs}
\label{app:proofs}
This section provides full proofs for the three theoretical results stated in
Section~\ref{sec:method} of the main text:
Theorem~\ref{thm:path_kl} (path-space KL identity),
Proposition~\ref{prop:dpi} (Terminal KL upper-bounded by path-space KL), and
Lemma~\ref{lem:exp_amp} (exponential amplification of critic errors).

We work throughout on a filtered probability space $(\Omega, \mathcal{F}, (\mathcal{F}_\tau)_{\tau \in [0,1]}, \mathbb{P})$ 
equipped with a natural filtration and an initial state distribution $X_0 \sim p_0$. 
We adopt standard regularity assumptions for controlled diffusions, as in \citet{nusken2023solving}: 
the coefficients $b$ and $\sigma$ are sufficiently smooth, $b$ has at most linear growth, 
$\sigma\sigma^\top$ is uniformly positive definite, and the admissible control set $\mathcal{U}$ 
consists of progressively measurable controls with at most linear growth. Crucially, to ensure 
strong duality in our KL-budgeted improvement problem, we assume that the set of 
achievable path measures $\{ \mathbb{P}^u \mid u \in \mathcal{U} \}$ is convex. 
Moreover, we assume that $u \in \mathcal{U}$ satisfies Novikov's condition,
\begin{align*}
    \mathbb{E}_{\mathbf X \sim\mathbb{P}^{\mathrm{base}}}\!\left[
        \exp\!\left( \frac{1}{2\lambda} \int_0^1 \|u(X_\tau, \tau)\|^2 \, d\tau \right)
    \right] < \infty\text{,}
\end{align*}
which justifies the application of Girsanov's theorem in the proof of
Theorem~\ref{thm:path_kl}.

\subsection{Proof of Lemma~\ref{lem:exp_amp}: exponential amplification of critic errors}
\label{app:exp_amp}
\setcounter{lem}{0}
\begin{lem}[Exponential amplification of critic errors]
Fix $s\in \mathcal S$ and let $Q, \widetilde{Q} : \mathcal{A} \to \mathbb{R}$ satisfy
$\|Q - \widetilde{Q}\|_\infty \le \varepsilon$. Assume
$\pi_{\mathrm{base}}(\cdot \mid s) > 0$ a.e.\ and define
\[
\pi_Q(a \mid s) = \frac{\pi_{\mathrm{base}}(a \mid s)\,e^{\beta Q(a)}}{Z_Q},
\qquad
\pi_{\widetilde{Q}}(a \mid s) = \frac{\pi_{\mathrm{base}}(a \mid s)\,e^{\beta \widetilde{Q}(a)}}{Z_{\widetilde{Q}}}\text{,}
\]
with normalizers $Z_Q, Z_{\widetilde{Q}} \in (0, \infty)$. Then
\[
\TV(\pi_Q, \pi_{\widetilde{Q}}) \le \frac{1}{2}\bigl(e^{2\beta\varepsilon} - 1\bigr)\text{,}
\qquad
\KL(\pi_Q \,\|\, \pi_{\widetilde{Q}}) \le 2\beta\varepsilon\text{.}
\]
\end{lem}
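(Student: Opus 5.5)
The plan is to control the log-density ratio between the two tilted distributions pointwise, and then read off both bounds from that single estimate. Write $\Delta(a) := Q(a) - \widetilde{Q}(a)$, so that $|\Delta(a)| \le \varepsilon$ for all $a$. Since both distributions are absolutely continuous with respect to $\pi_{\mathrm{base}}(\cdot\mid s)$, which is positive a.e., the ratio is well defined a.e. and equals
\[
\frac{\pi_Q(a\mid s)}{\pi_{\widetilde{Q}}(a\mid s)} = e^{\beta \Delta(a)}\,\frac{Z_{\widetilde{Q}}}{Z_Q}.
\]

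The first step is to bound the normalizer ratio. Writing $Z_Q = \int \pi_{\mathrm{base}}\, e^{\beta \widetilde{Q}}\, e^{\beta\Delta}$, we have $e^{-\beta\varepsilon} Z_{\widetilde{Q}} \le Z_Q \le e^{\beta\varepsilon} Z_{\widetilde{Q}}$, so $\log(Z_{\widetilde{Q}}/Z_Q) \in [-\beta\varepsilon, \beta\varepsilon]$. Combining this with the pointwise bound on $\Delta$ gives
\[
\Bigl|\log \tfrac{\pi_Q}{\pi_{\widetilde{Q}}}(a)\Bigr| \le 2\beta\varepsilon \quad \text{for a.e. } a.
\]

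For the KL bound, take the expectation of this log-ratio under $\pi_Q$. The integrand is at most $2\beta\varepsilon$, which gives $\KL(\pi_Q\,\|\,\pi_{\widetilde{Q}}) \le 2\beta\varepsilon$ directly. A sharper estimate is possible: $\KL = \beta\,\mathbb{E}_{\pi_Q}[\Delta] - \log \mathbb{E}_{\pi_{\widetilde{Q}}}[e^{\beta\Delta}] \le \beta\varepsilon + \beta\varepsilon$. Only the stated bound is needed here.

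For the TV bound, write $\TV = \tfrac12 \int \pi_{\widetilde{Q}}\, |r - 1|$ with $r = \pi_Q/\pi_{\widetilde{Q}} \in [e^{-2\beta\varepsilon}, e^{2\beta\varepsilon}]$. The key inequality is $|r - 1| \le \max\{e^{2\beta\varepsilon} - 1,\; 1 - e^{-2\beta\varepsilon}\} = e^{2\beta\varepsilon} - 1$, which holds because $1 - e^{-x} \le e^{x} - 1$ for $x \ge 0$. Integrating against $\pi_{\widetilde{Q}}$, which has total mass one, yields $\TV \le \tfrac12(e^{2\beta\varepsilon} - 1)$.

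There is no real obstacle in this argument. The only points needing care are the measure-theoretic ones: the densities are taken with respect to a common dominating measure, and division is justified by positivity of $\pi_{\mathrm{base}}$ together with finiteness and positivity of $Z_Q$ and $Z_{\widetilde{Q}}$. The stated bounds are loose, since Pinsker's inequality would already give something better, but the planned argument proves them exactly as written.
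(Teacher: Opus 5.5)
Your proof is correct and is essentially the paper's argument. You sandwich $Z_Q$ between $e^{\pm\beta\varepsilon}Z_{\widetilde{Q}}$ and conclude that the likelihood ratio lies in $[e^{-2\beta\varepsilon}, e^{2\beta\varepsilon}]$ a.e. You then integrate the upper log-ratio bound against $\pi_Q$ for KL, and $|r-1|\le e^{2\beta\varepsilon}-1$ against $\pi_{\widetilde{Q}}$ for TV, making explicit the inequality $1-e^{-x}\le e^{x}-1$ that the paper uses silently.

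Two side remarks are slightly off. The ``sharper'' KL identity, bounded as you do, gives the same $2\beta\varepsilon$; it becomes sharper only after applying Jensen to the log term. Pinsker's $\TV\le\sqrt{\beta\varepsilon}$ beats the stated TV bound only when $\beta\varepsilon$ exceeds roughly $0.4$, not uniformly.
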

\begin{proof}
For notational simplicity, we suppress $s$ throughout the proof. From $\|Q - \widetilde{Q}\|_\infty \le \varepsilon$,
multiplying $-\varepsilon \le Q(a) - \widetilde{Q}(a) \le \varepsilon$ by $\beta$ and
exponentiating yields, for all $a$,
\begin{equation}
e^{-\beta\varepsilon} \le e^{\beta(Q(a) - \widetilde{Q}(a))} \le e^{\beta\varepsilon}.
\label{eq:app_pointwise_exp}
\end{equation}
Multiplying~\eqref{eq:app_pointwise_exp} by
$\pi_{\mathrm{base}}(a)\,e^{\beta\widetilde{Q}(a)} \ge 0$ and integrating gives
$e^{-\beta\varepsilon} Z_{\widetilde{Q}} \le Z_Q \le e^{\beta\varepsilon} Z_{\widetilde{Q}}$.
Combining this with~\eqref{eq:app_pointwise_exp} in the likelihood ratio
\[
\frac{\pi_Q(a)}{\pi_{\widetilde{Q}}(a)}
= e^{\beta(Q(a) - \widetilde{Q}(a))} \cdot \frac{Z_{\widetilde{Q}}}{Z_Q}
\]
shows that both factors lie in $[e^{-\beta\varepsilon}, e^{\beta\varepsilon}]$, so
\begin{equation}
e^{-2\beta\varepsilon}
\;\le\; \frac{\pi_Q(a)}{\pi_{\widetilde{Q}}(a)} \;\le\;
e^{2\beta\varepsilon} \quad \text{a.e.}
\label{eq:app_ratio_bound}
\end{equation}

The TV bound follows from~\eqref{eq:app_ratio_bound}:
$|\pi_Q(a) - \pi_{\widetilde{Q}}(a)| \le (e^{2\beta\varepsilon} - 1)\,\pi_{\widetilde{Q}}(a)$
a.e., and integrating with $\int \pi_{\widetilde{Q}} = 1$ gives
$\TV(\pi_Q, \pi_{\widetilde{Q}}) = \tfrac{1}{2}\|\pi_Q - \pi_{\widetilde{Q}}\|_1
\le \tfrac{1}{2}(e^{2\beta\varepsilon} - 1)$. The KL bound also follows
from~\eqref{eq:app_ratio_bound}: $\log(\pi_Q / \pi_{\widetilde{Q}}) \le 2\beta\varepsilon$
a.e., so
\[
\KL(\pi_Q \,\|\, \pi_{\widetilde{Q}})
= \int \pi_Q(a) \log \frac{\pi_Q(a)}{\pi_{\widetilde{Q}}(a)}\,da
\le 2\beta\varepsilon \int \pi_Q(a)\,da = 2\beta\varepsilon\text{.}
\]
\end{proof}
\subsection{Proof of Theorem~\ref{thm:path_kl}: path-space KL identity}
\label{app:girsanov}
\setcounter{thm}{0}
\begin{thm}[Path-space KL identity with explicit $\lambda$ dependence]
Let $\mathbb{P}^u$ and $\mathbb{P}^{\mathrm{base}}$ denote the path
measures on $C([0,1]; \mathbb{R}^d)$ induced by the base and controlled
SDEs, respectively:
\begin{align*}
    dX_\tau^{\mathrm{base}}
    &= b(X_\tau^{\mathrm{base}},\tau)\,d\tau
       + \sqrt{\lambda}\,\sigma(\tau)\,dB_\tau^{\mathrm{base}},\\
    dX_\tau^u
    &= \bigl(b(X_\tau^u,\tau) + \sigma(\tau)\,u(X_\tau^u,\tau)\bigr)\,d\tau
       + \sqrt{\lambda}\,\sigma(\tau)\,dB_\tau^u\text{,} 
\end{align*}
with common initial distribution $X_0 \sim p_0$. Then
\begin{align*}
    D_{\mathrm{KL}}\bigl(
        \mathbb{P}^u(\mathbf{X} \mid X_0)
        \,\|\,
        \mathbb{P}^{\mathrm{base}}(\mathbf{X} \mid X_0)
    \bigr)
    =
    \mathbb{E}_{\mathbf{X} \sim \mathbb{P}^u}\!\left[
        \frac{1}{2\lambda}\int_0^1 \|u(X_\tau^u, \tau)\|^2\,d\tau
    \right]\text{.}
\end{align*}
\end{thm}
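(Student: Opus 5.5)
The plan is to apply Girsanov's theorem to obtain the Radon--Nikodym derivative of $\mathbb{P}^u$ with respect to $\mathbb{P}^{\mathrm{base}}$ conditionally on $X_0$, and then take the $\mathbb{P}^u$-expectation of its logarithm. Both processes share the same diffusion coefficient $\sqrt{\lambda}\,\sigma(\tau)$, which is scalar and nonvanishing on $(0,1)$ by the uniform positive-definiteness assumption. They differ only in drift, by $\sigma(\tau)u(X_\tau,\tau)$. So the drift difference can be written as the diffusion coefficient times the process
\[
\theta_\tau := \frac{\sigma(\tau)\,u(X_\tau,\tau)}{\sqrt{\lambda}\,\sigma(\tau)} = \frac{u(X_\tau,\tau)}{\sqrt{\lambda}}.
\]
This single observation is where the explicit factor $1/\lambda$ enters, and the rest of the argument is the standard one.

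\textbf{Step 1.} Fix $X_0=x_0$ and work under $\mathbb{P}^{\mathrm{base}}$. Novikov's condition, as stated in the assumptions, is exactly $\mathbb{E}\bigl[\exp(\tfrac12\int_0^1\|\theta_\tau\|^2d\tau)\bigr]<\infty$. It therefore guarantees that
\[
Z_1=\exp\Bigl(\int_0^1\theta_\tau^\top dB^{\mathrm{base}}_\tau-\tfrac12\int_0^1\|\theta_\tau\|^2d\tau\Bigr)
\]
is a true martingale. Girsanov's theorem (\citet{oksendal2003stochastic}) then shows that under $d\mathbb{Q}=Z_1\,d\mathbb{P}^{\mathrm{base}}$ the process $\tilde B_\tau=B^{\mathrm{base}}_\tau-\int_0^\tau\theta_r dr$ is a Brownian motion. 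Under $\mathbb{Q}$, the base path therefore solves the controlled SDE. By uniqueness in law, which follows from the linear-growth and smoothness assumptions, we get $\mathbb{Q}=\mathbb{P}^u(\cdot\mid X_0)$, and hence $d\mathbb{P}^u/d\mathbb{P}^{\mathrm{base}}=Z_1$ on path space, conditional on $X_0$.

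\textbf{Step 2.} Rewrite $\log Z_1$ in terms of the Brownian motion $B^u$ that drives $\mathbb{P}^u$. Along controlled paths we have $dB^{\mathrm{base}}=dB^u+\theta\,d\tau$. Substituting gives
\[
\log\frac{d\mathbb{P}^u}{d\mathbb{P}^{\mathrm{base}}}(\mathbf X)=\int_0^1\theta_\tau^\top dB^u_\tau+\tfrac12\int_0^1\|\theta_\tau\|^2d\tau .
\]

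\textbf{Step 3.} Take the expectation under $\mathbb{P}^u$. The stochastic integral has zero mean, provided it is a genuine martingale rather than only a local one. Its vanishing then leaves $\mathbb{E}_{\mathbb{P}^u}\bigl[\tfrac{1}{2\lambda}\int_0^1\|u\|^2d\tau\bigr]$, which is the claimed identity for the conditional KL. Averaging over $X_0\sim p_0$, which is common to both measures, gives the unconditional version in Theorem~\ref{thm:path_kl} via the chain rule for KL.

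\textbf{Main obstacle.} The delicate step is justifying $\mathbb{E}_{\mathbb{P}^u}\int\theta^\top dB^u=0$. This needs $\mathbb{E}_{\mathbb{P}^u}\int_0^1\|u\|^2d\tau<\infty$. I would first try to derive that bound from Novikov plus linear growth of $u$ and standard moment bounds on the controlled SDE. If that fails, I would use a localization argument: take stopping times $T_n$ on which $\int_0^{T_n}\|u\|^2\le n$, apply the identity up to $T_n$, and pass to the limit by monotone convergence on the quadratic term together with lower semicontinuity of KL. If the right-hand side is infinite the identity holds trivially, so in either case it holds. A further minor point is the singularity of $\sigma(\tau)\propto\sqrt{(1-\tau)/\tau}$ at $\tau=0$ and its vanishing at $\tau=1$. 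This does not matter, since $\sigma$ cancels in $\theta$. Only its nonvanishing on $(0,1)$ is needed, together with restricting Girsanov to $[\delta,1-\delta]$ and letting $\delta\to0$.
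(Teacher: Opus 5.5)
Your proposal is correct and follows the same route as the paper's proof. Both arguments write the Girsanov density with $\theta_\tau = u/\sqrt{\lambda}$, rewrite $\log Z_1$ using $dB^{\mathrm{base}} = dB^u + \theta\,d\tau$, and then remove the It\^o integral by taking the expectation under $\mathbb{P}^u$. Your extra steps fill in points the paper treats as standard under its regularity assumptions: identifying $Z_1\,d\mathbb{P}^{\mathrm{base}}$ with $\mathbb{P}^u$ via uniqueness in law, localizing to justify the zero-mean martingale term, and noting that the memoryless schedule is singular at $\tau=0$ and vanishes at $\tau=1$.
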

\begin{proof}
Let $(\mathcal{F}_\tau)_{\tau \in [0,1]}$ denote the natural filtration
on the canonical path space $C([0,1]; \mathbb{R}^d)$. By Girsanov's
theorem~\citep[Theorem 8.6.6]{oksendal2003stochastic}, the
Radon--Nikodym derivative of $\mathbb{P}^u$ with respect to
$\mathbb{P}^{\mathrm{base}}$ restricted to $\mathcal{F}_\tau$ is
\begin{equation}
Z_\tau := \left.\frac{d\mathbb{P}^u}{d\mathbb{P}^{\mathrm{base}}}\right|_{\mathcal{F}_\tau}
= \exp\!\left(
\frac{1}{\sqrt{\lambda}}\int_0^\tau u(X_s^u, s)^\top \, dB_s^{\mathrm{base}}
- \frac{1}{2\lambda}\int_0^\tau \|u(X_s^u, s)\|^2 \, ds
\right)\text{,}
\label{eq:app_girsanov_density}
\end{equation}
where $B^{\mathrm{base}}$ is a standard Brownian motion under
$\mathbb{P}^{\mathrm{base}}$. By the definition of KL divergence on the
full path space (terminal $\sigma$-algebra $\mathcal{F}_1$),
\[
    D_{\mathrm{KL}}\!\left(\mathbb{P}^u \,\|\, \mathbb{P}^{\mathrm{base}}\right)
    = \mathbb{E}_{\mathbf{X} \sim \mathbb{P}^u}[\log Z_1]\text{.}
\]

Next, define
\[
    B_\tau^u := B_\tau^{\mathrm{base}} - \frac{1}{\sqrt{\lambda}}\int_0^\tau u(X_s^u, s) \, ds\text{.}
\]
By Girsanov's theorem, $B^u$ is a standard Brownian motion under
$\mathbb{P}^u$; in differential form,
$dB_s^{\mathrm{base}} = dB_s^u + \tfrac{1}{\sqrt{\lambda}}\,u(X_s^u, s)\,ds$.
Substituting this into~\eqref{eq:app_girsanov_density} at $\tau = 1$ gives
\begin{align*}
\log Z_1
&= \frac{1}{\sqrt{\lambda}}\int_0^1 u(X_s^u, s)^\top
   \!\left(dB_s^u + \frac{1}{\sqrt{\lambda}}\,u(X_s^u, s)\,ds\right)
   - \frac{1}{2\lambda}\int_0^1 \|u(X_s^u, s)\|^2 \, ds \\
&= \frac{1}{\sqrt{\lambda}}\int_0^1 u(X_s^u, s)^\top \, dB_s^u
   + \frac{1}{2\lambda}\int_0^1 \|u(X_s^u, s)\|^2 \, ds\text{.}
\end{align*}
Taking expectation under $\mathbb{P}^u$ and using that the It\^o
integral $\int_0^1 u(X_s^u, s)^\top \, dB_s^u$ is a zero-mean
martingale yields
\[
    D_{\mathrm{KL}}\!\left(\mathbb{P}^u \,\|\, \mathbb{P}^{\mathrm{base}}\right)
    = \mathbb{E}_{\mathbf{X} \sim \mathbb{P}^u}\!\left[
        \frac{1}{2\lambda}\int_0^1 \|u(X_s^u, s)\|^2 \, ds
    \right]\text{,}
\]
which is the claimed identity.
\end{proof}

\paragraph{Comparison to prior SOC parameterizations.}
The SOC parameterization used in prior
work~\citep{domingoenrich2025adjointmatchingfinetuningflow} takes
the form
$$dX_\tau^u
    = \bigl(b(X_\tau^u,\tau) + \sigma(\tau)\,u(X_\tau^u,\tau)\bigr)\,d\tau
       + \sigma(\tau)\,dB_\tau^u$$
without a $\sqrt{\lambda}$ scaling on the diffusion coefficient. Then Girsanov theorem yields
$$D_{\mathrm{KL}}(\mathbb{P}^u \| \mathbb{P}^{\mathrm{base}})
= \mathbb{E}_{\mathbf{X} \sim \mathbb{P}^u}\!\left[
        \frac{1}{2}\int_0^1 \|u(X_s^u, s)\|^2 \, ds
    \right]\text{,}$$
In this form, the path-space KL coincides with the SOC quadratic cost
up to a fixed constant; there is no parameter $\lambda$ inside the SDE that
modulates the strength of this regularization, so adapting the
trust-region strength would require an external coefficient applied to
the SOC objective itself. By scaling the diffusion by $\sqrt{\lambda}$
instead, our parameterization makes $\lambda$ an intrinsic parameter of
the controlled dynamics, with the $1/\lambda$ factor appearing directly
in the KL identity above. This is the structural property that allows
$\lambda$ to serve as an adaptive dual variable for KL-budgeted
improvement (Section~\ref{sec:method}): adjusting $\lambda$ reshapes
the SDE itself, which in turn directly modulates the path-space KL.

\subsection{Proof of Proposition~\ref{prop:dpi}: Terminal KL upper-bounded by path-space KL}
\label{app:dpi}
\setcounter{prop}{0}
\begin{prop}[Terminal KL upper-bounded by path-space KL]
Let $\mathbb{P}^u$ and $\mathbb{P}^{\mathrm{base}}$ denote the path
measures on $C([0,1]; \mathbb{R}^d)$ induced by the base and controlled
SDEs, respectively:
\begin{align*}
    dX_\tau^{\mathrm{base}}
    &= b(X_\tau^{\mathrm{base}},\tau)\,d\tau
       + \sqrt{\lambda}\,\sigma(\tau)\,dB_\tau^{\mathrm{base}}\text{,}\\
    dX_\tau^u
    &= \bigl(b(X_\tau^u,\tau) + \sigma(\tau)\,u(X_\tau^u,\tau)\bigr)\,d\tau
       + \sqrt{\lambda}\,\sigma(\tau)\,dB_\tau^u\text{,} 
\end{align*}
with common initial distribution $X_0 \sim p_0$. Let $\pi_\theta(\cdot \mid s)$ and $\pi_{\mathrm{base}}(\cdot \mid s)$ denote the corresponding terminal action distributions at $\tau = 1$, respectively. Then
\begin{align}
    D_{\mathrm{KL}}\bigl(\pi_\theta(\cdot \mid s) \,\|\, \pi_{\mathrm{base}}(\cdot \mid s)\bigr)
    \;\leq\;
    D_{\mathrm{KL}}\bigl(\mathbb{P}^u(\mathbf{X} \mid X_0) \,\|\, \mathbb{P}^{\mathrm{base}}(\mathbf{X} \mid X_0)\bigr)\text{,}
\end{align}
\end{prop}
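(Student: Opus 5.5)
The proof will be a direct application of the data processing inequality for KL divergence, together with the chain rule to handle the conditioning on $X_0$. The terminal action is a measurable function of the whole trajectory: the map $\Phi : C([0,1];\mathbb{R}^d) \to \mathbb{R}^d$, $\Phi(\mathbf{X}) = X_1$, is continuous and therefore Borel measurable. By definition, $\pi_\theta(\cdot\mid s) = \Phi_\#\mathbb{P}^u$ and $\pi_{\mathrm{base}}(\cdot\mid s) = \Phi_\#\mathbb{P}^{\mathrm{base}}$. For any measurable map, the data processing inequality (e.g.\ \citet{polyanskiy2025information}) gives $D_{\mathrm{KL}}(\Phi_\#\mathbb{P}\,\|\,\Phi_\#\mathbb{Q}) \le D_{\mathrm{KL}}(\mathbb{P}\,\|\,\mathbb{Q})$. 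This immediately yields the terminal KL bounded by the unconditional path KL $D_{\mathrm{KL}}(\mathbb{P}^u\,\|\,\mathbb{P}^{\mathrm{base}})$.

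The second step relates the unconditional path KL to the conditional one written in the statement. Both processes start from the same $X_0\sim p_0$, so the chain rule for KL divergence gives
\[
D_{\mathrm{KL}}(\mathbb{P}^u\,\|\,\mathbb{P}^{\mathrm{base}}) = D_{\mathrm{KL}}(p_0\,\|\,p_0) + \mathbb{E}_{X_0\sim p_0}\bigl[D_{\mathrm{KL}}(\mathbb{P}^u(\cdot\mid X_0)\,\|\,\mathbb{P}^{\mathrm{base}}(\cdot\mid X_0))\bigr].
\]
The first term vanishes, so the unconditional path KL equals the ($p_0$-averaged) conditional path KL. I interpret the right-hand side of the proposition in exactly this averaged sense, consistent with the way Theorem~\ref{thm:path_kl} is stated. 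The Girsanov density $Z_1$ of Theorem~\ref{thm:path_kl} already does not depend on $p_0$, which confirms that this conditional form is the natural reading.

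Alternatively, one can apply data processing conditionally. For each fixed $x_0$, pushing $\mathbb{P}^u(\cdot\mid x_0)$ forward by $\Phi$ gives the conditional terminal laws. The joint convexity of KL then bounds the KL between their $p_0$-mixtures, namely $\pi_\theta$ and $\pi_{\mathrm{base}}$, by the average of the conditional KLs. This route gives the same conclusion.

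The main obstacle is measure-theoretic rather than conceptual: justifying that regular conditional path measures given $X_0$ exist and that the chain rule applies. This holds because $C([0,1];\mathbb{R}^d)$ is Polish, so disintegration is available. One also needs the path KL to be finite, which follows from Novikov's condition and Theorem~\ref{thm:path_kl}. If the KL were infinite the inequality would hold trivially, so I would note that case and otherwise rely on the standing regularity assumptions.
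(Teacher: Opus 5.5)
Your proposal is correct and follows the paper's argument: both realize $\pi_\theta(\cdot\mid s)$ and $\pi_{\mathrm{base}}(\cdot\mid s)$ as pushforwards of $\mathbb{P}^u$ and $\mathbb{P}^{\mathrm{base}}$ under the terminal projection $\mathbf{X}\mapsto X_1$, then apply the data-processing inequality for KL. Your extra chain-rule step is a useful addition. It identifies $D_{\mathrm{KL}}(\mathbb{P}^u\,\|\,\mathbb{P}^{\mathrm{base}})$ with the $p_0$-averaged conditional path KL, using that the initial laws coincide, and so fills in the identification the paper's proof leaves implicit when it concludes with the unconditional path KL.
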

\begin{proof}
Let $\Pi : C([0,1];\mathbb R^d) \to \mathbb R^d$ denote the deterministic
terminal projection $\Pi(\mathbf X) = X_1$. By construction, the terminal
action distributions are pushforwards of the path measures:
$\pi_\theta(\cdot \mid s) = \Pi_\#\,\mathbb P^u$ and
$\pi_{\mathrm{base}}(\cdot \mid s) = \Pi_\#\,\mathbb P^{\mathrm{base}}$.
Applying the data-processing inequality for KL divergence under 
maps~\citep[Corollary.~2.18]{polyanskiy2025information} to $\Pi$ yields
\[
D_{\mathrm{KL}}\!\left(\Pi_\#\,\mathbb P^u\,\|\,\Pi_\#\,\mathbb P^{\mathrm{base}}\right)
\;\le\;
D_{\mathrm{KL}}\!\left(\mathbb P^u\,\|\,\mathbb P^{\mathrm{base}}\right)\text{,}
\]
which proves the claim.     
\end{proof}

\section{Path-space KL surrogate under OT memoryless discretization}
\label{app:pathkl}

This section derives the closed-form path-space KL estimator
$\widehat{D}_n$ used in Algorithm~\ref{alg_full:trqam}. Recall from
Theorem~\ref{thm:path_kl} that the path-space KL between $\mathbb{P}^u$
and $\mathbb{P}^{\mathrm{base}}$ admits a closed-form expression in
terms of the control $u$, and from Proposition~\ref{prop:dpi} that this
path-space KL upper-bounds the terminal-policy KL we ultimately wish to
constrain. We thus seek a tractable estimator of the path-space KL under
the discretized OT memoryless sampler.

\paragraph{Gaussian KL with shared covariance.}
For two Gaussians $\mathcal{N}(\mu_1, \Sigma)$ and $\mathcal{N}(\mu_0, \Sigma)$
with shared covariance $\Sigma \succ 0$, specializing the multivariate
Gaussian KL formula~\citep[Equation. 2.8]{polyanskiy2025information} makes
the log-determinant and trace terms vanish. In other words,
\begin{align}
\KL\!\left(\mathcal{N}(\mu_1, \Sigma) \,\|\, \mathcal{N}(\mu_0, \Sigma)\right)
=
\tfrac{1}{2}\,(\mu_1 - \mu_0)^\top \Sigma^{-1} (\mu_1 - \mu_0)\text{.}
\label{eq:app_gaussian_step_kl}
\end{align}

\paragraph{OT memoryless Euler step KL.}
Under the OT memoryless Euler scheme with step size $h$ and schedule
$g(\tau) = \sqrt{2(1-\tau)/\tau}$, the fine-tuned and base transitions
at time $\tau_k$ given $X_{\tau_k} = x$ are both Gaussian with covariance
$\Sigma_k = h\,g(\tau_k)^2\,I$ and means
\[
\mu_\theta(x, \tau_k) = x + h\!\left(2 v^{\mathrm{fin}}_\theta(x, \tau_k) - \tfrac{1}{\tau_k}\,x\right)\text{,}
\qquad
\mu_{\mathrm{base}}(x, \tau_k) = x + h\!\left(2 v^{\mathrm{base}}(x, \tau_k) - \tfrac{1}{\tau_k}\,x\right)\text{.}
\]
The means differ only through the velocity fields:
$\mu_\theta(x, \tau_k) - \mu_{\mathrm{base}}(x, \tau_k) = 2h\,(v^{\mathrm{fin}}_\theta(x, \tau_k) - v^{\mathrm{base}}(x, \tau_k))$.
Substituting into~\eqref{eq:app_gaussian_step_kl} with
$\Sigma_k^{-1} = \tfrac{1}{h\,g(\tau_k)^2}\,I$ gives the per-step KL
\begin{align}
\KL\!\left(p_\theta(\cdot \mid x) \,\|\, p_{\mathrm{base}}(\cdot \mid x)\right)
=
\frac{2h}{g(\tau_k)^2}\,\|v^{\mathrm{fin}}_\theta(x, \tau_k) - v^{\mathrm{base}}(x, \tau_k)\|^2\text{.}
\label{eq:app_ot_step_kl}
\end{align}

\paragraph{Chain rule and Monte Carlo estimator.}
Let $\mathbb{P}^u$ and $\mathbb{P}^{\mathrm{base}}$ now denote the
discrete-time path measures of the Markov chains
$\boldsymbol{X} = (X_\tau)_{\tau \in \{0, h, \dots, 1\}}$ with shared
initial distribution $p_0$ and the Gaussian transition kernels above.
Under mild regularity conditions, the Markov KL chain rule gives
\begin{align}
\KL(\mathbb{P}^u \,\|\, \mathbb{P}^{\mathrm{base}})
= \sum_{\tau \in \{0, h, \dots, 1-h\}}
\mathbb{E}_{X_\tau \sim \mathbb{P}^u}\!\left[
\KL\!\left(p_\theta(\cdot \mid X_\tau) \,\|\, p_{\mathrm{base}}(\cdot \mid X_\tau)\right)
\right]\text{.}
\label{eq:app_markov_chainrule}
\end{align}
Substituting~\eqref{eq:app_ot_step_kl} and estimating the expectation
by a Monte Carlo average over the trajectories in batch
$\mathcal{B}$ yields
\begin{align}
\widehat{D}
=
\frac{1}{|\mathcal{B}|}\sum_{\boldsymbol{X} \in \mathcal{B}} \sum_{\tau \in \{0, h, \dots, 1-h\}}
\frac{2h}{g(\tau)^2}\,
\|v^{\mathrm{fin}}_\theta(X_\tau, \tau) - v^{\mathrm{base}}(X_\tau, \tau)\|^2\text{,}
\label{eq:app_pathkl_mc}
\end{align}
which is the estimator used in Algorithm~\ref{alg_full:trqam} (optionally
smoothed by EMA to reduce variance).

\section{KL-budgeted improvement: primal--dual derivation}
\label{app:dual}

This section derives the projected dual update on $\lambda$ used in
Algorithm~\ref{alg_full:trqam}.

\paragraph{Primal problem and Slater's condition.}
We consider the path-space KL-budgeted improvement problem
\begin{align}
\max_{u \in \mathcal{U}}\quad & \mathbb{E}_{X \sim \mathbb{P}^u}[Q^\pi(X_1)]
\label{eq:app_primal_obj}\\
\text{s.t.}\quad & \KL(\mathbb{P}^u \,\|\, \mathbb{P}^{\mathrm{base}}) \le \varepsilon_{\mathrm{KL}}\text{.}
\label{eq:app_primal_kl_budget}
\end{align}
Viewed in the path measure space, the objective is linear in $\mathbb{P}^u$ and
$\KL(\cdot \,\|\, \mathbb{P}^{\mathrm{base}})$ is convex; the trivial
control $u \equiv 0$ recovers $\mathbb{P}^u = \mathbb{P}^{\mathrm{base}}$
and is strictly feasible whenever $\varepsilon_{\mathrm{KL}} > 0$, so
Slater's condition holds and strong duality applies.

\paragraph{Lagrangian and dual function.}
Let $\lambda \ge 0$ be the dual variable for the KL constraint. The
Lagrangian is
\[
\mathcal{L}(u, \lambda)
= \mathbb{E}_{X \sim \mathbb{P}^u}[Q^\pi(X_1)]
+ \lambda \bigl(\varepsilon_{\mathrm{KL}} - \KL(\mathbb{P}^u \,\|\, \mathbb{P}^{\mathrm{base}}) \bigr)\text{,}
\]
and the dual function is $g(\lambda) := \sup_u \mathcal{L}(u, \lambda)$.
By strong duality, minimizing $g$ over $\lambda \ge 0$ solves the
primal. A standard subgradient of $g$ at $\lambda$ is
\begin{align}
s(\lambda) = \varepsilon_{\mathrm{KL}} - \KL(\mathbb{P}^{u_\lambda} \,\|\, \mathbb{P}^{\mathrm{base}})\text{,}
\label{eq:app_subgrad}
\end{align}
where $u_\lambda := \arg\sup_u \mathcal{L}(u, \lambda)$ denotes the
inner maximizer at the current dual variable. The subgradient
inequality follows immediately: for any $\lambda' \ge 0$,
\[
g(\lambda') \ge \mathcal{L}(u_\lambda, \lambda')
= g(\lambda) + (\lambda' - \lambda)\bigl(\varepsilon_{\mathrm{KL}} - \KL(\mathbb{P}^{u_\lambda} \,\|\, \mathbb{P}^{\mathrm{base}})\bigr)\text{.}
\]

\paragraph{Projected dual descent.}
Projected subgradient descent on $\min_{\lambda \ge 0} g(\lambda)$ would
require evaluating the KL at the inner maximizer $u_\lambda$, which is
not available in closed form. In Algorithm~\ref{alg_full:trqam}, we instead
evaluate the path-space KL at the control $u_n$ at iteration $n$ via
the Monte Carlo surrogate $\widehat{D}_n$
from~\eqref{eq:app_pathkl_mc}. The dual update
\begin{align}
\lambda_{n+1} \leftarrow \max\bigl\{0,\ \lambda_n + \eta_\lambda(\overline{D}_n - \varepsilon_{\mathrm{KL}})\bigr\}
\label{eq:app_lambda_update}
\end{align}
preserves the key sign property: when the realized KL exceeds the
budget, $\lambda$ rises and the controlled dynamics become more
conservative; when it falls below, $\lambda$ relaxes the trust region.

\section{Internal vs.\ external KL regularization: detailed comparison}
\label{app:int-vs-ext}
Section~\ref{sec:method_dual_update} contrasts two ways of pairing the 
dual update~(\ref{eq:method_lambda_update}) with a KL constraint: 
appending $\overline{D}_n$ as an auxiliary loss term (\emph{external}), 
or letting $\lambda$ scale the diffusion coefficient inside the SOC 
dynamics (\emph{internal}, TRQAM). The two formulations share the 
same dual update rule and the same KL estimator, yet they differ in 
how the constraint is enforced. This appendix expands on that 
contrast and explains why the difference is structural rather than 
cosmetic.

The key asymmetry is in what $\lambda$ represents. In the external formulation, $\lambda$ is a scalar weight on an auxiliary KL regularization term in the loss: it balances this 
regularization against the critic signal, but does not appear in 
the controlled SDE itself. The path-space KL between 
$\mathbb{P}^u$ and $\mathbb{P}^{\mathrm{base}}$ is therefore independent 
of $\lambda$, and is determined entirely by the learned control $u$. 
Adapting $\lambda$ only changes the relative weight between the 
adjoint-matching loss $\mathcal{L}_{\text{Adj-Match}}$ and the KL 
regularization $\overline{D}_n$ during gradient descent; whether the realized KL ends up close 
to $\varepsilon_{\mathrm{KL}}$ depends on the optimizer balancing 
these competing signals. Under strong reward gradients, the KL term 
can be effectively overridden, and the realized KL drifts above the target KL bound $\varepsilon_{\mathrm{KL}}$

In the internal formulation, by contrast, $\lambda$ is a parameter of 
the controlled SDE itself. Theorem~\ref{thm:path_kl} establishes that 
the path-space KL is exactly $\mathbb{E}[\tfrac{1}{2\lambda}\int \|u\|^2]$, 
so $\lambda$ appears as the inverse coefficient of the KL itself. 
Increasing $\lambda$ shrinks the path-space KL directly through the 
SDE's diffusion term, rather than indirectly through the critic gradient. 
The dual update therefore reshapes the trajectory distribution 
structurally rather than competing with critic guidance. 
Table~\ref{tab:internal-vs-external} summarizes these distinctions.

\begin{table}[ht]
\centering
\small
\caption{\textbf{Internal vs.\ external KL regularization.} Although both
approaches can use the same dual update rule on $\lambda$, the role of
$\lambda$ differs structurally. Internalizing $\lambda$ inside the SDE
makes the path-space KL an exact function of $\lambda$ via
Girsanov (Theorem~\ref{thm:path_kl}), turning the target KL bound
$\varepsilon_{\mathrm{KL}}$ into a structural constraint rather than a
soft penalty.}
\label{tab:internal-vs-external}
\renewcommand{\arraystretch}{1.4}
\begin{tabular}{@{}p{0.22\linewidth} p{0.36\linewidth} p{0.36\linewidth}@{}}
\toprule
& \textbf{External KL} (auxiliary regularization) & \textbf{Internal KL} (TRQAM) \\
\midrule
Controlled SDE
& $dX^u_\tau = (b + \sigma u)\,d\tau + \sigma(\tau)\,dB_\tau$
& $dX^u_\tau = (b + \sigma u)\,d\tau + \sqrt{\lambda}\,\sigma(\tau)\,dB_\tau$ \\
Path-space KL
& $\mathbb{E}\!\left[\tfrac{1}{2}\!\int_0^1\!\|u\|^2\,d\tau\right]$ \newline ($\lambda$-independent)
& $\mathbb{E}\!\left[\tfrac{1}{2\lambda}\!\int_0^1\!\|u\|^2\,d\tau\right]$ \newline (Theorem~\ref{thm:path_kl}) \\
Role of $\lambda$
& Regularization weight on auxiliary loss
& Intrinsic SDE parameter \\
Training loss
& $\mathcal{L}_{\text{Adj-Match}}(\theta) + \lambda\,\overline{D}_n(\theta)$
& $\mathcal{L}_{\text{Adj-Match}}(\theta)$ \\
Effect of increasing $\lambda$
& Competes with reward gradient at loss level
& Reshapes the SDE at sampling level \\
Enforcement mechanism
& Soft (gradient competition)
& Structural (via dynamics) \\
Realized KL vs.\ target bound
& Exceed the target bound (Figure \ref{fig:full_robomimic_lift_kl_compare_grid}, \ref{fig:full_robomimic_can_kl_compare_grid})
& Tracks target bound tightly \\
\bottomrule
\end{tabular}
\vspace{4pt}
\end{table}

A useful way to see the practical consequence is that the two 
approaches differ in \emph{when} the KL constraint takes effect. 
External regularization acts \emph{after} trajectories are generated: 
the SDE produces samples freely under the current $u$, and the KL 
term enters only at the loss level, where it competes with the 
reward gradient during optimization. The realized path-space KL has 
no direct tie to $\lambda$; whether it ends up close to 
$\varepsilon_{\mathrm{KL}}$ depends on the optimizer balancing these 
competing signals. Internal regularization, by contrast, enforces 
the constraint \emph{during} sample generation. Because 
$\sqrt{\lambda}\,\sigma(\tau) = \sqrt{2(1-\tau)/\tau}$ is fixed by 
the OT schedule, adjusting $\lambda$ co-adjusts $\sigma(\tau)$ and 
reshapes the entire controlled SDE, including its drift term 
$b(x,\tau) + \sigma(\tau)\,u(x,\tau)$: increasing $\lambda$ shrinks 
$\sigma(\tau)$, which weakens the control contribution 
$\sigma(\tau)\,u(x,\tau)$ and pulls the controlled SDE toward the 
base dynamics. By Theorem~\ref{thm:path_kl}, the realized path-space 
KL is then an exact function of $\lambda$, so the dual update 
directly reshapes the trajectory distribution rather than competing 
with critic guidance at the loss level. This distinction is what 
makes $\varepsilon_{\mathrm{KL}}$ a structural constraint in TRQAM 
rather than a nominal target. Appendix Figures~\ref{fig:full_robomimic_lift_kl_compare_grid} 
and~\ref{fig:full_robomimic_can_kl_compare_grid} demonstrate this 
empirically. On \texttt{Robomimic lift} and \texttt{Robomimic can}, 
external regularization lets the realized KL exceed the prescribed 
bound across all six target values, with corresponding degradation 
in success rate.

\section{Additional experiments}
\label{app:additional_exp}

\subsection{TRQAM tightly enforces the prescribed KL budget}
\label{app:kl_tracking}

\begin{figure}[t]
    \centering
    \includegraphics[width=0.9\linewidth]{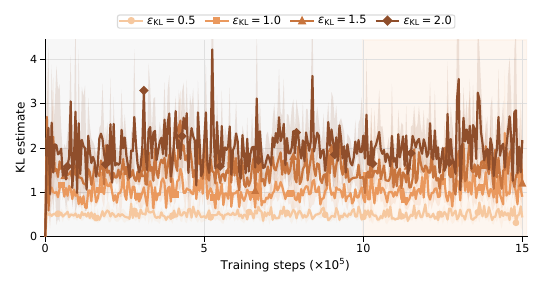}
    \caption{\textbf{TRQAM tracks the prescribed KL budget across 
    offline and online training.} We vary the target KL bound 
    $\varepsilon_{\mathrm{KL}} \in \{0.5, 1.0, 1.5, 2.0\}$ and plot 
    the realized path-space KL throughout training. Larger budgets 
    produce correspondingly larger policy deviation, and the 
    monotonic ordering is preserved across the offline-to-online 
    transition.}
    \label{fig:kl_bound}
\end{figure}

TRQAM faithfully enforces the prescribed KL budget across a range 
of target values. Figure~\ref{fig:kl_bound} plots the realized 
path-space KL under four budgets 
$\varepsilon_{\mathrm{KL}} \in \{0.5, 1.0, 1.5, 2.0\}$ throughout 
offline and online training. The realized KL tracks each prescribed 
target with a clear monotonic ordering, and this ordering is 
preserved across the offline-to-online transition, indicating that 
the dual update remains effective under distribution shift. The KL 
constraint in TRQAM is therefore both controllable and stable 
across training regimes.

\begin{figure}[ht]
\centering
\includegraphics[width=\linewidth]{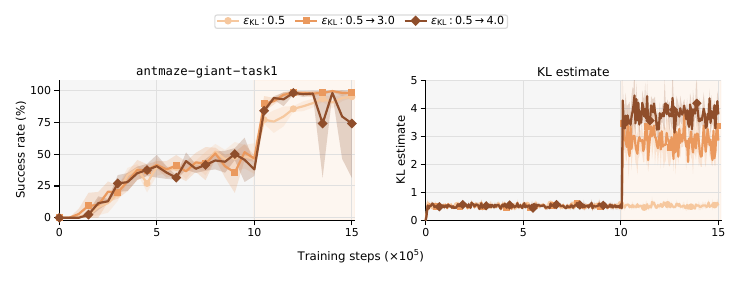}
\caption{\textbf{TRQAM tracks time-varying $\varepsilon_{\mathrm{KL}}$ 
schedules across the offline-to-online transition.} We illustrate 
this on \texttt{antmaze-giant}, the domain in our benchmark with the 
largest state space and accordingly the highest exploration demands 
during online fine-tuning. We keep $\varepsilon_{\mathrm{KL}} = 0.5$ 
during offline training and optionally switch to a larger bound at 
the online transition (vertical dashed line). \emph{Right:} The 
realized KL adapts to each new target almost immediately, showing 
that the dual update on $\lambda$ handles schedule changes without 
instability. \emph{Left:} The static schedule (0.5) 
improves more slowly online than the moderate schedule 
(0.5 → 3.0), which reaches $\sim$98\% success rapidly and 
outperforms the more aggressive (0.5 → 4.0) alternative.}
\label{fig:online_kl_schedule}
\end{figure}

\subsection{Time-varying KL budget}
\label{app:time_varying}
The KL bound $\varepsilon_{\mathrm{KL}}$ is not necessarily a 
fixed quantity throughout training: in offline-to-online RL, the 
appropriate trust region may differ between phases, since online 
fine-tuning often requires more exploration than the offline phase 
permits. TRQAM accommodates this naturally because $\lambda$ is 
internalized in the SOC dynamics rather than entering as a soft 
loss penalty: the dual update on $\lambda$ remains stable under 
time-varying $\varepsilon_{\mathrm{KL}}$, allowing the bound to be 
re-set at any point without retraining or instability.

We illustrate this on \texttt{antmaze-giant}, the domain in our 
benchmark with the largest state space and accordingly the highest 
exploration demands during online fine-tuning. 
\Cref{fig:online_kl_schedule} sweeps three schedules on 
\texttt{antmaze-giant-task1}, the tuning task for our per-domain 
$\varepsilon_{\mathrm{KL}}$ choice: the static schedule 
(0.5, matching the offline value) and two relaxations at 
the online transition (0.5 → 3.0, 0.5 → 4.0). 
Two observations follow.

First, the realized KL adapts to each new target almost immediately 
after the switch, confirming that TRQAM tracks time-varying bounds 
without loss of stability.

Second, the static schedule (0.5) improves substantially more 
slowly online than the moderate schedule ($0.5 \to 3.0$). On 
domains with large state spaces such as \texttt{antmaze-giant}, 
online fine-tuning relies on exploring beyond the offline support, 
and a tight KL budget restricts this exploration and slows 
adaptation to the new state distribution. The moderate schedule 
reaches $\sim$98\% success rapidly, while the more aggressive 
$0.5 \to 4.0$ introduces substantial variance, identifying $3.0$ 
as a sweet spot.

\subsection{Sensitivity Analysis}
\label{app:sensitivity}
We extend the four-task sweep of Section~\ref{sec:exp_sensitivity} 
to all ten OGBench domains, sweeping 
$\varepsilon_{\mathrm{KL}} \in \{0.5, 1.0, 1.5, 2.0, 2.5, 3.0, 3.5, 4.0\}$ 
on the default tuning task of each domain with all other 
hyperparameters fixed. Results across $8$ seeds appear in 
Figure~\ref{fig:ogbench_klb_sensitivity_grid}, showing two 
patterns. Tight budgets are best on \texttt{humanoidmaze-medium}, 
\texttt{humanoidmaze-large}, \texttt{cube-double}, 
\texttt{cube-triple}, and \texttt{cube-quadruple}. Larger budgets 
monotonically improve performance on \texttt{puzzle-4x4}, in 
keeping with its notably larger state space.

Smaller $\varepsilon_{\mathrm{KL}}$ also produces slower online 
adaptation across the sweep, consistent with the dual update 
tightening the trust region. We observe the same effect on 
\texttt{antmaze-giant} (Section~\ref{app:time_varying}), where the 
static $0.5$ schedule slows online adaptation in a domain whose 
large state space demands broader exploration. Combined with the 
realized-KL ordering of Figure~\ref{fig:internalization}, this 
is consistent with $\varepsilon_{\mathrm{KL}}$ controlling 
realized deviation from $\pi_{\mathrm{base}}$ rather than acting 
as a nominal target. Because TRQAM tightly enforces the chosen 
budget (Section~\ref{sec:exp_mechanism}), $\varepsilon_{\mathrm{KL}}$ 
remains a hyperparameter, but picking it for a new task comes 
down to task structure.

\subsection{Stability stress test on Robomimic}
\label{app:robomimic}
We complement our OGBench results with Robomimic~\citep{robomimic2021}, a standard manipulation benchmark on which we observe 
fixed-temperature adjoint matching to be unstable.

\paragraph{Fixed-temperature methods collapse across the sweep.}
Sweeping QAM over six values of 
$\beta \in \{0.01, 0.03, 0.1, 0.3, 1.0, 3.0\}$ and QAM-E over eight 
$(\beta, \sigma_a)$ configurations, the collapse pattern persists 
across most settings on \texttt{lift} and \texttt{can} 
(See Figure \ref{fig:full_robomimic_lift_curves_grid_4by2}, \ref{fig:full_robomimic_can_curves_grid_4by2}), the 
adjoint-matching loss explodes during offline training and success 
rate collapses to near zero. This is consistent with the structural 
fragility predicted by fixed-$\beta$ critic-error amplification 
(Lemma~\ref{lem:exp_amp}).

\paragraph{External KL is insufficient; TRQAM enforces the budget.}
External KL regularization partially mitigates the collapse, but the realized path-space KL 
substantially exceeds the prescribed bound across every budget, 
leaving the policy vulnerable to critic-error amplification. TRQAM, 
by contrast, tracks each budget tightly throughout offline training 
and across the offline-to-online transition, and remains stable 
across the full sweep 
$\varepsilon_{\mathrm{KL}} \in \{0.01, 0.03, 0.1, 0.5, 1, 1.5\}$. (See Figure~\ref{fig:full_robomimic_lift_kl_compare_grid}, \ref{fig:full_robomimic_can_kl_compare_grid}, \ref{fig:full_robomimic_square_kl_compare_grid}).

\section{Broader impacts}
\label{app:impacts}
TRQAM is a methodological contribution to stable off-policy fine-tuning of pretrained flow-matching policies, evaluated entirely on simulated benchmarks (OGBench, Robomimic). On the positive side, more reliable fine-tuning of pretrained robot policies can reduce the data and compute required to specialize useful behaviors and lessen the brittleness of pretrained-prior degradation under TD bootstrapping, which is a common source of instability in real-world RL deployments. On the negative side, the same trust-region machinery makes off-policy improvement of expressive flow policies more reliable, which could in principle accelerate the deployment of autonomous systems whose downstream uses we cannot fully anticipate; in particular, when applied to settings beyond the simulated benchmarks studied here, fine-tuned policies could exhibit failure modes (e.g., physical safety violations in robotics) that an algorithmic KL bound does not directly address. We view standard mitigations---safety filters at deployment, sandboxed evaluation, and explicit reward and constraint design---as necessary complements rather than substitutes for the algorithmic guarantees provided by TRQAM. The paper does not release pretrained models or datasets that pose dual-use or high-risk concerns.

\begin{table}[p] 
    \thispagestyle{empty} 
    \vspace*{-2.0cm} 
    \centering
    \caption{\textbf{Full offline results at 1M training steps (8 seeds)}.}
    \label{tab:full-results}
    \setlength{\abovecaptionskip}{4pt}

    \newcommand{\gval}[2]{\ensuremath{{\color{gray!70}#1}\,{\color{gray!50}\scriptstyle\pm #2}}} 
    \newcommand{\bmax}[2]{\ensuremath{\underline{#1}\,{\color{gray!50}\scriptstyle\pm #2}}} 
    
    \newcommand{\gavg}[1]{\ensuremath{{\color{gray!70}#1}}} 
    \newcommand{\bavg}[1]{\ensuremath{\underline{#1}}} 
    \newcommand{\ourmax}[2]{\ensuremath{{\color{GoogleBlue}\mathbf{\underline{#1}}}\,{\color{gray!50}\scriptstyle\pm #2}}}
    \newcommand{\ouravg}[1]{\ensuremath{{\color{GoogleBlue}\mathbf{\underline{#1}}}}}

    \makebox[\textwidth][c]{
        \begin{minipage}{1.13\textwidth}
            \centering
            \renewcommand{\arraystretch}{1.07} 
            \setlength{\aboverulesep}{1pt}     
            \setlength{\belowrulesep}{1pt}     

            \resizebox{\textwidth}{!}{
                \begin{tabular}{cl cccccc >{\columncolor{ourlightblue}}c}
                \toprule
                 &  & \texttt{FQL} & \texttt{CGQL-L} & \texttt{DSRL} & \texttt{IFQL} & \texttt{QAM} & \texttt{QAM-E} & \texttt{TRQAM} \\
                \midrule
                 & \texttt{task1} & \gval{24}{44} & \gval{62}{38} & \gval{45}{12} & \gval{7}{11} & \bmax{96}{3} & \gval{93}{5} & \ourmax{95}{4} \\
                 & \texttt{task2} & \gval{0}{0} & \gval{21}{36} & \gval{73}{7} & \gval{9}{14} & \gval{36}{49} & \bmax{91}{4} & \ourmax{87}{6} \\
                 & \texttt{task3} & \gval{73}{14} & \gval{88}{5} & \gval{84}{5} & \gval{62}{24} & \gval{86}{7} & \bmax{93}{3} & \ourmax{93}{3} \\
                 & \texttt{task4} & \gval{0}{0} & \gval{0}{0} & \gval{0}{0} & \gval{35}{22} & \gval{0}{0} & \gval{66}{9} & \ourmax{75}{17} \\
                 & \texttt{task5} & \gval{92}{4} & \gval{68}{42} & \gval{63}{8} & \gval{34}{24} & \bmax{94}{3} & \gval{89}{5} & \ourmax{96}{3} \\
                \multirow{-6}{*}{\texttt{antmaze-large}} & \texttt{agg. (5 tasks)} & \gval{38}{9} & \gval{48}{7} & \gval{53}{2} & \gval{29}{8} & \gval{62}{9} & \gval{86}{3} & \ourmax{89}{4} \\
                \midrule
                 & \texttt{task1} & \gval{1}{1} & \gval{11}{12} & \gval{2}{3} & \gval{0}{0} & \gval{37}{9} & \gval{0}{0} & \ourmax{48}{5} \\
                 & \texttt{task2} & \gval{0}{0} & \gval{14}{12} & \gval{0}{0} & \bmax{43}{13} & \gval{18}{19} & \gval{0}{0} & \gval{21}{12} \\
                 & \texttt{task3} & \gval{0}{0} & \gval{0}{0} & \gval{0}{0} & \bmax{14}{8} & \gval{2}{2} & \gval{0}{0} & \gval{5}{5} \\
                 & \texttt{task4} & \gval{0}{0} & \gval{0}{0} & \gval{0}{0} & \gval{1}{2} & \gval{20}{24} & \gval{0}{0} & \ourmax{58}{13} \\
                 & \texttt{task5} & \gval{10}{29} & \gval{8}{23} & \gval{2}{2} & \gval{2}{2} & \gval{69}{8} & \gval{28}{38} & \ourmax{75}{7} \\
                \multirow{-6}{*}{\texttt{antmaze-giant-10M}} & \texttt{agg. (5 tasks)} & \gval{2}{6} & \gval{7}{5} & \gval{1}{1} & \gval{12}{3} & \gval{29}{4} & \gval{6}{8} & \ourmax{41}{4} \\
                \midrule
                
                 & \texttt{task1} & \gval{84}{21} & \gval{84}{9} & \gval{24}{23} & \bmax{93}{3} & \gval{33}{34} & \gval{28}{30} & \ourmax{87}{5} \\
                 & \texttt{task2} & \bmax{99}{1} & \bmax{99}{1} & \gval{88}{6} & \gval{94}{4} & \bmax{100}{1} & \bmax{98}{2} & \gval{89}{7} \\
                 & \texttt{task3} & \gval{89}{9} & \gval{0}{0} & \gval{64}{27} & \bmax{96}{4} & \gval{90}{8} & \gval{74}{18} & \ourmax{90}{6} \\
                 & \texttt{task4} & \gval{0}{0} & \gval{0}{0} & \gval{0}{0} & \bmax{82}{8} & \gval{0}{0} & \gval{0}{0} & \gval{61}{7} \\
                 & \texttt{task5} & \bmax{99}{1} & \bmax{100}{1} & \gval{88}{5} & \bmax{99}{2} & \bmax{100}{1} & \bmax{99}{1} & \ourmax{95}{4} \\
                \multirow{-6}{*}{\texttt{humanoidmaze-medium}} & \texttt{agg. (5 tasks)} & \gval{74}{5} & \gval{57}{2} & \gval{53}{10} & \bmax{93}{2} & \gval{64}{7} & \gval{60}{6} & \gval{84}{3} \\
                \midrule
                
                 & \texttt{task1} & \gval{1}{2} & \gval{16}{11} & \gval{1}{1} & \gval{32}{5} & \gval{6}{10} & \gval{9}{14} & \ourmax{45}{8} \\
                 & \texttt{task2} & \gval{0}{0} & \gval{0}{0} & \gval{0}{0} & \gval{6}{9} & \gval{0}{0} & \gval{0}{0} & \ourmax{10}{6} \\
                 & \texttt{task3} & \gval{7}{4} & \gval{16}{8} & \gval{2}{3} & \bmax{76}{6} & \gval{7}{5} & \gval{3}{4} & \gval{34}{15} \\
                 & \texttt{task4} & \gval{0}{0} & \gval{0}{1} & \gval{0}{0} & \gval{35}{26} & \gval{0}{0} & \gval{0}{0} & \ourmax{44}{13} \\
                 & \texttt{task5} & \gval{0}{1} & \gval{0}{0} & \gval{1}{1} & \gval{0}{0} & \gval{5}{11} & \gval{9}{18} & \ourmax{45}{6} \\
                \multirow{-6}{*}{\texttt{humanoidmaze-large}} & \texttt{agg. (5 tasks)} & \gval{2}{1} & \gval{6}{3} & \gval{1}{1} & \gval{30}{7} & \gval{4}{3} & \gval{4}{5} & \ourmax{36}{4} \\
                \midrule
                
                 & \texttt{task1} & \bmax{100}{0} & \bmax{100}{0} & \bmax{100}{0} & \bmax{99}{1} & \bmax{100}{0} & \bmax{100}{0} & \ourmax{100}{0} \\
                 & \texttt{task2} & \bmax{100}{1} & \bmax{99}{1} & \bmax{100}{0} & \gval{2}{2} & \bmax{100}{0} & \bmax{100}{0} & \ourmax{100}{0} \\
                 & \texttt{task3} & \gval{82}{5} & \gval{93}{7} & \bmax{100}{1} & \gval{77}{7} & \gval{94}{4} & \gval{93}{4} & \ourmax{100}{1} \\
                 & \texttt{task4} & \gval{60}{22} & \gval{0}{0} & \bmax{100}{0} & \gval{2}{2} & \gval{26}{21} & \gval{22}{30} & \ourmax{93}{6} \\
                 & \texttt{task5} & \bmax{8}{7} & \gval{0}{0} & \gval{0}{0} & \gval{0}{0} & \gval{0}{0} & \gval{0}{0} & \gval{0}{1} \\
                \multirow{-6}{*}{\texttt{scene}} & \texttt{agg. (5 tasks)} & \gval{70}{5} & \gval{58}{1} & \bmax{80}{0} & \gval{36}{1} & \gval{64}{4} & \gval{63}{6} & \ourmax{79}{1} \\
                \midrule
                
                 & \texttt{task1} & \gval{87}{11} & \gval{0}{0} & \bmax{100}{0} & \bmax{100}{1} & \gval{75}{15} & \gval{99}{2} & \ourmax{100}{0} \\
                 & \texttt{task2} & \gval{37}{39} & \gval{0}{0} & \bmax{100}{0} & \gval{23}{7} & \gval{0}{0} & \bmax{100}{0} & \ourmax{100}{0} \\
                 & \texttt{task3} & \gval{0}{0} & \gval{0}{0} & \bmax{100}{0} & \gval{82}{11} & \gval{0}{0} & \gval{76}{11} & \ourmax{100}{0} \\
                 & \texttt{task4} & \gval{0}{0} & \gval{0}{0} & \bmax{100}{0} & \gval{29}{8} & \gval{0}{0} & \gval{87}{6} & \ourmax{100}{1} \\
                 & \texttt{task5} & \gval{3}{6} & \gval{0}{0} & \bmax{100}{0} & \gval{86}{9} & \gval{0}{1} & \gval{84}{20} & \ourmax{100}{0} \\
                \multirow{-6}{*}{\texttt{puzzle-3x3}} & \texttt{agg. (5 tasks)} & \gval{25}{10} & \gval{0}{0} & \bmax{100}{0} & \gval{64}{4} & \gval{15}{3} & \gval{89}{4} & \ourmax{100}{0} \\
                \midrule
                
                 & \texttt{task1} & \gval{3}{3} & \gval{1}{1} & \gval{90}{5} & \gval{74}{11} & \gval{3}{4} & \gval{90}{10} & \ourmax{100}{0} \\
                 & \texttt{task2} & \gval{6}{9} & \gval{0}{0} & \gval{9}{9} & \gval{4}{3} & \gval{1}{1} & \gval{43}{22} & \ourmax{99}{1} \\
                 & \texttt{task3} & \gval{23}{15} & \gval{0}{0} & \gval{81}{20} & \gval{84}{6} & \gval{2}{3} & \gval{57}{10} & \ourmax{98}{2} \\
                 & \texttt{task4} & \gval{7}{5} & \gval{0}{0} & \gval{76}{28} & \gval{46}{13} & \gval{0}{1} & \gval{31}{16} & \ourmax{100}{1} \\
                 & \texttt{task5} & \gval{9}{16} & \gval{0}{0} & \gval{49}{43} & \gval{3}{1} & \gval{1}{1} & \gval{50}{22} & \ourmax{98}{5} \\
                \multirow{-6}{*}{\texttt{puzzle-4x4-10M}} & \texttt{agg. (5 tasks)} & \gval{9}{7} & \gval{0}{0} & \gval{61}{8} & \gval{42}{4} & \gval{1}{1} & \gval{54}{8} & \ourmax{99}{1} \\
                \midrule
                
                 & \texttt{task1} & \gval{75}{10} & \gval{83}{4} & \gval{85}{7} & \gval{18}{7} & \gval{96}{3} & \gval{94}{4} & \ourmax{98}{2} \\
                 & \texttt{task2} & \gval{59}{11} & \gval{66}{6} & \gval{85}{5} & \gval{9}{2} & \gval{81}{6} & \gval{82}{7} & \ourmax{92}{3} \\
                 & \texttt{task3} & \gval{39}{8} & \gval{72}{7} & \bmax{82}{10} & \gval{8}{4} & \gval{75}{7} & \gval{67}{8} & \ourmax{80}{12} \\
                 & \texttt{task4} & \gval{6}{3} & \gval{15}{5} & \gval{35}{6} & \gval{2}{2} & \gval{24}{6} & \gval{29}{6} & \ourmax{54}{8} \\
                 & \texttt{task5} & \gval{44}{9} & \gval{37}{6} & \gval{71}{4} & \gval{7}{3} & \gval{78}{6} & \bmax{82}{6} & \ourmax{82}{6} \\
                \multirow{-6}{*}{\texttt{cube-double}} & \texttt{agg. (5 tasks)} & \gval{44}{4} & \gval{55}{2} & \gval{72}{4} & \gval{9}{2} & \gval{71}{2} & \gval{71}{3} & \ourmax{81}{3} \\
                \midrule
                
                 & \texttt{task1} & \gval{26}{20} & \gval{2}{3} & \gval{70}{18} & \gval{51}{22} & \gval{56}{22} & \gval{26}{11} & \ourmax{87}{8} \\
                 & \texttt{task2} & \gval{1}{1} & \gval{0}{0} & \gval{16}{7} & \gval{16}{3} & \gval{15}{11} & \gval{14}{9} & \ourmax{42}{9} \\
                 & \texttt{task3} & \gval{8}{6} & \gval{0}{0} & \gval{40}{8} & \gval{28}{8} & \gval{21}{11} & \gval{9}{6} & \ourmax{45}{9} \\
                 & \texttt{task4} & \gval{2}{3} & \gval{0}{0} & \gval{15}{5} & \gval{1}{1} & \gval{4}{3} & \gval{3}{3} & \ourmax{30}{13} \\
                 & \texttt{task5} & \gval{1}{1} & \gval{0}{0} & \gval{31}{12} & \gval{23}{8} & \gval{1}{1} & \gval{2}{4} & \ourmax{48}{17} \\
                \multirow{-6}{*}{\texttt{cube-triple-10M}} & \texttt{agg. (5 tasks)} & \gval{7}{5} & \gval{0}{1} & \gval{34}{6} & \gval{24}{7} & \gval{19}{6} & \gval{11}{4} & \ourmax{50}{5} \\
                \midrule
                
                 & \texttt{task1} & \gval{36}{25} & \gval{7}{5} & \gval{36}{12} & \gval{15}{11} & \bmax{70}{10} & \gval{45}{19} & \ourmax{66}{16} \\
                 & \texttt{task2} & \gval{4}{5} & \gval{0}{1} & \gval{5}{3} & \gval{5}{7} & \gval{2}{4} & \gval{0}{0} & \ourmax{18}{11} \\
                 & \texttt{task3} & \gval{3}{3} & \gval{0}{0} & \gval{2}{2} & \gval{6}{8} & \bmax{16}{11} & \gval{2}{4} & \ourmax{11}{7} \\
                 & \texttt{task4} & \gval{0}{0} & \gval{0}{0} & \gval{0}{1} & \gval{0}{0} & \bmax{2}{2} & \gval{0}{0} & \ourmax{3}{2} \\
                 & \texttt{task5} & \gval{0}{0} & \gval{0}{0} & \gval{1}{1} & \bmax{2}{3} & \gval{0}{0} & \gval{0}{0} & \gval{0}{0} \\
                \multirow{-6}{*}{\texttt{cube-quadruple-100M}} & \texttt{agg. (5 tasks)} & \gval{9}{5} & \gval{1}{1} & \gval{9}{3} & \gval{6}{3} & \bmax{18}{3} & \gval{9}{3} & \ourmax{19}{5} \\
                \midrule
                
                \texttt{all} & \texttt{agg. (50 tasks)} & \gavg{28} & \gavg{23} & \gavg{46} & \gavg{35} & \gavg{35} & \gavg{45} & \ouravg{68} \\
                
                \bottomrule
                \end{tabular}
            }
        \end{minipage}
    }
\end{table}
\begin{figure}[p]
    \thispagestyle{empty}
    \vspace*{-2.0cm}
    \centering
    \setlength{\abovecaptionskip}{4pt}
    \setlength{\belowcaptionskip}{0pt}
    \makebox[\linewidth][c]{%
        \includegraphics[width=1.35\linewidth]{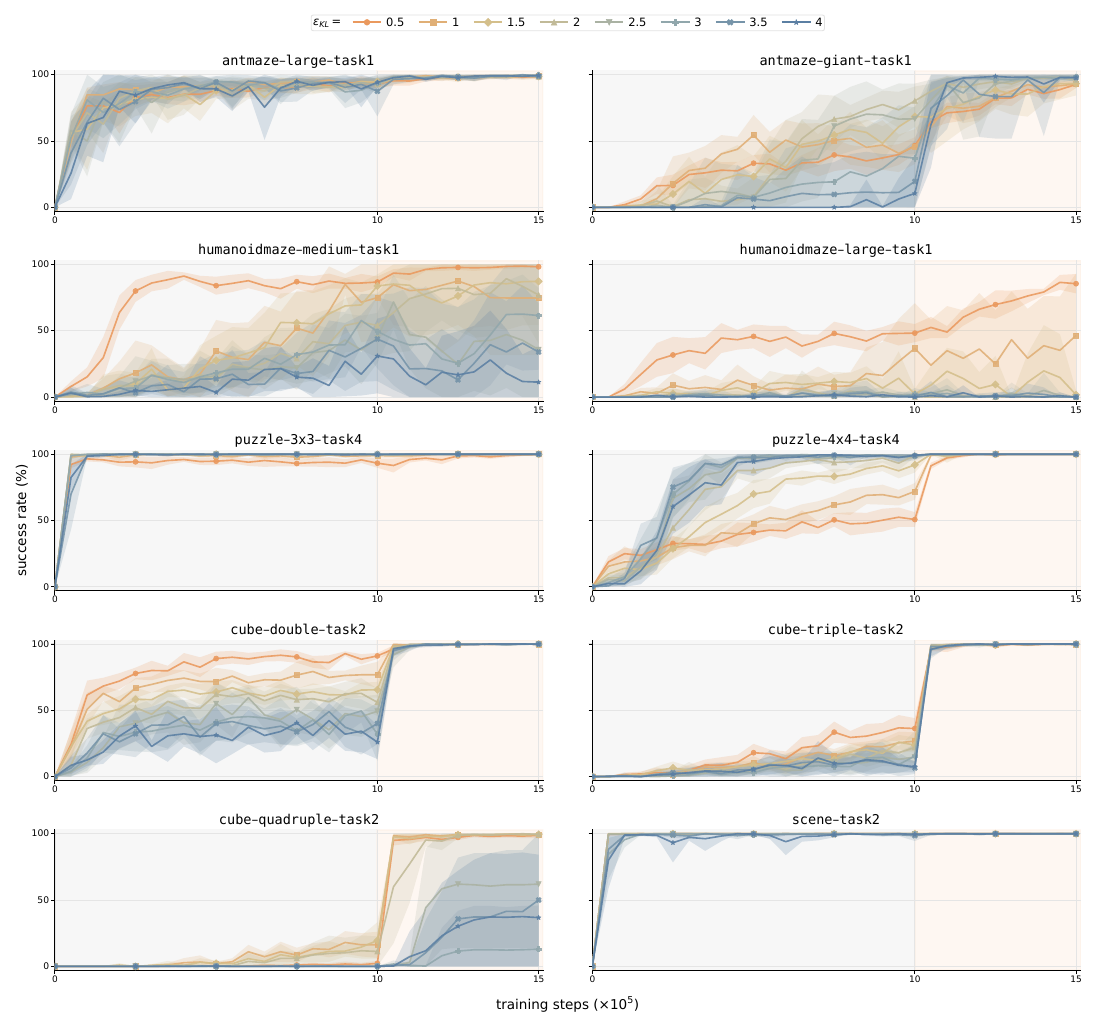}%
    }
    \caption{\textbf{Hyperparameter $\varepsilon_{\mathrm{KL}}$ sweep on OGBench~\citep{ogbench_park2025} (8 seeds).}
    We sweep $\varepsilon_{\mathrm{KL}} \in \{0.5,\,1.0,\,1.5,\,2.0,\,2.5,\,3.0,\,3.5,\,4.0\}$
    on the default tuning task of each domain, with all other hyperparameters fixed
    to the main-comparison setting. Each panel reports success rate over training
    steps, with the offline-to-online transition at $10^{6}$ steps. Shaded regions
    denote $\pm 1$ standard deviation across seeds; evaluated tasks are listed in 
    the panel titles. Two patterns emerge. Tight budgets are best on 
    \texttt{humanoidmaze-medium}, \texttt{humanoidmaze-large}, \texttt{cube-double}, 
    \texttt{cube-triple}, and \texttt{cube-quadruple}, while larger budgets 
    monotonically improve performance on \texttt{puzzle-4x4}, consistent with 
    its notably larger state space. Smaller $\varepsilon_{\mathrm{KL}}$ also 
    produces slower online adaptation across the sweep, consistent with the dual 
    update tightening the trust region.}
    \label{fig:ogbench_klb_sensitivity_grid}
\end{figure}
\begin{figure}[p]
    \thispagestyle{empty}
    \vspace*{-2.0cm}
    \centering
    \setlength{\abovecaptionskip}{4pt}
    \setlength{\belowcaptionskip}{0pt}
    \makebox[\linewidth][c]{%
        \includegraphics[width=1.35\linewidth]{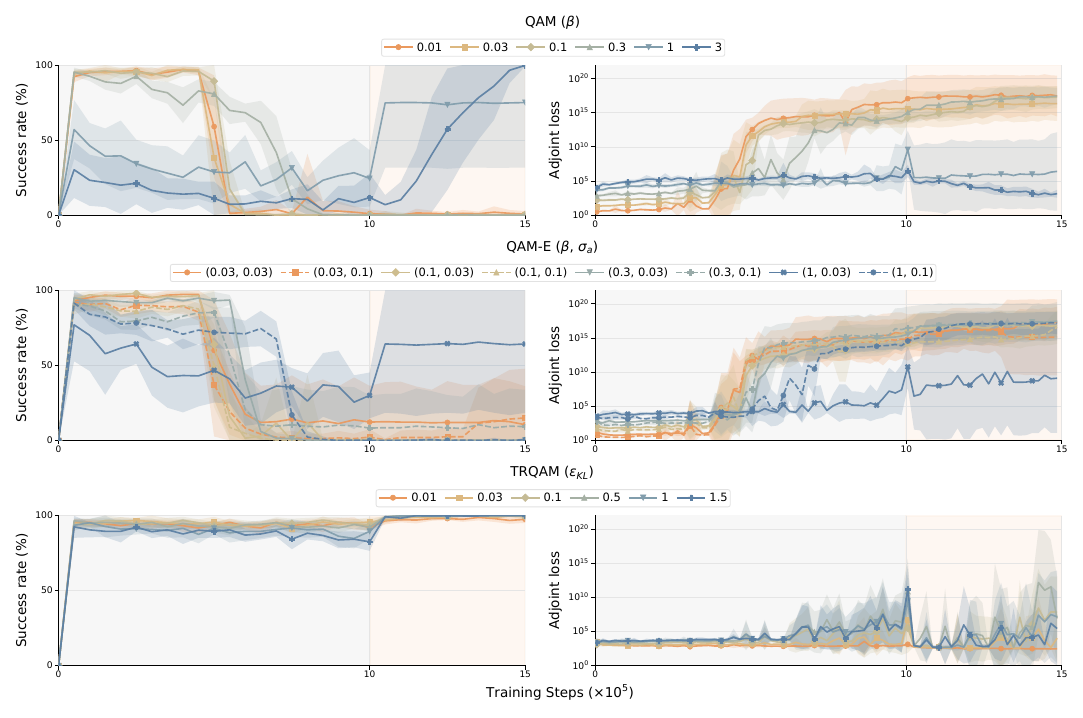}%
    }
    \caption{\textbf{Hyperparameter sweep on \texttt{Robomimic-lift}
(8 seeds).} Sweep ranges for QAM, QAM-E, and TRQAM follow \cref{tab:hparam-range}. Left
column reports success rate; right column reports adjoint-matching loss
on log scale. Fixed-temperature methods (QAM, QAM-E) exhibit adjoint
loss growth of 10 to 20 orders of magnitude across most settings, with
corresponding success rate collapse. TRQAM remains stable across all six
budgets. Shaded regions denote $\pm$1 standard deviation across seeds.}
    \label{fig:full_robomimic_lift_curves_grid_4by2}
\end{figure}
\begin{figure}[p]
    \thispagestyle{empty}
    \vspace*{-0.5cm}
    \centering
    \setlength{\abovecaptionskip}{4pt}
    \setlength{\belowcaptionskip}{0pt}
    \makebox[\linewidth][c]{%
        \includegraphics[width=1.35\linewidth]{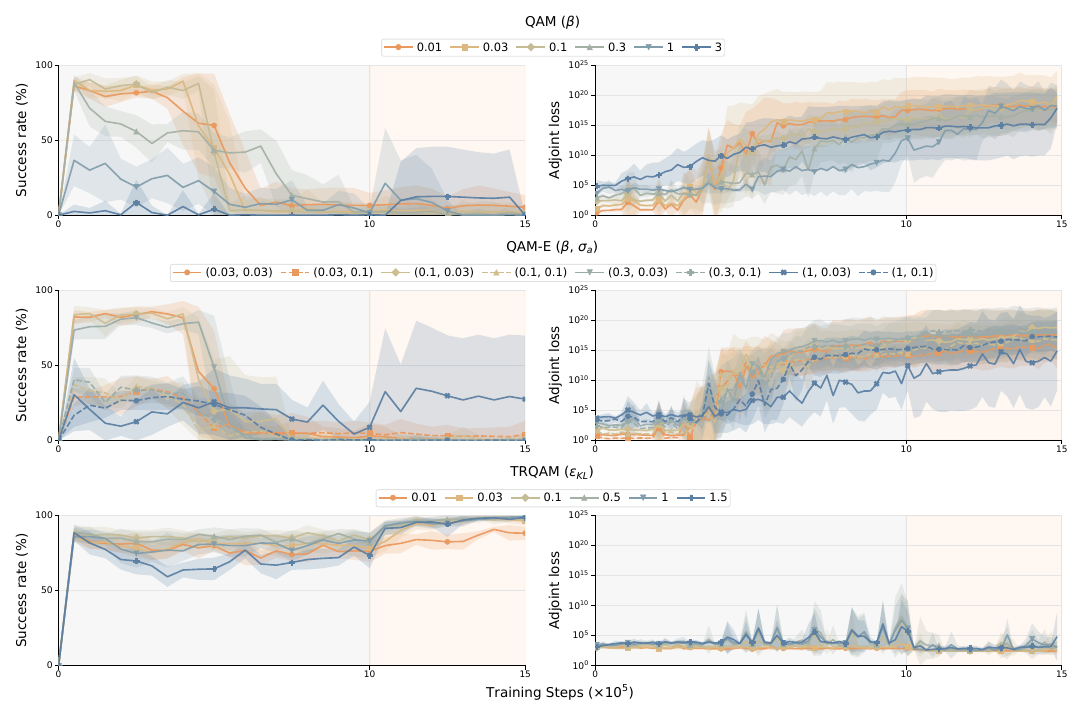}%
    }
    \caption{\textbf{Hyperparameter sweep on \texttt{Robomimic-can}
    (8 seeds).} Same setup as
    \cref{fig:full_robomimic_lift_curves_grid_4by2}. Fixed-temperature
    collapse is again hyperparameter-wide: QAM and QAM-E exhibit adjoint
    loss growth of 10 to 25 orders of magnitude across most settings with
    success rate collapse, while TRQAM remains stable across all six
    budgets. Shaded regions denote $\pm$1 standard deviation across seeds.}
    \label{fig:full_robomimic_can_curves_grid_4by2}
    \vspace*{-2.0cm}
\end{figure}
\begin{figure}[p]
    \thispagestyle{empty}
    \vspace*{-2.0cm}
    \centering
    \setlength{\abovecaptionskip}{4pt}
    \setlength{\belowcaptionskip}{0pt}
    \makebox[\linewidth][c]{%
        \includegraphics[width=1.35\linewidth]{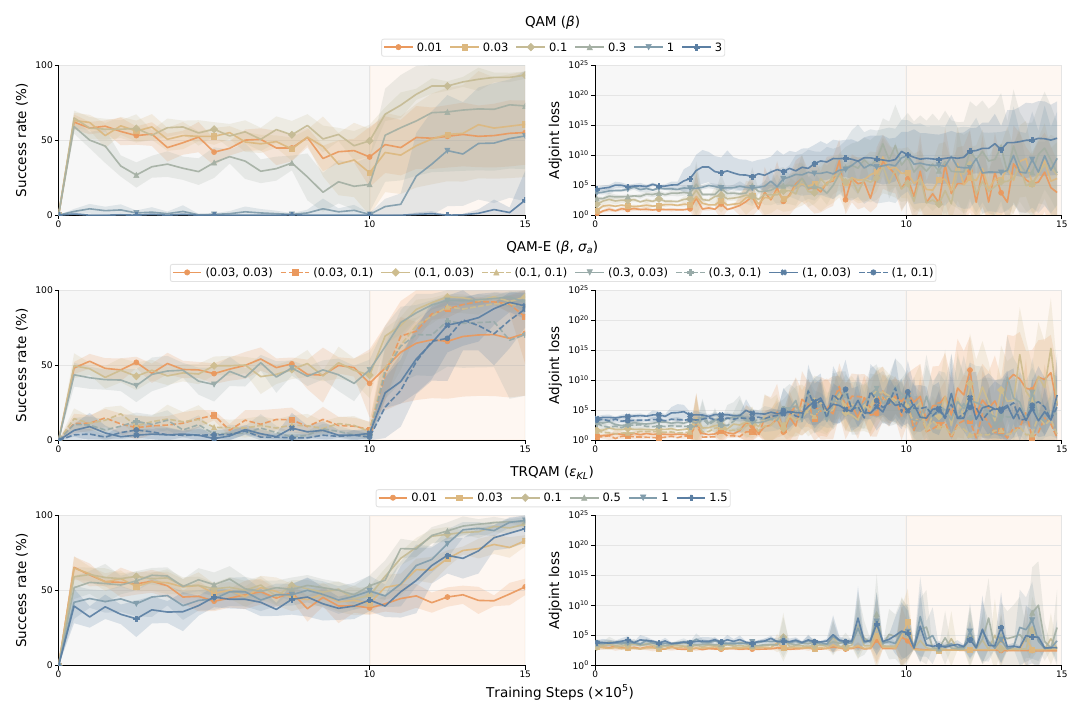}%
    }
    \caption{\textbf{Hyperparameter sweep on \texttt{Robomimic-square}
    (8 seeds).} Same setup as
    \cref{fig:full_robomimic_lift_curves_grid_4by2}. While the adjoint 
    loss explosion is less severe than on \texttt{lift} and \texttt{can}, 
    fixed-temperature variants still exhibit signs of instability across 
    the sweep, with adjoint loss steadily growing during offline training. 
    TRQAM, in contrast, maintains a bounded adjoint loss and matches or 
    exceeds the fixed-temperature variants across the sweep. Shaded 
    regions denote $\pm$1 standard deviation across seeds.}
    \label{fig:full_robomimic_square_curves_grid_4by2}
\end{figure}
\begin{figure}[p]
    \thispagestyle{empty}
    \vspace*{-0.5cm}
    \centering
    \setlength{\abovecaptionskip}{4pt}
    \setlength{\belowcaptionskip}{0pt}
    \makebox[\linewidth][c]{%
        \includegraphics[width=1.35\linewidth]{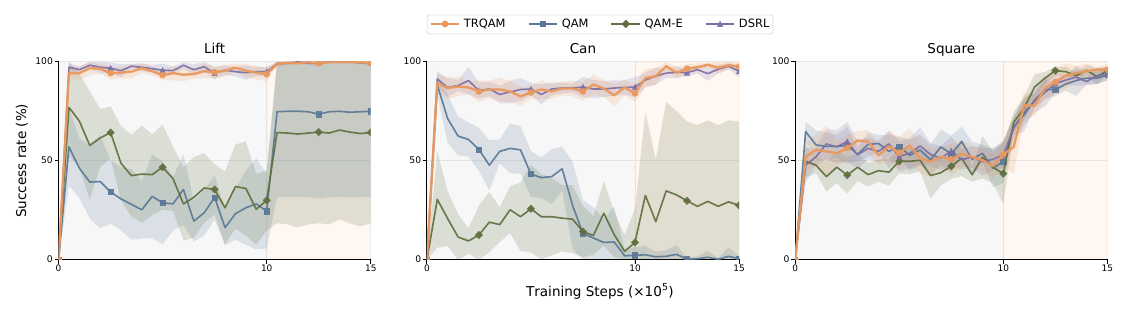}%
    }
    \caption{\textbf{Best per-method hyperparameters on Robomimic across all
    three tasks (8 seeds).} For each method (TRQAM, QAM, QAM-E, DSRL) we
    select the configuration with the strongest overall offline-to-online
    learning curve from the sweep in \cref{tab:hparam-range}. Shaded regions
    denote $\pm$1 standard deviation across seeds.}
    \label{fig:full_robomimic_best_hp_compare}
\end{figure}
\begin{figure}[p]
    \thispagestyle{empty}
    \vspace*{-1.0cm}
    \centering
    \setlength{\abovecaptionskip}{4pt}
    \setlength{\belowcaptionskip}{0pt}
    \makebox[\linewidth][c]{%
        \includegraphics[width=1.35\linewidth]{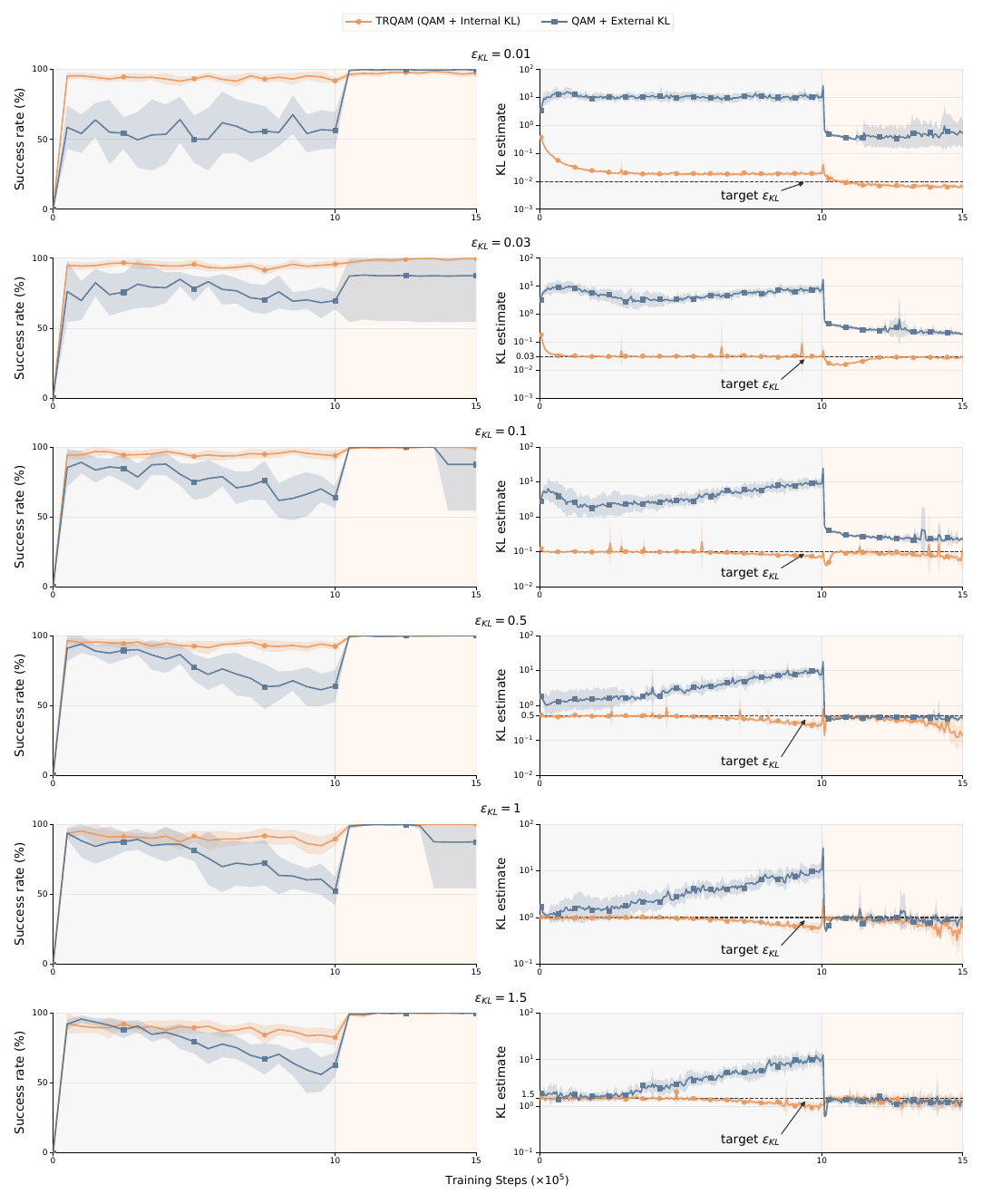}%
    }
    \caption{\textbf{Internal vs.\ external KL regularization on
    \texttt{Robomimic-lift} across all budgets (8 seeds).} Each row plots
    one target KL budget
    $\varepsilon_{\mathrm{KL}} \in \{0.01, 0.03, 0.1, 0.5, 1.0, 1.5\}$.
    \emph{Left:} success rate over training steps; \emph{right:} realized
    path-space KL with target shown as a dashed line. TRQAM (orange) tracks
    each prescribed budget tightly across offline and online training,
    whereas QAM with external KL regularization (blue) lets the realized KL
    drift well over it, with corresponding success rate degradation. Shaded regions
    denote $\pm$1 standard deviation across seeds.}
    \label{fig:full_robomimic_lift_kl_compare_grid}
\end{figure}
\begin{figure}[p]
    \thispagestyle{empty}
    \vspace*{-1.0cm}
    \centering
    \setlength{\abovecaptionskip}{4pt}
    \setlength{\belowcaptionskip}{0pt}
    \makebox[\linewidth][c]{%
        \includegraphics[width=1.35\linewidth]{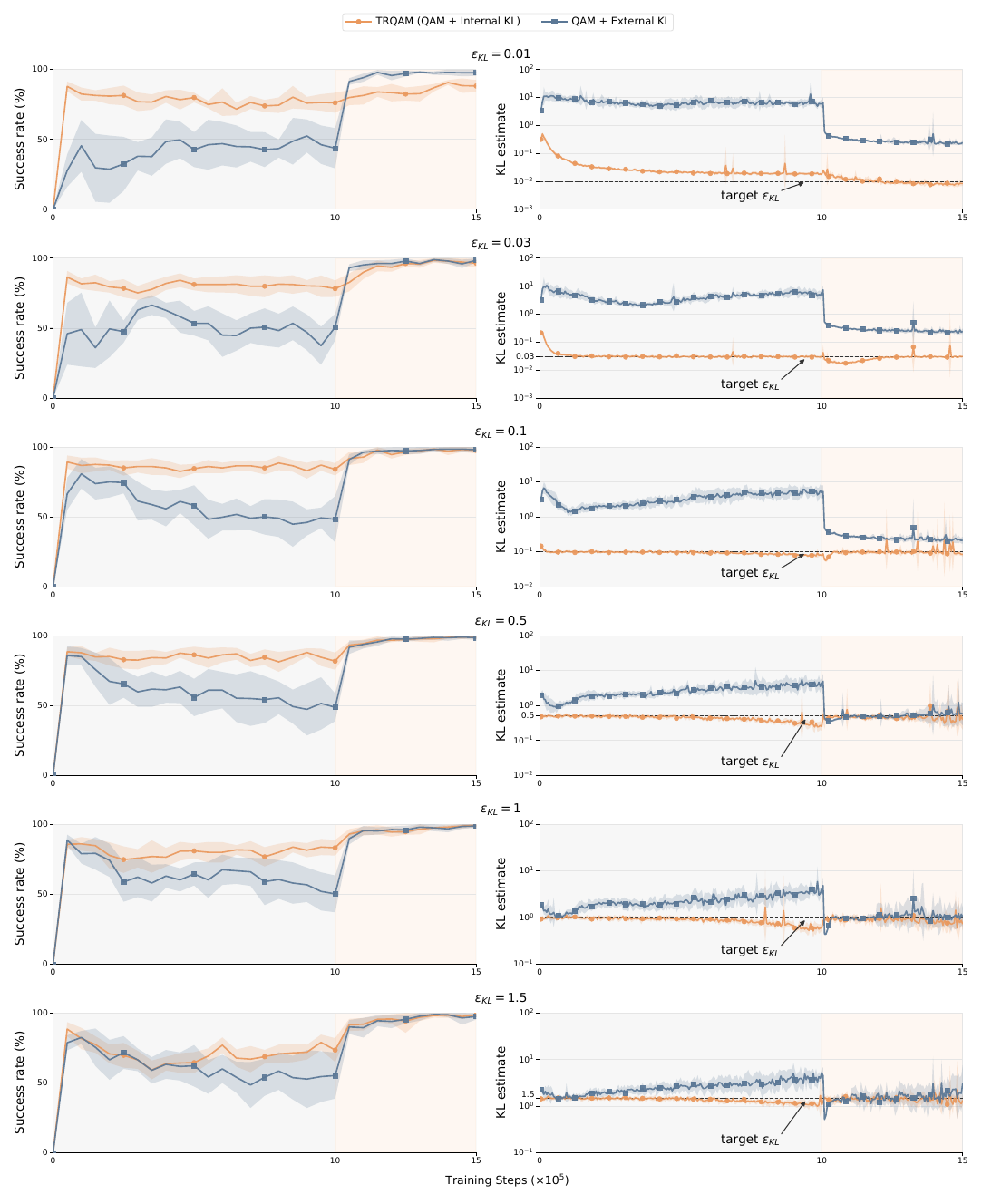}%
    }
    \caption{\textbf{Internal vs.\ external KL regularization on
    \texttt{Robomimic-can} across all budgets (8 seeds).} Each row plots
    one target KL budget
    $\varepsilon_{\mathrm{KL}} \in \{0.01, 0.03, 0.1, 0.5, 1.0, 1.5\}$.
    \emph{Left:} success rate over training steps; \emph{right:} realized
    path-space KL with target shown as a dashed line. TRQAM (orange) tracks
    each prescribed budget tightly across offline and online training,
    whereas QAM with external KL regularization (blue) lets the realized KL
    drift well over it, with corresponding success rate degradation. Shaded regions
    denote $\pm$1 standard deviation across seeds.}
    \label{fig:full_robomimic_can_kl_compare_grid}
\end{figure}
\begin{figure}[p]
    \thispagestyle{empty}
    \vspace*{-1.0cm}
    \centering
    \setlength{\abovecaptionskip}{4pt}
    \setlength{\belowcaptionskip}{0pt}
    \makebox[\linewidth][c]{%
        \includegraphics[width=1.35\linewidth]{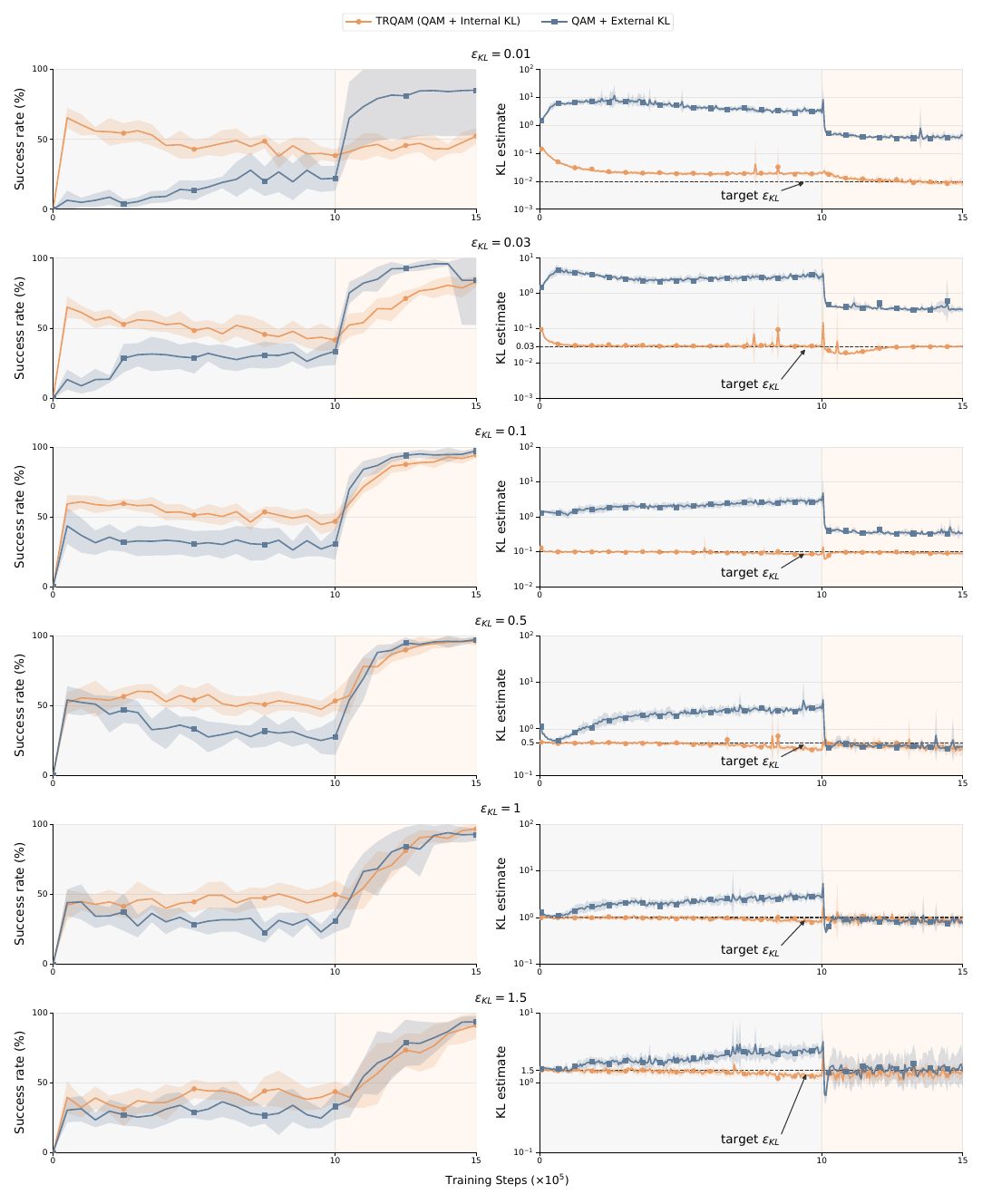}%
    }
    \caption{\textbf{Internal vs.\ external KL regularization on
    \texttt{Robomimic-square} across all budgets (8 seeds).} Each row plots
    one target KL budget $\varepsilon_{\mathrm{KL}} \in \{0.01, 0.03, 0.1, 0.5, 1.0, 1.5\}$.
    \emph{Left:} success rate over training steps; \emph{right:} realized
    path-space KL with target shown as a dashed line. TRQAM (orange) tracks
    each prescribed budget tightly across offline and online training,
    whereas QAM with external KL regularization (blue) lets the realized KL
    drift well over it, with corresponding success rate degradation. Shaded regions
    denote $\pm$1 standard deviation across seeds.}
    \label{fig:full_robomimic_square_kl_compare_grid}
\end{figure}
\begin{figure}[p]
    \thispagestyle{empty}
    \vspace*{-2.0cm}
    \centering
    \setlength{\abovecaptionskip}{4pt}
    \setlength{\belowcaptionskip}{0pt}
    \makebox[\linewidth][c]{%
        \includegraphics[width=1.5\linewidth]{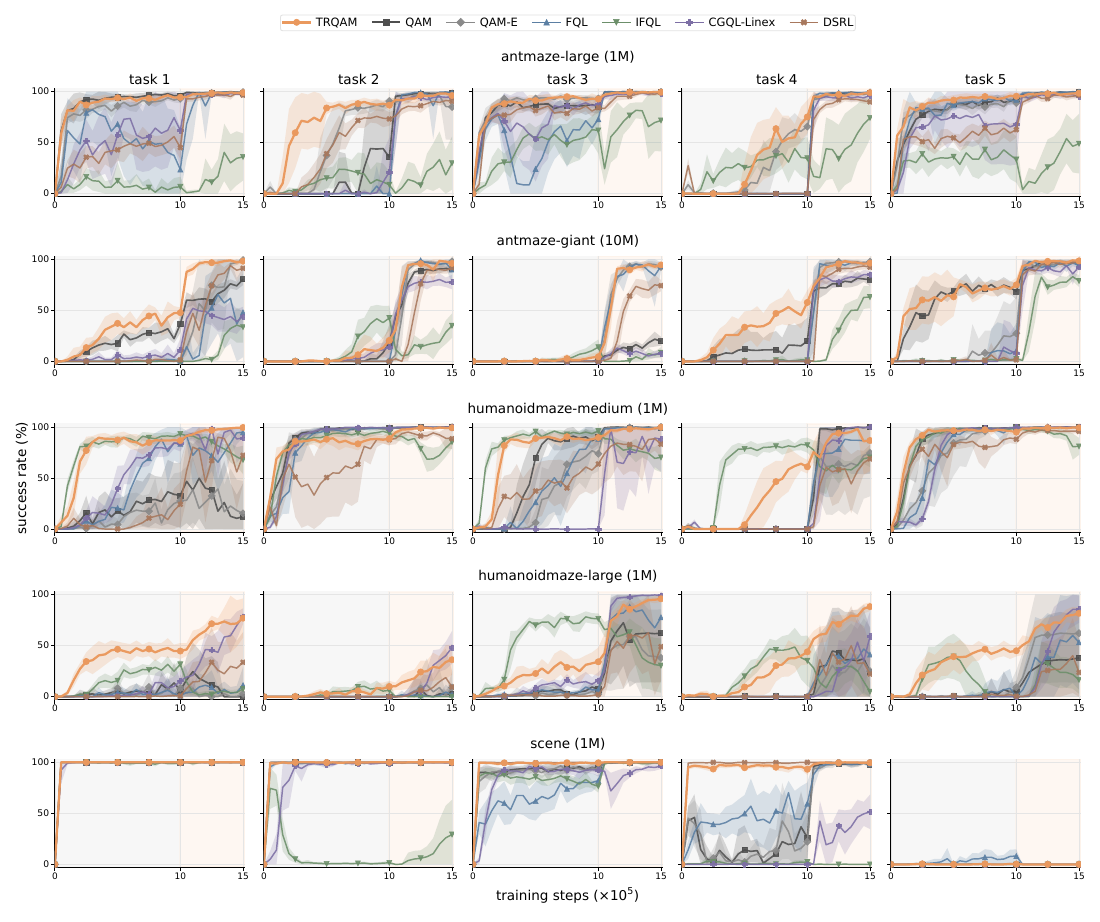}%
    }
    \caption{\textbf{Per-task offline-to-online learning curves on
    OGBench~\citep{ogbench_park2025} (8 seeds)} Suites:
    \texttt{antmaze-large}, \texttt{antmaze-giant},
    \texttt{humanoidmaze-medium}, \texttt{humanoidmaze-large},
    \texttt{scene}. Each panel reports success rate over training steps
    for all seven methods (TRQAM, QAM, QAM-E, FQL, IFQL, CGQL-Linex, DSRL);
    the offline-to-online transition occurs at $10^6$ steps. Shaded regions
    denote $\pm$1 standard deviation across seeds.}
    \label{fig:per_task_curves_1}
\end{figure}
\begin{figure}[p]
    \thispagestyle{empty}
    \vspace*{-2.0cm}
    \centering
    \setlength{\abovecaptionskip}{4pt}
    \setlength{\belowcaptionskip}{0pt}
    \makebox[\linewidth][c]{%
        \includegraphics[width=1.5\linewidth]{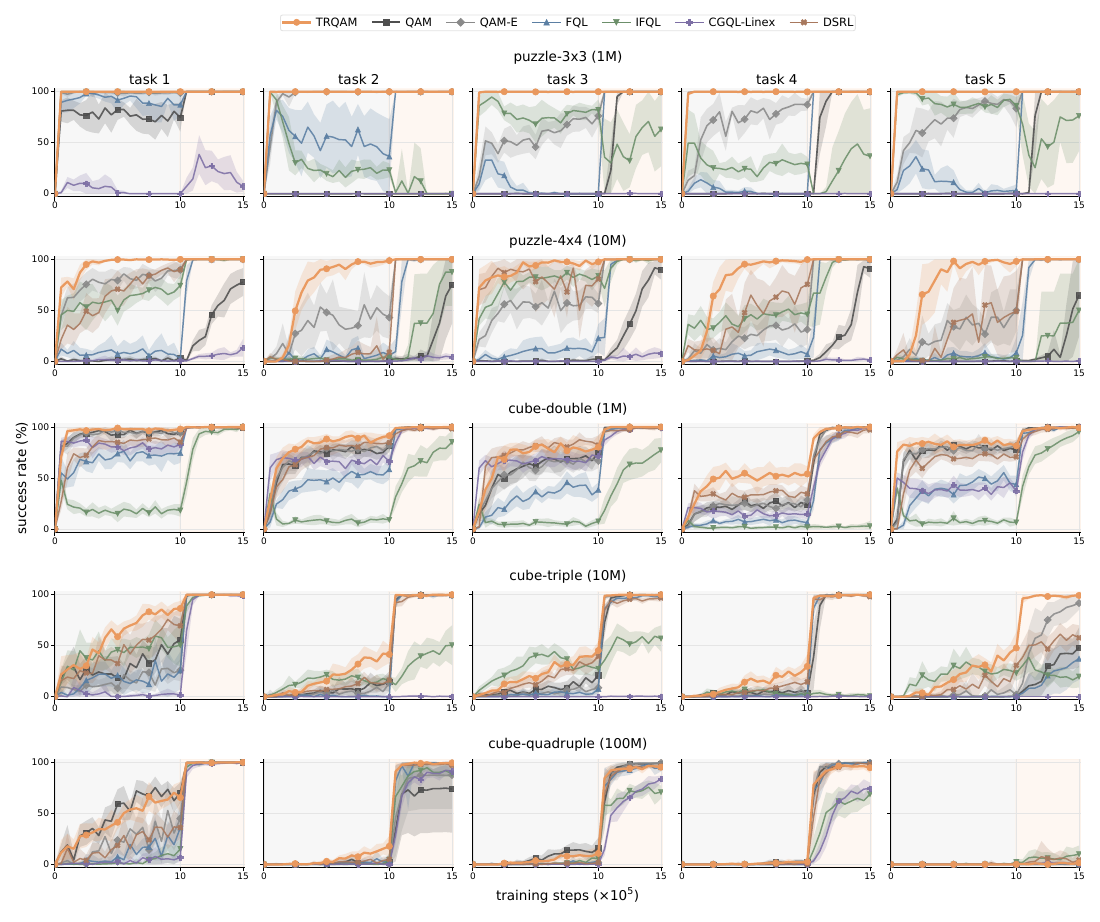}%
    }
    \caption{\textbf{Per-task offline-to-online learning curves on
    OGBench~\citep{ogbench_park2025} (8 seeds)} Suites:
    \texttt{puzzle-3x3}, \texttt{puzzle-4x4},
    \texttt{cube-double}, \texttt{cube-triple},
    \texttt{cube-quadruple}. Each panel reports success rate over
    training steps for all seven methods (TRQAM, QAM, QAM-E, FQL, IFQL,
    CGQL-Linex, DSRL); the offline-to-online transition occurs at $10^6$
    steps. Shaded regions denote $\pm$1 standard deviation across seeds.}
    \label{fig:per_task_curves_2}
\end{figure}
\clearpage
\end{document}